\documentclass[12pt]{colt2017}
\usepackage{algorithm,algorithmic}
\usepackage{amsmath, amssymb, multirow, paralist}
\newif\ifpdf
\ifx\pdfoutput\undefined
   \pdffalse
\else
   \pdfoutput=1
   \pdftrue
\fi
\ifpdf
   \usepackage{graphicx}
   \usepackage{epstopdf}
   \DeclareGraphicsRule{.eps}{pdf}{.pdf}{`epstopdf #1}
   \pdfcompresslevel=9
\else
   \usepackage{graphicx}
\fi

\usepackage{enumitem}
\usepackage[utf8]{inputenc}

\newtheorem{thm}{Theorem}

 \newtheorem{assumption}{Assumption}

\def \R {\mathbb{R}}

\def \w {\mathbf{w}}
\def \v {\mathbf{v}}

\def \x {\mathbf{x}}

\def \x {\mathbf{x}}

\def \diag {\mathrm{diag}}
\def \dist {\mathrm{dist}}
\def \dom {\mbox{dom}}

\def \z {\mathbf{z}}

\def \y {\mathbf{y}}

\def \N {\mathbb{N}}

\title[Iterative Shrinkage-Thresholding Algorithm]{Efficient Rank Minimization via Solving Non-convex Penalties by Iterative Shrinkage-Thresholding Algorithm}

\coltauthor{\Name{Zaiyi Chen} \Email{czy6516@mail.ustc.edu.cn}\\
 \addr University of Science and Technology of China\\
 }
\begin{document}
\maketitle

\begin{abstract}
Rank minimization (RM) is a wildly investigated task of finding solutions by exploiting low-rank structure of parameter matrices.
Recently, solving RM problem by leveraging non-convex relaxations
has
received significant attention.
It has been demonstrated by some theoretical and experimental work that non-convex relaxation, e.g. Truncated Nuclear Norm Regularization (TNNR) \citep{hu2013fast} and Reweighted Nuclear Norm Regularization (RNNR) \citep{zhong2015nonconvex}, can provide a better approximation of original problems than
convex relaxations.
However, designing an efficient algorithm with theoretical guarantee remains a challenging problem.
In this paper, we propose a simple but efficient proximal-type method, namely Iterative Shrinkage-Thresholding Algorithm(ISTA), with concrete analysis
to solve rank minimization problems with both non-convex weighted and reweighted nuclear norm as low-rank regularizers.
Theoretically, the proposed method
could converge to the critical point under very mild assumptions with the rate in the order of $O(1/T)$.
Moreover, the experimental results on both synthetic data and real world data sets show that proposed algorithm outperforms state-of-arts in both efficiency and accuracy.
\end{abstract}

\section{Introduction}
In recent years, rank minimization technic has been successfully employed in various data mining and machine learning tasks.
For example, In matrix completion \citep{candes2009exact} we assume that partially observed matrix is low-rank; in image denoising \citep{candes2011robust}, backgrounds of videos and faces under varying illumination are regarded as falling into a low rank subspace.
In this paper, we consider the unconstrained objective function, which may be more effective for noisy data.
A unconstrained RM problem
can be formulated as
\begin{equation}\label{origin prob}
  \min_{X\in\R^{m\times n}} \; f(X)+\lambda \cdot \mathrm{rank}(X)
\end{equation}
where the $f$ measures the empirical risk and rank function can be viewed as a regularizer.
It has been proved that solving (\ref{origin prob}) is NP-hard
due to the noncontinuous and nonconvex nature of the rank function.
In order to
tackle this NP-hard problem,
common approaches usually relax the rank function to various regularizers, which can be categorized into convex and non-convex relaxations.

A well known convex relaxation of rank function is nuclear norm $\|\cdot\|_\ast$. It has been shown that nuclear norm is the convex envelope of rank function over the unit ball of the spectral norm in different works~\citep{fazel2001rank,recht2010guaranteed}. In other words, the nuclear norm is the tightest convex approximation of the rank function.
\cite{candes2009exact} have shown that low rank solutions
can be recovered perfectly via minimizing nuclear norm under incoherence assumption in matrix completion problems, which requires that the absolute value of both left and right singular vectors are no greater than some $\mu\in (0,1)$.
Due to the convexity of the nuclear norm, there are many sophisticated algorithms off the shelf.
These algorithms can achieve global optimal solutions efficiently with theoretical guarantees, examples include but not limited to SVT \citep{cai2010singular} and APGL \citep{toh2010APGL} for constrained and unconstrained objectives objectives respectively.
However, the nuclear norm suffers from the major limitation that all singular values are simultaneously minimized,
which implies that large singular values are penalized more heavily than small ones.
More importantly, the underlying matrix may not satisfy the incoherence property in real applications, and the data may be grossly corrupted. In these circumstances,
methods based on nuclear norms usually fail to find a good solution.
Even worse, the resulting global optimal solution may deviate significantly from the ground truth.

A straightforward idea is to use non-convex relaxations to overcome the unbalanced penalization of different singular values.
Essentially, they will penalize larger singular values less and shrink smaller ones,
since the large singular values are dominant in preserving major information of a matrix.
A representative non-convex relaxation 
is the truncated nuclear norm~\citep{hu2013fast}, which is defined as the
sum of the smallest $r$ singular values.
By minimizing only the smallest $r$ singular values, one can avoid penalizing large singular values.
In real world applications, non-convex relaxations usually outperform convex relaxations and could be more robust to noise~\citep{hu2013fast,gu2014weighted}.
The approach of truncated nuclear norm
could achieve more accurate solutions than nuclear norm methods practically. To solve the truncated nuclear norm, a two-layer loop algorithm was proposed that
implies substantial computational difficulty due to the hardness of non-convex objectives. Besides, the number of singular values to be penalized is hard to determine.
Inspired by \citep{candes2008enhancing}, which uses the weighted $\ell_1$ norm to enhance sparsity, \cite{zhong2015nonconvex} proposed a reweighted nuclear norm framework to handle these problems, but only subsequence convergence analysis was proposed, which is an improvable conclusion in the view of optimization.
Stronger results, such as convergence of sequence and faster convergence rate, are required to guarantee the effectiveness of proposed algorithm.

On the other hand, real world data is often obtained from multiple domains rather than a single domain.
For example, in recommendation task a user that rates ``romance" higher than ``horror" in the \emph{movie} domain may have the same preference in the \emph{book} domain.
Intuitively, the appropriately exploited correlations of different domains can be helpful to model the objects better
and improve the quality of prediction.
Existing work in multi-task learning has already shown that the underlying consistency among different domains can reasonably improve the performance of learning models~\citep{singh2008relational,chaudhurietal2009,whiteetal2012}.
However, non-convex regularizers are rarely used in multi-view rank minimization tasks due to the difficulty of optimizing multiple variables simultaneously when regularizers are non-convex.

Fortunately, many recent theoretical works~\citep{attouch2010proximal,attouch2013convergence,bolte2014proximal} show that Kurdyka-\L{}ojasiewicz (KL) property~\citep{kurdyka1998gradients} is an effective tool for non-convex analysis, which also prove that proximal algorithm is applicable to the non-convex and non-smooth functions. KL property shows its strength from the fact that it covers a large class of functions including considerably large family of convex and non-convex functions especially in machine learning and data mining applications~\citep{li2016calculus,yang2016adaptive}. Due to the difficulty of optimizing matrices, there exist very few work using this very powerful tool in rank minimization problem.

Motivated by investigating the special structures of objective functions, we will show that the proposed framework, \textbf{Iterative Shrinkage-Thresholding Algorithm (ISTA)}, is capable to solve rank minimization problem whenever objectives satisfy KL property.
Compared with the state-of-the-art algorithms, ISTA is simpler and faster to converge to high-quality solution, i.e. the critical point, with solid theoretical guarantees.
In the following sections, we demonstrated that this algorithm is also applicable to reweighted nuclear norm regularizer and multi-variable non-convex objective functions in multi-view learning tasks.
Comprehensive experiments show that non-convex regularizers outperform convex relaxation and matrix factorization based methods when solving rank minimization problems in practice, and ISTA is stable and converge faster than existing algorithms in the view of iteration complexity.
\section{Primaries - Theoretical Guarantee of Singular value Regularizer with Non-descending Weights}\label{sec:primary}

In this section, some basic definitions and propositions will be reviewed at first, and then the assumptions of objective functions is given. After that, an elementary result is shown in Theorem~\ref{thm:main1}.

In general, unconstrained rank minimization problem can be formulated as
\begin{align}\label{eqn:problem}
\min_{X\in R^{m\times n}}F(X) = f(X)+g(X)
\end{align}
As mentioned in the previous section, the nuclear norm may not be a good approximation of the rank function due to the fact that it adds up all the singular values equally, which implies that large singular values are penalized more heavily
than small ones. Due to this fact, we consider regularization term $g$ in the following form to penalize larger singular values less
\begin{equation}\label{eqn:penalty}
g(X) = \w^\top\mathbb{\sigma}= \sum_{i = 1}^{n}w_i |\sigma_i|
\end{equation}
where $0<w_1\leq w_2 \leq \ldots \leq w_n$. It is easy to see that using non-descending weights could not only alleviate unreasonable penalization of large singular values, but also enhance the low-rankness of matrix, which lead to a nearly unbiased low-rank approximation by choosing appropriate weights. Unlike existing work that analyze special regularizer like (\ref{eqn:penalty})~\citep{gu2014weighted}, our analyses are based on the property of objective function, and can be extend to a considerable large family of regularizers in rank minimization problem easily.

Without losing generality, we assume $n<m$. The singular value decomposition(SVD) of $X_t$ is denoted by
\begin{align*}
X = U\diag(\sigma(X))V^\top
\end{align*}
where orthogonal matrices $U\in \R^{m\times m}$, $V\in \R^{n\times n}$ are consist of left and right singular vectors respectively, $\sigma(X)\in\R^{n}$ is the vector of singular values, $\sigma_1(X)\geq\sigma_2(X)\geq \ldots \geq \sigma_n(X)$, and $\diag(\sigma)\in \R^{m\times n}$ is a diagonal matrix such that $\diag(\sigma)_{i,i} = \sigma_{i}$ and $\diag(\sigma)_{i,j} = 0$ for $i\neq j$, whose row and column number adjust to the dimensions of left and right hand side matrices. The set of $(U,V)$ that satisfy the SVD is denoted by $O(U,V)$.

Due to the non-convexity of objective functions, we use the limiting differential in our analysis for correctness. The definitions of limiting differential is given as follows.
\begin{definition}\label{def:differential}
\emph{(limiting subdifferential)} Let $E$ be an Euclidian space and $f:E \rightarrow (-\infty, +\infty]$ is a proper lower semi-continuous function.
\begin{itemize}
\item[(i)] {
    The regular subdifferential of $f$ at $\x\in \mathrm{dom} \; f$, denoted by $\hat{\partial }f(\x)$, is the set of vector $\y$ such that
    \begin{align*}
    \liminf_{\z \rightarrow 0}\frac{f(\x+\z) - f(\x) - \langle \y, \z \rangle}{\|y\|} \geq 0.
    \end{align*}
    }
\item[(ii)] {
    The limiting subdifferential of $f$ at $\x\in \mathrm{dom} \; f$, denoted by $\partial f(\x)$ is defined as
    \begin{align*}
    \partial f(\x) = \big\{\y : \exists \x^k \rightarrow \x,\; f(\x^k) \rightarrow f(\x) \; \mathrm{and} \; \y^k \in \hat{\partial } f(\x^k) \rightarrow \y\; \mathrm{as}\; k \rightarrow \infty\big\}
    \end{align*}
    }
\item[(iii)] {
    The directional derivative at $\x\in \mathrm{dom}\; f$ in direction $d\in \R^d$, denoted by $f'(\x,d)$, is defined as
    \begin{align*}
    f'(\x;d) \triangleq \liminf_{\lambda\downarrow 0,\;d' \rightarrow d} \frac{f(\x+\lambda d') - f(\x)}{\lambda}
    \end{align*}
    }
\end{itemize}
\end{definition}
More details could be found in Chapter~8.B~\citep{rockafellar2009variational}.

The distance from any subset $S\subset
\mathbb{R}^{m\times n}$ to any point $X\in \mathbb{R}^{m\times n} $ is defined as
\begin{align*}
\mathrm{dist}(X,S)=\inf\{\|Y-X\|_F, Y\in S\}
\end{align*}
and denote $\Phi$ as the class of all concave and continuous functions $\varphi:[0,\delta) \rightarrow \R_+$, $\delta >0$, such that
\begin{itemize}
\item[(i)] {$\varphi(0) = 0$.}
\item[(ii)] {$\varphi$ is continuous differentiable on $(0,\delta)$, and continuous at 0.}
\item[(iii)] {$\varphi'(x)>0$ for all $x\in (0, \delta)$.}
\end{itemize}

As an important property in the following analysis, the definition
of the Kurdyka-\L{}ojasiewicz (KL)
property~\citep{kurdyka1998gradients} is summarized below.
\begin{definition}\label{def_KL_func}
\emph{(KL property)} Let $f:\mathbb{R}^n\rightarrow(-\infty,+\infty]$ be
proper and lower semi-continuous.
\item[(i)]{A function $f$ has the \emph{KL property} at
    $\bar{\mu}\in\mathrm{dom}\;\partial f:=\{u\in\mathbb{R}^n:
    \partial f(u)\neq\varnothing\} $ if there exist $\delta \in(0,+\infty]$,
    a neighborhood $N$ of $\bar{u}$ and a function $\varphi\in\Phi$, such
    that for all
    \begin{align*}
    u\in N\cap[f(\bar{u})<f(u)<f({u})+\eta],
    \end{align*}
    the following inequality holds
    \begin{align*}
    \varphi'(f(u)-f(\bar{u}))\mathrm{dist}(0,\partial f(u))\geq1.
    \end{align*}
    }
\item[(ii)]{If $f$ satisfies the \emph{KL property} at each point
    of $\mathrm{dom}\;\partial f$, then $f$ is called a \emph{
    KL function}.
    }
\end{definition}
If KL property is hold in the neighbourhood of critical points, one could notice that it builds the connection between the norm of gradient and objective gap to the critical points. This observation makes a major contribution to the convergence analysis in non-convex optimizations.
Another aspect of KL functions we would like to mention is that they are widespread in machine learning applications including both convex and non-convex cases. Following lemma implies that a certain family of functions satisfies KL property. In the next lemma, we can find a sufficient condition of KL property.
\begin{lemma}\label{lem:KL_semialgebraic}
\citep{bolte2007lojasiewicz} Let extended value function $f$ be a proper and lower semi-continuous function. If $f$ is semi-algebraic, then it satisfies the KL property at any point of $\dom\;f$.
\end{lemma}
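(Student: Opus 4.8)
The plan is to reduce the statement to the nonsmooth \emph{\L{}ojasiewicz gradient inequality} and then manufacture the desingularizing function $\varphi$ from the exponent appearing there. Fix $\bar{u}\in\dom\,\partial f$ and, after translating, assume $\bar{u}=0$ and $f(\bar{u})=0$. The entire content of the required inequality $\varphi'(f(u)-f(\bar u))\,\dist(0,\partial f(u))\ge 1$ is captured by a bound of the form $\dist(0,\partial f(u))\ge c\,(f(u))^{\theta}$ valid for $u$ in a neighborhood of $0$ with $0<f(u)<\eta$, where $\theta\in[0,1)$ and $c>0$. Indeed, given such a bound, set $\varphi(s)=\frac{1}{c(1-\theta)}\,s^{1-\theta}$; then $\varphi(0)=0$, $\varphi$ is concave and continuously differentiable on $(0,\delta)$ with $\varphi'(s)=\frac1c s^{-\theta}>0$, so $\varphi\in\Phi$, and $\varphi'(f(u))\,\dist(0,\partial f(u))\ge \frac1c (f(u))^{-\theta}\cdot c\,(f(u))^{\theta}=1$. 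Thus everything rests on proving the gradient inequality.

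To prove it, I would encode the minimal subgradient norm along level sets as a one-variable function and exploit the rigidity of semi-algebraic functions. Concretely, define, for a small $\epsilon>0$,
\begin{align*}
g(r)=\inf\{\dist(0,\partial f(u)):\|u\|\le\epsilon,\; f(u)=r\},\qquad r\in(0,\eta).
\end{align*}
The key structural fact is that $g$ is itself semi-algebraic: the graph of the limiting subdifferential $\{(u,y):y\in\partial f(u)\}$ of a semi-algebraic function is a semi-algebraic subset of $\R^n\times\R^n$ (the regular subdifferential is cut out by the semi-algebraic $\liminf$ condition of Definition~\ref{def:differential}, and passing to the limiting object preserves this), and $g$ is obtained from it by forming norms, intersecting with the semi-algebraic constraints $\|u\|\le\epsilon$ and $f(u)=r$, and projecting onto the $r$-axis while taking an infimum. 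By the Tarski--Seidenberg theorem these operations preserve semi-algebraicity, so $g$ is a semi-algebraic function of the single real variable $r$.

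The main obstacle, and the heart of the argument, is then the behavior of $g$ near $r=0$. Because a semi-algebraic function of one variable is, by the monotonicity theorem, eventually monotone on $(0,\eta)$ and admits a Puiseux expansion, either $g$ stays bounded away from $0$ near the origin (in which case the inequality holds trivially with $\theta=0$) or $g(r)\to 0$ and behaves asymptotically like a fractional power, giving $g(r)\ge c\,r^{\theta}$ for some rational $\theta\in(0,1)$ and $c>0$ on a punctured neighborhood of $0$; the restriction $\theta<1$ is what ensures $\varphi$ is bounded and concave. The delicate points here are ruling out that $g$ vanishes identically on an interval---which would force a whole curve of critical points with strictly increasing function value and can be excluded by a nonsmooth curve-selection argument---and checking that the infimum defining $g$ is a genuine semi-algebraic object. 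Once $g(r)\ge c\,r^{\theta}$ is established, the definition of $g$ yields, for every admissible $u$, $\dist(0,\partial f(u))\ge g(f(u))\ge c\,(f(u))^{\theta}$, which is exactly the gradient inequality required above, completing the proof.
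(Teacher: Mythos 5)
The paper does not prove this lemma at all: it is imported verbatim from \citep{bolte2007lojasiewicz}, so there is no in-paper argument to compare yours against. Judged on its own terms, your proposal follows the standard strategy of that reference (reduce to a nonsmooth \L{}ojasiewicz gradient inequality, show the level-set infimum $g(r)=\inf\{\dist(0,\partial f(u)):\|u\|\le\epsilon,\ f(u)=r\}$ is a one-variable semi-algebraic function via Tarski--Seidenberg, then read off a power-type lower bound), and the semi-algebraicity claims you make along the way are correct and standard.

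However, there is a genuine gap at the one step that carries the entire analytic content. The monotonicity theorem and Puiseux expansion only give $g(r)=c\,r^{\theta}+o(r^{\theta})$ for \emph{some} positive rational $\theta$; nothing in that structural argument forces $\theta<1$. You write that $\theta\in(0,1)$ and note that ``the restriction $\theta<1$ is what ensures $\varphi$ is bounded and concave,'' but you never establish it, and if $\theta\ge 1$ your candidate $\varphi(s)=\frac{1}{c(1-\theta)}s^{1-\theta}$ fails to lie in $\Phi$ (for $\theta=1$ the antiderivative is a logarithm, and for $\theta>1$ it is not even positive or increasing), so the KL inequality is not obtained. The missing ingredient is the talweg/curve-selection argument: choose a semi-algebraic curve $r\mapsto\gamma(r)$ in the level sets that nearly realizes the infimum defining $g$; a bounded semi-algebraic curve has finite length, and a subdifferential chain rule along $\gamma$ gives $1=\big|\tfrac{d}{dr}f(\gamma(r))\big|\le\|\gamma'(r)\|\,\dist(0,\partial f(\gamma(r)))$, whence $\int_0^{\eta}\frac{dr}{g(r)}\lesssim\mathrm{length}(\gamma)<\infty$. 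That integrability is exactly equivalent to $\theta<1$ (and, more generally, lets one take $\varphi(s)=\int_0^s\frac{dr}{g(r)}$ as the desingularizing function without invoking Puiseux at all). Relatedly, excluding the degenerate case where $g$ vanishes on an interval requires the nonsmooth Sard theorem for semi-algebraic functions (the set of critical values is finite), which you gesture at but do not supply. Without these two inputs the proof does not close.
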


The semi-algebraic function for Euclidean space is defined as follows.
\begin{definition}\label{def:semi_alge}
\emph{(Semi-algebraic sets and functions)}
\begin{itemize}
\item[(i)]{A subset $S$ of Euclidean space $E$ is a real semi-alggebraic
    set if there exists a finite number of polynomial functions $ \zeta_{ij},\zeta'_{ij}
    : E \rightarrow\mathbb{R}$ such that
    \begin{align*}S=\cup_{j=1}^p{\cap_{i=1}^q{\{u\in E, \zeta_{ij}(u)=0\; \mbox{and} \;\zeta '_{ij}(u)<0\}}}
    \end{align*}
  }
\item[(ii)]{
    A function $r: E\rightarrow(-\infty,+\infty]$ is called
    semi-algebraic if its graph
    \begin{align*}
    \{(u,\xi)\in E\times\R:r(u)=\xi\}
    \end{align*}
    is a semi-algebraic subset of $\mathbb{R}^{n+1}$.
    }
\end{itemize}
\end{definition}

The propositions of semi-algebraic sets (functions) has been summarized in the following proposition.
\begin{proposition}\label{proposition:semi_alge}
\emph{(examples of semi-algebraic functions )} Following class of functions
have KL property.
\begin{enumerate}[label=(\roman*),ref=\roman*]
\item\label{proposition:semi_alge1}{
    Real polynomial functions.
    }
\item\label{proposition:semi_alge2}{
    Finite sums and product of semi-algebraic functions.
    }
\item\label{proposition:semi_alge3}{
    composition of semi-algebraic functions.
    }
\item\label{proposition:semi_alge4}{
    At last, $L_p$ norm is semi-algebraic whenever
    p is rational.
    }
\end{enumerate}
\end{proposition}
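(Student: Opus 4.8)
The plan is to reduce every claim to the single implication provided by Lemma~\ref{lem:KL_semialgebraic}: a proper lower semi-continuous semi-algebraic function has the KL property. Thus it suffices to verify that each listed class consists of, or produces, semi-algebraic functions, after which the KL conclusion is automatic (the functions in question are finite-valued and continuous, hence proper and lower semi-continuous, so the hypothesis of the lemma is met). For item~(\ref{proposition:semi_alge1}) I would argue directly from Definition~\ref{def:semi_alge}: given a real polynomial $p$, its graph $\{(u,\xi): p(u)-\xi = 0\}$ is the zero set of a single polynomial on $E\times\R$, hence a semi-algebraic set, so $p$ is a semi-algebraic function.

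The closure properties in items~(\ref{proposition:semi_alge2}) and~(\ref{proposition:semi_alge3}) are where the real work lies, and the key tool is the Tarski--Seidenberg theorem, which guarantees that the image of a semi-algebraic set under a coordinate projection is again semi-algebraic. For a sum $f+g$ of semi-algebraic functions on a common domain I would write its graph as the projection onto the $(u,\xi)$-coordinates of the set $\{(u,\xi,\xi_1,\xi_2): f(u)=\xi_1,\; g(u)=\xi_2,\; \xi=\xi_1+\xi_2\}$, which is semi-algebraic because it is a finite intersection built from the graphs of $f$ and $g$ together with one polynomial equation; finite products and finite iterations are handled identically by replacing $\xi=\xi_1+\xi_2$ with $\xi=\xi_1\xi_2$. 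Composition in item~(\ref{proposition:semi_alge3}) is analogous: the graph of $f\circ g$ is the projection of $\{(u,\xi,v): g(u)=v,\; f(v)=\xi\}$, eliminating the intermediate variable $v$.

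For item~(\ref{proposition:semi_alge4}) I would build the $L_p$ norm in stages, the crucial point being that rationality of $p=a/b$ turns every fractional power into a polynomial relation. First, each $|u_i|$ is semi-algebraic, since its graph is $\{(u_i,t): t\ge 0,\; t^2=u_i^2\}$. Next, $|u_i|^p$ is semi-algebraic: writing $s_i=|u_i|$ and $z_i=s_i^{a/b}$, the constraint $z_i\ge 0,\; z_i^{\,b}=s_i^{\,a}$ is polynomial, so $z_i$ is a semi-algebraic function of $u_i$. Summing over $i$ and applying item~(\ref{proposition:semi_alge2}) shows $w:=\sum_i |u_i|^p$ is semi-algebraic, and finally $\|u\|_p=w^{1/p}=w^{b/a}$ is captured by the polynomial relation $\xi\ge 0,\; \xi^{\,a}=w^{\,b}$, so the norm is semi-algebraic and inherits the KL property through the lemma.

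I expect the main obstacle to be the rigorous justification of the projection step rather than the bookkeeping: one must invoke Tarski--Seidenberg, equivalently quantifier elimination over real closed fields, to be certain that eliminating the auxiliary variables $\xi_1,\xi_2,v$ preserves semi-algebraicity. One must also be careful that the rational exponent in item~(\ref{proposition:semi_alge4}) is genuinely essential, since an irrational $p$ would leave a transcendental relation that no clearing of denominators can convert into a polynomial, and the argument would then break down.
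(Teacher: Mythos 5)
The paper states this proposition without proof: it is presented as a summary of standard facts from the semi-algebraic/KL literature (via Lemma~\ref{lem:KL_semialgebraic} and the cited references), so there is no in-paper argument to compare against line by line. Your reconstruction is correct and is exactly the standard route: reduce KL to semi-algebraicity through Lemma~\ref{lem:KL_semialgebraic}, exhibit graphs with auxiliary variables, and eliminate those variables by Tarski--Seidenberg. It is also consistent with the one place the paper does deploy this machinery, namely the proof of Lemma~\ref{lem:multi_KL_property}, where the graph of the singular-value penalty is written in an enlarged space $(X,U,V,\sigma,\xi)$ and then projected onto $(X,\xi)$ --- structurally the same move as your elimination of $\xi_1,\xi_2$ and $v$. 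Two small points worth tightening if you write this out in full: Definition~\ref{def:semi_alge} uses equalities and strict inequalities, so constraints like $t\geq 0$ in your construction of $|u_i|$ must be split as the union of $\{-t<0\}$ and $\{t=0\}$, which the finite-union clause of the definition permits; and for items~(\ref{proposition:semi_alge2}) and~(\ref{proposition:semi_alge3}) the KL conclusion requires the resulting function to be proper and lower semi-continuous, which is automatic for the finite-valued continuous examples at hand but is an extra hypothesis for general extended-valued semi-algebraic functions, so it should be stated rather than asserted from continuity. Neither point is a gap in the idea; both are bookkeeping.
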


To benefit from the KL property and make convergence analysis viable, we will make following assumptions of objective function $F$ in (\ref{eqn:problem}).
\begin{assumption}\label{assump:1}

\begin{enumerate}[label=(\roman*),ref=\roman*]
\item\label{assump:a1}{ $f(X)$ is a proper, lower bounded, $L$-smooth function, that is
\begin{align*}
\|\nabla f(X)-\nabla f(Y)\|\leq L\|X-Y\|,\; \mathrm{for\; all\;} X,Y\in\dom f,
\end{align*}
 and $f(X)$ is semi-algebraic with respect to $X\in\R^{m\times n}$.}
\item\label{assump:a2}{and $g(X)$ is a proper, lower semi-continuous and non-smooth relaxation of rank function, which can be seen as a regularizer with respect to the eigenvalues $\sigma(X)$.}
\item \label{assump:a3} {f(X) is a coercive function, e.g. $f(X) = \frac{1}{2} \| X- Y\|^2_F$}.
\end{enumerate}
\end{assumption}

Here are a few more words on Assumption~\ref{assump:1}.(\ref{assump:a3}).
To prove the convergence of the sequence $\{X_t\}_{t\in\N}$ that is generated by the proximal algorithm, the boundedness of $\{X_t\}_{t\in\N}$ is always required, which also make sure that the result is meaningful and will not go to infinity practically.
Assumption~\ref{assump:1}.(\ref{assump:a3}) automatically guarantees the boundedness of the generated sequence and is satisfied by a large family of empirical risk measurements, e.g. Bregman divergence with bounded $Y$.

To achieve the goal of this paper, we first need to show that the objective function has KL property if it satisfies certain conditions.
\begin{lemma}\label{lem:multi_KL_property}
Objective function (\ref{eqn:problem}) satisfies the KL property,  if $f$ satisfies Assumption~\ref{assump:1}.(\ref{assump:a1}) 
and penalty $g$ is defined as in (\ref{eqn:penalty}).
\end{lemma}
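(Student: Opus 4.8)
The plan is to reduce the statement to the sufficient condition recorded in Lemma \ref{lem:KL_semialgebraic}: it is enough to show that $F = f + g$ is proper, lower semi-continuous, and semi-algebraic, after which the KL property follows at once. Since $f$ is semi-algebraic by Assumption \ref{assump:1}.(\ref{assump:a1}), and a finite sum of semi-algebraic functions is semi-algebraic by Proposition \ref{proposition:semi_alge}.(\ref{proposition:semi_alge2}), the entire argument reduces to establishing that the penalty $g(X) = \sum_{i=1}^n w_i |\sigma_i(X)|$ is semi-algebraic. Properness and lower semi-continuity of $F$ then follow immediately, because $f$ is proper and lower bounded while $g$ is continuous and nonnegative, the singular values depending continuously on $X$.

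To treat $g$, I would factor it as a composition $g = h \circ \Sigma$, where $\Sigma : \R^{m\times n} \to \R^n$ sends $X$ to its ordered singular-value vector $\sigma(X)$, and $h(\s) = \sum_{i=1}^n w_i |s_i|$. The outer map $h$ is a finite sum of the functions $s_i \mapsto w_i |s_i|$, each being a product of a constant and an absolute value whose graph $\{(s_i,t): t\geq 0,\ t^2 = s_i^2\}$ is semi-algebraic; hence $h$ is semi-algebraic by Proposition \ref{proposition:semi_alge}.(\ref{proposition:semi_alge2}). Once $\Sigma$ is shown to be semi-algebraic, Proposition \ref{proposition:semi_alge}.(\ref{proposition:semi_alge3}) delivers semi-algebraicity of $g$.

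The main obstacle is therefore proving that the singular-value map $\Sigma$ is semi-algebraic. The key observation is that the squared singular values $\sigma_i(X)^2$ are exactly the eigenvalues of $X^\top X$, i.e. the roots of the characteristic polynomial $\mu \mapsto \det(\mu I - X^\top X)$, whose coefficients are polynomials in the entries of $X$ because the determinant is a polynomial map. Concretely, I would describe the graph of $\Sigma$ as
\begin{align*}
\big\{(X,\s)\in\R^{m\times n}\times\R^n : s_1\geq s_2 \geq \cdots \geq s_n \geq 0,\ \det(\mu I - X^\top X) = \textstyle\prod_{i=1}^n(\mu - s_i^2)\big\},
\end{align*}
where the identity in $\mu$ is enforced by equating coefficients, producing finitely many polynomial equations relating the entries of $X$ to the elementary symmetric functions of $s_1^2,\dots,s_n^2$. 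Together with the ordering inequalities $s_1\geq\cdots\geq s_n\geq 0$, this exhibits the graph as a finite intersection of polynomial equalities and inequalities, hence a semi-algebraic set, so $\Sigma$ is semi-algebraic.

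With $\Sigma$ semi-algebraic, $g = h\circ\Sigma$ is semi-algebraic, $F = f + g$ is then a finite sum of semi-algebraic functions that is proper and lower semi-continuous, and Lemma \ref{lem:KL_semialgebraic} yields the KL property. The only delicate points to check are that matching polynomial coefficients genuinely produces a semi-algebraic (rather than merely definable) description of the graph, and that the ordering constraints correctly single out the decreasing arrangement of the singular values; the remainder is a routine application of the closure properties collected in Proposition \ref{proposition:semi_alge}.
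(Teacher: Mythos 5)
Your proof is correct, but it takes a genuinely different route from the paper's. The paper lifts $g$ to an auxiliary function $g'(X,U,V,\sigma)$ on the enlarged space of SVD factors, writes the graph of $g'$ as a semi-algebraic set cut out by the polynomial constraints $X=U\diag(\sigma)V^\top$, $U^\top U=I$, $V^\top V=I$, $\sigma_i\geq 0$, $\sum_i w_i\sigma_i=\xi$, and then invokes the Tarski--Seidenberg theorem to conclude that the projection onto $(X,\xi)$, i.e.\ the graph of $g$, is semi-algebraic. You instead factor $g=h\circ\Sigma$ and describe the graph of the ordered singular-value map $\Sigma$ directly, via the identity $\det(\mu I - X^\top X)=\prod_i(\mu-s_i^2)$ with coefficients matched and the ordering $s_1\geq\cdots\geq s_n\geq 0$ imposed, then close with Proposition~\ref{proposition:semi_alge}.(\ref{proposition:semi_alge2}) and (\ref{proposition:semi_alge3}). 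Both arguments are sound; yours buys an explicit polynomial description of the graph of $\Sigma$ in the original variables with the projection step hidden inside the cited closure properties, while the paper's lifting is more directly tied to the SVD structure and extends immediately to regularizers depending on $(U,V,\sigma)$ jointly. One point in your favor: you explicitly impose the decreasing arrangement $s_1\geq\cdots\geq s_n$, which is genuinely needed here because the weights $w_i$ are distinct, so $\sum_i w_i\sigma_{\pi(i)}$ depends on the permutation $\pi$; the paper's displayed set omits this ordering constraint, so its projection is strictly larger than the graph of $g$ as written (an easily repaired oversight, but one you correctly avoid). Your closing caveats about coefficient matching and the ordering are exactly the right delicate points, and both check out.
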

\begin{proof}
When $g$ is
defined as in (\ref{eqn:penalty}), we first investigate the
auxiliary function $g':\mathbb{R}^{m\times n}
\times \mathbb{R}^{m\times m}\times \mathbb{R}^{n\times n}\times\R^{n}
\rightarrow \mathbb{R}$, which satisfies that $g'(X,U,V,\sigma)=g(X)$, $X = U\diag(\sigma)V^\top$ and
$(U,V)\in O(U,V)$.
Its graph in $\mathbb{R}^{n\times m_d}
\times \mathbb{R}^{n\times k}\times \mathbb{R}^{m_d\times k}\times\mathbb{R}$
can be written as
\begin{equation}\label{eq_lemma6_2}
\begin{split}
\{&(X,U,V,\sigma, \xi):\;
\sigma_i\in\mathbb{R}_+,
X=U\mathrm{diag}(\{\sigma_i\})V^\top,\;U^\top U-I=0,\;V^\top
V-I=0,\;\\
&\mbox{and} \;\sum_{i=1}^kw_i\sigma_i-\xi=0, \mathrm{for }w_i\geq 0, i = 1,\ldots, n\}
\end{split}
\end{equation}
We can see that the graph of $g'$ in the subspace
$\{X:X\in\mathbb{R}^{m\times n}\}$ is exactly the graph of $g$. Base on
Definition~\ref{def:semi_alge}, (\ref{eq_lemma6_2}) is a
semi-algebraic set. Then following Tarski-Seidenberg
Theorem~\citep{coste2000introduction}, the graph of $g$ is also a
semi-algebraic set, since its image can be obtained by the
projection of a semi-algebraic set onto the space $\R^{m\times n}\times \R$ by sending $(X,U,V,\sigma, \xi)$ to $(X,\xi)$.

On the other hand, by assumption we have that $f$ is a semi-algebraic function. Thus $F(X) = f(X) + g(X)$, sum of $f$ and $g$, is also a semi-algebraic function by Proposition~\ref{proposition:semi_alge}.(\ref{proposition:semi_alge2}), which complete the proof.
\end{proof}

Given this lemma, we can see that polynomial regularizers with respect to singular values are no longer terrifying to solve. The Iterative Soft-Thresholding Algorithm(ISTA) is a good solution for optimizing non-smooth composite objective defined in (\ref{eqn:problem}).
ISTA has many different name in the area of optimization, including, proximal algorithm(PG), forward-backward splitting and mirror descent \citep{nesterov2013gradient,duchi2010composite,beck2003mirror}. The general step of ISTA is to solve a strongly convex problem iteratively, which is
\begin{align*}
X_{t+1}\in \arg\min_{X\in \mathbb{R}^{n\times m} } & f(X_{t})+\langle\nabla f(X_{t}),X-X_{t}\rangle\\
&+\frac{1}{2\mu}\|X-X_{t}\|_F^2+g(X),\; t\geq0
\end{align*}

Define the shrinkage-thresholding operator, also known as proximal mapping, for $g$ at $M_t$ as
\begin{equation}
\label{eqn:proximal_map}
\begin{split}
P_{g}^\mu (M_{t}) &= \arg\min_{X\in \mathbb{R}^{n_\times m} } m(X_t,X)\\
&=\arg\min_{X\in \mathbb{R}^{n_\times m} }
\frac{1}{2}\|X-M_{t}\|_F^2+\mu g(X), \mu >0
\end{split}\end{equation}

Then the update scheme will be
\begin{align*}
X_{t+1}\in P_{g}^\mu(X_t-\mu\nabla f(X_t))
\end{align*}

\begin{algorithm}[t]
\caption{Iterative Soft-Thresholding Algorithm (ISTA)} \label{alg_pg0}
\begin{algorithmic}
\STATE \textbf{Input}: Observed matrix $Y$, Lipschitz constant $L$.
\STATE \textbf{Initialize}: $X=0$, step size $\mu<1/L$
\FOR {$t=1,2,\ldots$}
    \STATE $M_{t+1} = X_{t}-\mu\nabla_{X_{t}} f(X_{t})$
    \STATE $X_{t+1}=P_{g}^\mu(M_{t+1})$

\ENDFOR
\end{algorithmic}
\end{algorithm}

Based on the following lemma, a solution of
(\ref{eqn:proximal_map}) can be
found in an easier way.

\begin{lemma}\label{lemma_yu}
\citep{zhang2011penalty} Let $\|\cdot\|$ be a unitarily invariant
norm on $\mathbb{R}^{n\times m}$ (i.e., $\|LXR\| = \|X\|$ for any
unitary matrix $L,\; R$) and let $Q:\mathbb{R}^{n \times
m}\rightarrow \mathbb{R}$ be a unitarily invariant function (i.e.,
$Q(LXR)=Q(X)$ for any unitary matrix $L,\; R$ and any $X\in
\mathbb{R}^{n\times m}$). Let $A=U\Sigma V^\top \in
\mathbb{R}^{n\times m}$ be given, 
and $h$ be a
non-decreasing function on $[0,\infty)$. Then $X^* = U
\mathrm{Diag}(\x^*)V^\top$ is a global optimal solution of the
problem
\begin{equation}\label{eq_lem11}
\min_X Q(X)+h(\|X-A\|)
\end{equation}
where $\x^*$ is the global optimal solution of the problem
\begin{equation}\label{eq_lem12}
\min_\x
Q( \diag(\x))+h(\| \diag(\x)-\Sigma\|)
\end{equation}
\end{lemma}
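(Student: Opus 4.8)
The plan is to exploit the unitary invariance of both $Q$ and $\|\cdot\|$ to reduce the matrix problem (\ref{eq_lem11}) to its restriction to singular values, and to control the coupling term $h(\|X-A\|)$ by a Mirsky-type inequality. Write $J(X):=Q(X)+h(\|X-A\|)$ for the matrix objective and $j(\x):=Q(\diag(\x))+h(\|\diag(\x)-\Sigma\|)$ for the vector objective of (\ref{eq_lem12}). I would establish two facts: (a) $J(X^*)=j(\x^*)$ equals the optimal value of (\ref{eq_lem12}); and (b) $J(X)\geq j(\x^*)$ for every $X$. Together they force $X^*$ to be a global minimizer of $J$.

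For (a), recall $A=U\Sigma V^\top$ with $U,V$ unitary. Taking $L=U$, $R=V^\top$ in the invariance $Q(LZR)=Q(Z)$ with $Z=\diag(\x^*)$ gives $Q(X^*)=Q(\diag(\x^*))$; likewise $\|X^*-A\|=\|U(\diag(\x^*)-\Sigma)V^\top\|=\|\diag(\x^*)-\Sigma\|$ by unitary invariance of the norm. Hence $J(X^*)=j(\x^*)$, and since $\x^*$ minimizes $j$, this is precisely the optimal value of (\ref{eq_lem12}).

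For (b), let $X=U_X\Sigma_X V_X^\top$ be an SVD with $\Sigma_X=\diag(\sigma(X))$, the singular values sorted decreasingly. The same invariance argument gives $Q(X)=Q(\Sigma_X)$. The decisive estimate is the Mirsky inequality: for any unitarily invariant norm and singular values matched in decreasing order, $\|X-A\|\geq\|\Sigma_X-\Sigma\|$. Because $h$ is non-decreasing, $h(\|X-A\|)\geq h(\|\Sigma_X-\Sigma\|)$, so $J(X)\geq Q(\diag(\sigma(X)))+h(\|\diag(\sigma(X))-\Sigma\|)=j(\sigma(X))\geq j(\x^*)=J(X^*)$, where the last inequality uses that $\sigma(X)$ is feasible for (\ref{eq_lem12}). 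This proves (b) and hence the lemma.

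The only non-elementary ingredient, and thus the main obstacle, is the Mirsky inequality. I would either cite it directly or derive it from the von Neumann trace inequality $\tr(X^\top A)\leq\sum_i\sigma_i(X)\sigma_i(A)$ together with the symmetric-gauge representation of $\|\cdot\|$, being careful to pair the singular values of $X$ and $A$ in the same decreasing order so that the bound lands on $\|\Sigma_X-\Sigma\|$ rather than a mismatched difference. A minor point worth recording is that the argument only ever feeds sorted, non-negative vectors $\sigma(X)$ into $j$, so these must lie in the domain of (\ref{eq_lem12}); notably, no assumption that $\x^*$ itself be sorted or non-negative is required, since step (a) holds verbatim for any minimizer $\x^*$ of $j$.
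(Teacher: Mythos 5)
Your proof is correct. Note first that the paper itself offers no proof of this lemma: it is quoted from \citet{zhang2011penalty} and used as a black box, so there is no in-paper argument to compare against. Your two-step structure is the standard one for results of this kind (and essentially the one in the cited reference): step (a) uses unitary invariance of $Q$ and of the norm to show $J(X^*)=j(\x^*)$, and step (b) reduces an arbitrary $X$ to its singular-value vector via $Q(X)=Q(\Sigma_X)$ together with Mirsky's inequality $\|\Sigma_X-\Sigma\|\leq\|X-A\|$ and the monotonicity of $h$, so that $J(X)\geq j(\sigma(X))\geq j(\x^*)=J(X^*)$. You correctly identify Mirsky's inequality as the one non-elementary ingredient, and your remark that it must be applied with both singular-value vectors sorted in the same (decreasing) order is exactly the point where a careless version of the argument would break. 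The observation that step (a) requires no sign or ordering assumption on $\x^*$, while step (b) only ever evaluates $j$ at sorted non-negative vectors, is also a genuine subtlety worth recording, since the paper's statement of problem (\ref{eq_lem12}) leaves the feasible set for $\x$ implicit. The only thing I would add is that if one wants the minimizer of (\ref{eq_lem12}) to be attainable one should either restrict $\x$ to the non-negative orthant or note that $Q(\diag(\x))$ is invariant under sign changes of the entries of $\x$ (as the paper's subsequent discussion of symmetric gauge functions suggests), but this does not affect the validity of your argument.
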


It is worthwhile to further our discussion regarding this lemma. If
we set $g(\x)= Q(\mathrm{diag}(\x))$, then $g$ can be viewed
as an extension of the \emph{symmetric gauge function} for $Q$, in
which case $g$ is a function on $\mathbb{R}^n$ whose value is
invariant under permutations but could be variant under sign changes
of components. Due to these facts, we can view a unitarily invariant
function $Q$ as an extension of a unitarily invariant norm.
More examples and analyses of symmetric gauge functions in normed vector space can be found in~\citep{lewis2003mathematics}. As a
result, if the empirical risk $ f$ is measured by a norm in vector
space, or more generally by a unitarily invariant function, and
non-smooth regularization terms $g$ penalize the unitarily
invariant norms of variables non-decreasingly, Lemma~\ref{lemma_yu}
indicates that the shrinkage operator could be computed in an easier way.
This observation can be applied to (\ref{eqn:penalty}), which gives use following corollary.

\begin{corollary}\label{corollary_proximal_map} Assume that $g$ is a function of singular values of $X$.
The shrinkage operator $P_g^\mu$ of $g$ in the form of (\ref{eqn:penalty})
can be computed as:
\begin{equation}\label{eqn:proximal1}
\begin{split}
&P_{g}^\mu(M_{t+1})=U_{t+1} \mathrm{Diag}(\x^*) V_{t+1}^\top,\\
&x_i^*=(\sigma_i(M_{t+1})-\mu w_{i})_+ \triangleq \left\{
    \begin{aligned}
    &\sigma_i(M_{t+1})-\mu w_{i,t},\; \mbox{if}\;\sigma_i(M_{t})>w_{i}\\
    &0,\qquad\qquad\quad\;\; \mbox{otherwise}
    \end{aligned}
    \right.
\end{split}
\end{equation}
where $M_{t+1} = U_{t+1}\diag(\sigma_{t+1})V_{t+1}^\top$, $(U,V)\in O(U_{t+1},V_{t+1})$.
\end{corollary}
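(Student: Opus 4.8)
The plan is to invoke Lemma~\ref{lemma_yu} to collapse the matrix-valued proximal problem (\ref{eqn:proximal_map}) into a separable scalar problem on the singular values, and then to solve that scalar problem in closed form by soft-thresholding. First I would match the objective in (\ref{eqn:proximal_map}), namely $\frac{1}{2}\|X-M_{t+1}\|_F^2+\mu g(X)$, to the template $Q(X)+h(\|X-A\|)$ of Lemma~\ref{lemma_yu}. The natural identification is $A=M_{t+1}$, $\|\cdot\|=\|\cdot\|_F$ (which is unitarily invariant), $h(s)=\frac{1}{2}s^2$, and $Q(X)=\mu g(X)$. At this point I must verify the two hypotheses of the lemma: that $h$ is non-decreasing on $[0,\infty)$, which is immediate since $h'(s)=s\ge 0$; and that $Q$ is unitarily invariant, which holds because $g(X)=\sum_i w_i|\sigma_i(X)|$ depends on $X$ only through its singular values, and $\sigma_i(LXR)=\sigma_i(X)$ for orthogonal $L,R$.

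With the hypotheses in place, Lemma~\ref{lemma_yu} guarantees that a global minimizer has the form $X^\ast=U_{t+1}\diag(\x^\ast)V_{t+1}^\top$, where $U_{t+1},V_{t+1}$ come from the SVD $M_{t+1}=U_{t+1}\diag(\sigma_{t+1})V_{t+1}^\top$ and $\x^\ast$ solves the reduced problem $\min_\x \mu\, g(\diag(\x))+\frac{1}{2}\|\diag(\x)-\diag(\sigma_{t+1})\|_F^2$. Writing this out, the objective becomes $\sum_{i=1}^n\big(\mu w_i|x_i|+\frac{1}{2}(x_i-\sigma_i(M_{t+1}))^2\big)$, which separates into $n$ independent one-dimensional problems.

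It then remains to solve each scalar problem $\min_{x_i}\mu w_i|x_i|+\frac{1}{2}(x_i-\sigma_i(M_{t+1}))^2$. This is the standard soft-thresholding problem, whose minimizer is $\mathrm{sign}(\sigma_i)(|\sigma_i|-\mu w_i)_+$; since singular values satisfy $\sigma_i(M_{t+1})\ge 0$, this reduces exactly to $x_i^\ast=(\sigma_i(M_{t+1})-\mu w_i)_+$, matching (\ref{eqn:proximal1}).

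I do not expect any single step to be a serious obstacle; the scalar soft-thresholding calculation and the separability are routine. The part warranting the most care is confirming the hypotheses of Lemma~\ref{lemma_yu} — in particular the unitary invariance of $Q=\mu g$ — and checking that the thresholded vector $\x^\ast$ is consistent with being a legitimate singular-value vector of $X^\ast$. Here the assumed ordering $0<w_1\le\cdots\le w_n$ against the non-increasing $\sigma_1\ge\cdots\ge\sigma_n$ is exactly what is needed: it yields $\sigma_i-\mu w_i\ge\sigma_{i+1}-\mu w_{i+1}$, so the entries $x_i^\ast$ remain non-negative and non-increasing, confirming that $U_{t+1}\diag(\x^\ast)V_{t+1}^\top$ is a genuine SVD and that the reduction is valid.
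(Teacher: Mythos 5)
Your proposal is correct and follows essentially the same route as the paper: verify the hypotheses of Lemma~\ref{lemma_yu} (Frobenius norm unitarily invariant, $h(\theta)=\theta^2/2$ non-decreasing, $g$ unitarily invariant and separable), reduce to the scalar problem on singular values, and apply the standard soft-thresholding formula. Your added check that the non-descending weights keep $\x^*$ non-negative and non-increasing, so that $U_{t+1}\diag(\x^*)V_{t+1}^\top$ is a genuine SVD, is a small but welcome refinement the paper leaves implicit.
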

\begin{proof}
It is obvious that the Frobenius norm is unitarily invariant,
$h(\theta)=\theta^2/2$ is nondecreasing on $[0,\infty)$, and
penalties defined as in (\ref{eqn:penalty}) are also unitarily
invariant and separable for each singular value. Given that all
assumptions of Lemma \ref{lemma_yu} are satisfied, the proximal map of $g$ can be calculated by
\begin{equation}\label{eq_theo1}
\begin{split}
&P_{g}^\mu(M_{t+1}) = U_{t+1}\mathrm{\diag}(\x^*)V_{t+1}^\top,\\
&\x^* = \arg\min_{\x\in
\mathbb{R}^{n} \geq 0}\frac{1}{2}\|\mathbb{\sigma}(M_{t+1})-\x\|_F^2+\mu \w_t^\top |\x|.
\end{split}
\end{equation}
where $|\x| = [|x_1|,\ldots, |x_n|]$. Using the shrinkage operator \citep{parikh2014proximal}, we can conclude that (\ref{eqn:proximal1}) is the analytical solution of (\ref{eq_theo1}), which complete the proof.
\end{proof}

The update scheme (\ref{eqn:proximal1}) gives a shrinkage-threshold step to the singular value of $X_t$ in each step. Equipped with these results, we can conclude following result.

\begin{thm}\label{thm:main1}
Suppose that all conditions in Assumption~\ref{assump:1} are hold. Let $\{X_t\}_{t \in \N}$ be the sequence generated by Algorithm~\ref{alg_pg0}, which is bounded. We have that
\begin{equation}\label{eqn:thm1_sum_bound}
\sum_{t = 0}^\infty \|X_{t+1}-X_{t}\|_F <\infty
\end{equation}
and $\{X_t\}_{t \in \N}$ converges to a critical point $X_*$ of $F$.
\end{thm}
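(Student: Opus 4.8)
The plan is to follow the now-standard abstract recipe for proving global convergence of descent methods on KL functions (à la Attouch–Bolte–Svaiter), which has three ingredients: a sufficient-decrease inequality, a relative-error (subgradient) bound, and the KL inequality, combined with boundedness of the iterates. By Lemma~\ref{lem:multi_KL_property} the objective $F$ is a KL function, and Assumption~\ref{assump:1}.(\ref{assump:a3}) together with the hypothesis already gives boundedness of $\{X_t\}_{t\in\N}$, so the work is in verifying the two inequalities and then assembling them.

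\emph{Step 1 (sufficient decrease).} Since $X_{t+1}$ minimizes the proximal model $m(X_t,X)=f(X_t)+\langle\nabla f(X_t),X-X_t\rangle+\frac{1}{2\mu}\|X-X_t\|_F^2+g(X)$, I would compare its value at $X_{t+1}$ and at $X_t$ to get $m(X_t,X_{t+1})\le m(X_t,X_t)=F(X_t)$. The $L$-smoothness of $f$ (Assumption~\ref{assump:1}.(\ref{assump:a1})) gives the descent lemma $f(X_{t+1})\le f(X_t)+\langle\nabla f(X_t),X_{t+1}-X_t\rangle+\frac{L}{2}\|X_{t+1}-X_t\|_F^2$. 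Adding $g(X_{t+1})$ and using $\mu<1/L$ yields
\begin{equation}\label{eq:descent}
F(X_{t+1})\le F(X_t)-\Big(\tfrac{1}{2\mu}-\tfrac{L}{2}\Big)\|X_{t+1}-X_t\|_F^2,
\end{equation}
so with $a:=\frac{1}{2\mu}-\frac{L}{2}>0$ the sequence $\{F(X_t)\}$ is non-increasing and, being lower bounded, convergent; summing \eqref{eq:descent} already gives $\sum_t\|X_{t+1}-X_t\|_F^2<\infty$ (but note this is the weaker \emph{square-summable} bound, not yet \eqref{eqn:thm1_sum_bound}).

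\emph{Step 2 (subgradient bound).} The optimality condition for the proximal subproblem reads $0\in\nabla f(X_t)+\frac{1}{\mu}(X_{t+1}-X_t)+\partial g(X_{t+1})$, so the element $\xi_{t+1}:=\nabla f(X_{t+1})-\nabla f(X_t)-\frac{1}{\mu}(X_{t+1}-X_t)$ lies in $\partial F(X_{t+1})$. Using $L$-smoothness to bound $\|\nabla f(X_{t+1})-\nabla f(X_t)\|$ I would obtain $\mathrm{dist}(0,\partial F(X_{t+1}))\le\|\xi_{t+1}\|\le (L+\frac{1}{\mu})\|X_{t+1}-X_t\|_F =: b\,\|X_{t+1}-X_t\|_F$.

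\emph{Step 3 (KL and assembly).} This is the crux and the main obstacle. Because $\{X_t\}$ is bounded, it has a limit point $X_*$, and standard arguments (using lower semicontinuity of $g$, continuity of $f$, and \eqref{eq:descent}) show $F(X_t)\to F(X_*)$ and that $X_*$ is a critical point; I would also argue $F$ takes the constant value $F(X_*)$ on the limit-point set. Applying the KL inequality at $X_*$ on the concave desingularizing function $\varphi$, one has $\varphi'(F(X_t)-F(X_*))\,\mathrm{dist}(0,\partial F(X_t))\ge 1$ for large $t$. The key telescoping trick is to exploit concavity of $\varphi$:
\begin{equation}\label{eq:concave}
\varphi(F(X_t)-F(X_*))-\varphi(F(X_{t+1})-F(X_*))\ge \varphi'(F(X_t)-F(X_*))\,(F(X_t)-F(X_{t+1})).
\end{equation}
Combining \eqref{eq:descent}, \eqref{eq:concave}, the subgradient bound from Step 2, and the KL inequality, and abbreviating $\Delta_t:=\varphi(F(X_t)-F(X_*))$, I would derive an inequality of the form $\|X_{t+1}-X_t\|_F^2\le \frac{b}{a}\|X_t-X_{t-1}\|_F(\Delta_t-\Delta_{t+1})$; taking square roots and applying the AM–GM inequality $\sqrt{uv}\le\frac{1}{2}(u+v)$ converts this into $\|X_{t+1}-X_t\|_F\le\frac{1}{2}\|X_t-X_{t-1}\|_F+\frac{b}{2a}(\Delta_t-\Delta_{t+1})$. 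Summing over $t$, the telescoping of $\Delta_t$ (which is finite since $\varphi\ge 0$) together with the self-absorbing $\frac{1}{2}\|X_t-X_{t-1}\|_F$ term yields the finite-length conclusion \eqref{eqn:thm1_sum_bound}. Finite length makes $\{X_t\}$ Cauchy, hence convergent to a single limit $X_*$, which by Step 2 (and closedness of $\partial F$) satisfies $0\in\partial F(X_*)$, i.e. is a critical point. The delicate points to handle carefully are ensuring the KL inequality is applied uniformly on a neighborhood of the (possibly non-singleton) limit-point set, and correctly localizing the iterates inside that neighborhood for all large $t$—this is where one must invoke the uniformized KL property on a compact connected set rather than KL at a single point.
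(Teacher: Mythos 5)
Your proposal is correct and follows essentially the same route as the paper's own proof in Appendix~A: sufficient decrease from the descent lemma and the proximal step (the paper's Lemma~\ref{lemma_convergence_properties}), the subgradient bound $\|G_{t+1}\|_F\le(L+\mu^{-1})\|X_{t+1}-X_t\|_F$ (Lemma~\ref{lem_subgradient_bound}), and the uniformized KL inequality combined with concavity of $\varphi$ and the $2\sqrt{ab}\le a+b$ telescoping trick to obtain finite length and hence convergence. The delicate point you flag at the end—applying KL uniformly on the compact connected limit set rather than at a single point—is exactly what the paper handles via Lemma~\ref{lem:uniformized_KL} and Proposition~\ref{prop:limit_point}.
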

The proof of above theorem is included in the Appendix.~\ref{sec:app1} for completeness.

Further more, if desingularizing function $\varphi$ for defining KL property could be chosen to be of the form
\begin{align*}
\varphi (s) = cs^{1-\alpha},
\end{align*}
where $c>0$ and $\alpha \in (0,1]$, then as shown in \citep{attouch2009convergence}, the convergence rate of $X_t$, which is measured by $\|X_t-X_*\|_F$, depends on $\alpha$, which can be summarized as
\begin{enumerate}[label=(\roman*),ref=\roman*]\label{fact:1}
\item\label{fact:1.1}{ If $\alpha = 0$, then $\{X_t\}_{t\in \N}$ converges in finite steps.}
\item\label{fact:1.2}{If $\alpha \in (0,\frac{1}{2}]$, then $\{X_t\}_{t\in \N}$ converges locally linearly, which means there exist $\omega >0$ and $\tau \in [0,1)$ such that $\|X_t-X_*\|_F\leq \omega \tau ^t$, when $X_t$ is in a small enough neighborhood of $X_*$.}
\item\label{fact:1.3}{If $\alpha \in (\frac{1}{2}, 1)$, hen $\{X_t\}_{t\in \N}$ converges locally sublinearly, which means there exist a $\omega >0$ such hat $\|X_t-X_*\|_F\leq \omega t^{-\frac{1-\alpha}{2\alpha-1}}$,when $X_t$ is in a small enough neighborhood of $X_*$.
    }
\end{enumerate}

\textbf{Remark}: (i) Compared with factorization based methods, the benefit of proximal algorithm is that it does not need any assumption about the singular gap of $\{X_t\}_{t\in\N}$, due to the fact that Lanczos method could solve the eigenvalue decomposition efficiently even if the eigengap of $X^\top X$ is zero \citep{kuczynski1992estimating}.
Meanwhile, (\ref{eqn:thm1_sum_bound}) tells that benefitting from the KL property, the first order guarantee proximal gradient $\frac{1}{\eta}\|X_T-X_{T+1}\|_F$ converges on the order of $O\big(\frac{1}{T}\big)$ with respect to the total iteration number $T$, which is even better than the classical results $O\big(\frac{1}{\sqrt{T}}\big)$ for general non-convex functions, as discussed in the Remark~7 of \citep{attouch2010proximal}.

(ii) As a general framework, we can see that Algorithm~\ref{alg_pg0} is applicable to a large family of non-convex rank minimization problems, such as truncated nuclear norm.
In Section 5 of \citep{hu2013fast}, the authors proposed a two loop algorithm named TNNR-APGL to solve (\ref{eqn:problem}) though the objective does not give out explicitly in their paper.
In spite of $O\big(\frac{1}{T^2}\big)$ convergence of inner loop for proximal gradient $\frac{1}{\eta}\|X_{l,T}-X_{l,T+1}\|_F$, the total convergence rate of TNNR-APGL is still unknown since it contains an outer loop which changes $A_l$ and $B_l$ in every iteration.
Thus compared with TNNR-APGL, Algorithm~\ref{alg_pg0} is a simpler single loop algorithm with explicit convergence rate. In experiments, we will show that ISTA is faster than TNNR-APGL almost always.

(iii) Above fact (\ref{fact:1.1}) to (\ref{fact:1.3}) give us a different but strong result that the convergence rate of $\|X_t-X_*\|_F$ will be known whenever $\alpha < 1$ in the KL inequality is given.
Even if calculating the exact exponent $\alpha$ for general KL function is a very difficult problem as shown in \citep{li2016calculus,necoara2015linear}, We can still give an upper bound of $\alpha$ that $\alpha \leq 1-2\times 3^{m\times n}$ by the main theorem of \citep{d2005explicit}.
More importantly, the convergence rate of Algorithm~\ref{alg_pg0} will be the same no matter $\alpha$ is known or not. To the best of our knowledge, this is the first work that reveals $\|X_t-X_*\|_F$ type convergence rate for RM problem.

\section{Enhancing Theoretical Guarantee for Reweighted Singular Value Regularizer}

\begin{algorithm}[t]
\caption{Iterative Shinkage-Thresholding and Reweighted Algorithm} \label{alg_spg}
\begin{algorithmic}
\STATE \textbf{Input}: Observed matrix $Y$, Lipschitz constant $L$, approximation parameter $\varepsilon $.
\STATE \textbf{Initialize}: $X=0$, step size $\mu<1/L$
\FOR {$t=1,2,\ldots$}
    \STATE $M_{t+1} = X_{t}-\mu\nabla_{X_{t}} f(X_{t})$
    \STATE $X_{t+1}=P_{u_t}^\mu(M_{t+1})$, where $u_t$ is defined in (\ref{eqn:upper_bound}) and shrinkage operator $P_{u_t}^\mu (M_{t+1})$ is computed as (\ref{eqn:proximal1}).
\ENDFOR
\end{algorithmic}
\end{algorithm}

Although we can make a more reasonable penalty for rank minimization problem and solve it efficiently by Algorithm~\ref{alg_pg0} based on the analysis in previous section, the results may not be satisfactory without well-tuned penalty parameters $\w$ when facing a real-world problem.
In this section, we will extend our analysis to solve a more sophisticated penalty that could reduce the requirement of penalty parameters. 
More precisely, a more complicated case of (\ref{eqn:penalty}) is to use reweighting strategy in defining $w_{i,t}$ in each iteration.  As proposed in \citep{candes2008enhancing}, reweighting strategy could outperform LASSO regularizer in finding sparse solution. 
The intuition behind this fact is that the reweighted $l_1$ norm makes a better approximation of cardinality  of support function in $\R^d$ in the view of its graph. 
In the light of this fact, \cite{zhong2015nonconvex} proposed a reweighted nuclear norm for the sake of low-rank structure in matrix completion task. Specifically, they used the iterative shrinkage-thresholding method to solve
\begin{equation}\label{eqn:proximal_problem2}
X_{t+1} = P_{u_t}^\mu (X_t-\nabla f(X_t)) = \arg\min_X \frac{1}{2\mu} \|X-(X_t- \mu \nabla f(X_t))\|_F^2+\sum_{i=1}^n w_{t,i}|\sigma_i(X)|
\end{equation}
iteratively, where
\begin{equation}\label{eqn:penalty2}
w_{t,i} = \frac{p}{(\sigma_i(X_t)+\varepsilon)^{1-p}}
\end{equation}
$\varepsilon>0$ is a negligible constant and $ 0<p<1$. This successive procedure is described in Algorithm~\ref{alg_spg}. The iterative reweighted algorithm falls in the general class of Majorization Minimization \citep{hunter2004tutorial}. To see this, we can consider following penalty
\begin{equation}\label{eqn:penalty3}
g(X) = \sum_{i=1}^n (|\sigma_i(X)|+\varepsilon)^p
\end{equation}
which is a continuous, differentiable concave function with respect to $|\sigma_i(X)|$ for $i=1,\ldots,n$. The absolute function is included for the concreteness of implying Lemma~\ref{lemma_yu}, though it does not change anything since singular values are all non-negative.
One can easily find a linearized upper bound for (\ref{eqn:penalty3}) at $|\sigma(X_t)|$ whenever $p\in(0,1) $, which is
\begin{equation}\label{eqn:upper_bound}
u_t(X) = u(X,X_t) = g(X_{t})+\sum_{i=1}^m \frac{p}{(\sigma_i(X_{t}) + \varepsilon )^{1-p}} (|\sigma(X)|-|\sigma(X_t)|)
\end{equation}
Observing that the coefficients of linear term are denoted by $w_{t,i}$, as shown in (\ref{eqn:penalty2}), we can view Algorithm~\ref{alg_spg} as a procedure that iteratively minimizes a upper bound function not only for $f$, but also for $g$ at $X_t$. As a consequence, the real objective function in (\ref{eqn:problem}) turns into the form
\begin{equation}\label{eqn:problem2}
\min_{X\in\R^{m\times n}} F(X)=f(X) + \sum_{i=1}^n (|\sigma_i(X)|+\varepsilon)^p
\end{equation}
We can see that $g$ defined in (\ref{eqn:penalty3}) makes a better approximation of rank function compared to nuclear norm as $p$ tends to zero though it is not reachable.

In \citep{zhong2015nonconvex}, the authors made some efforts to show the convergence of Algorithm~\ref{alg_spg} but only subsequence convergence was obtained. In a more previous work, \citep{attouch2010proximal} shows that reweighted $l_1$ norm can be viewed as a alternating minimization problem which alternatively solve $\w_t$ and $\x_t\in \R^d$. Unlike previous work, in the rest of this section we will prove the convergence of Algorithm~\ref{alg_spg} for $X\in\R^{m\times n}$ and not consider it as an alternating minimization process.

To make sure that $F(X)$ is still a KL function, we assume $p$ is a rational number according to Proposition~\ref{proposition:semi_alge}.(\ref{proposition:semi_alge1}), (\ref{proposition:semi_alge2}), (\ref{proposition:semi_alge3}) and (\ref{proposition:semi_alge4}).

\begin{lemma}
\label{lem:limit_point2}
\emph{(Properties of limit$(X_0)$)} Assume that $p$ is a rational number and Assumption~\ref{assump:1} is hold. Let
$\{X_t\}_{t\in\mathbb{N}}$ be the sequence generated by Algorithm~\ref{alg_pg0}
with start point $X_0$. The following assertions hold.
\begin{itemize}
\item[(i)]{$\varnothing\neq\emph{limit}(X_0)\subset \mathrm{crit}\;(F)$,
    where $\mathrm{crit}\;(F)$ is the set of critical points of $F$.}
\item[(ii)]{We have
    \begin{equation}\label{eqn:limit2_point1}
    \lim_{t\rightarrow\infty}{\mathrm{dist}(X_t,\mathrm{limit}(Z_0))}=0.
    \end{equation}}
\item[(iii)]{$\emph{limit}(X_0)$ is a non-empty, compact and
connected set.}
\item[(iv)]{The objective $F$ is finite and constant on $\emph{limit}(X_0)$.}
\end{itemize}
\end{lemma}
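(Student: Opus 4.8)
The plan is to establish the four assertions through the standard three-ingredient recipe of the Kurdyka--\L ojasiewicz descent framework, specialized to the ISTA iteration of Algorithm~\ref{alg_pg0}. Throughout I write $\mathrm{limit}(X_0)$ for the set of cluster points of $\{X_t\}_{t\in\N}$ and take its boundedness for granted (it is guaranteed by Assumption~\ref{assump:1}.(\ref{assump:a3}) and is part of the hypotheses of Theorem~\ref{thm:main1}). The three ingredients I will verify are: a \emph{sufficient decrease} of $F$ along the iterates, a \emph{relative-error} bound relating a subgradient of $F$ at $X_{t+1}$ to the step $\|X_{t+1}-X_t\|_F$, and a \emph{continuity} property of $F$ along convergent subsequences. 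Only boundedness and $\|X_{t+1}-X_t\|_F\to 0$ will actually be needed, so the argument does not presuppose the single-point convergence of Theorem~\ref{thm:main1}.

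First I would record the two estimates. Because $f$ is $L$-smooth and the step size satisfies $\mu<1/L$, the descent lemma for $f$ combined with the optimality of the proximal step $X_{t+1}=P_g^\mu(X_t-\mu\nabla f(X_t))$ gives the sufficient-decrease inequality
\begin{align*}
F(X_{t+1})+\Big(\tfrac{1}{2\mu}-\tfrac{L}{2}\Big)\|X_{t+1}-X_t\|_F^2\le F(X_t).
\end{align*}
Since $F$ is bounded below (as $f$ is lower bounded and $g\ge 0$), summing over $t$ shows $\sum_{t}\|X_{t+1}-X_t\|_F^2<\infty$, hence $\|X_{t+1}-X_t\|_F\to 0$, and the monotone bounded sequence $\{F(X_t)\}$ converges to a finite value $F_\ast$. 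The first-order optimality condition of the proximal subproblem reads $0\in \nabla f(X_t)+\tfrac1\mu(X_{t+1}-M_{t+1})+\partial g(X_{t+1})$ with $M_{t+1}=X_t-\mu\nabla f(X_t)$, so that
\begin{align*}
w_{t+1}:=\tfrac1\mu(X_t-X_{t+1})+\nabla f(X_{t+1})-\nabla f(X_t)\in\partial F(X_{t+1}),\qquad \|w_{t+1}\|_F\le\Big(\tfrac1\mu+L\Big)\|X_{t+1}-X_t\|_F.
\end{align*}

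With these in hand, (i), (ii) and (iv) follow quickly. Nonemptiness of $\mathrm{limit}(X_0)$ is Bolzano--Weierstrass applied to the bounded sequence. For criticality, fix $\bar X\in\mathrm{limit}(X_0)$ and a subsequence $X_{t_j}\to\bar X$; since consecutive gaps vanish, $X_{t_j+1}\to\bar X$ as well and, by the relative-error bound, $w_{t_j+1}\to 0$. Using that $f$ is continuous and $g$ is lower semicontinuous, together with the proximal inequality $m(X_{t_j},X_{t_j+1})\le m(X_{t_j},\bar X)$ to control $g(X_{t_j+1})$ from above, one obtains $F(X_{t_j+1})\to F(\bar X)$; the closedness of the graph of the limiting subdifferential $\partial F$ then yields $0\in\partial F(\bar X)$, i.e. $\bar X\in\mathrm{crit}(F)$, proving (i). Assertion (ii) is the general fact that a bounded sequence approaches its cluster-point set: were $\mathrm{dist}(X_t,\mathrm{limit}(X_0))\not\to 0$, a subsequence staying $\varepsilon$-away would, by boundedness, admit a further convergent subsequence whose limit lies in $\mathrm{limit}(X_0)$, forcing the distance along it to $0$ and contradicting the $\varepsilon$-separation. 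For (iv), $\{F(X_{t_j+1})\}$ is a subsequence of $\{F(X_t)\}\to F_\ast$, so $F(\bar X)=F_\ast$ for every $\bar X\in\mathrm{limit}(X_0)$; thus $F$ is finite and constant there.

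Finally (iii): the cluster-point set of any sequence is closed, and being contained in the closure of the bounded orbit it is also bounded, hence compact. Connectedness is the delicate point and is where I expect the main technical effort: it follows from the classical observation that the limit set of a precompact sequence whose consecutive increments satisfy $\|X_{t+1}-X_t\|_F\to 0$ is connected --- if $\mathrm{limit}(X_0)$ split into two disjoint nonempty compact pieces separated by a positive gap, the vanishing step size would force the iterates to eventually ``jump'' across that gap in a single step, which is impossible. Making the continuity step $F(X_{t_j+1})\to F(\bar X)$ fully rigorous (given that $g$ is merely lower semicontinuous and non-smooth) and carrying out this connectedness argument carefully are the only non-routine parts; everything else reduces to the two displayed estimates.
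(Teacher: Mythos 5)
Your three-ingredient descent recipe is the right skeleton, and assertions (ii)--(iv) and the connectedness argument are handled exactly as the paper does (it simply invokes Proposition~\ref{prop:limit_point}, i.e.\ Proposition~2 of \citep{attouch2009convergence}, for that part). But there is a genuine gap in your two key estimates: despite the statement's (typographical) reference to Algorithm~\ref{alg_pg0}, this lemma --- as the hypothesis ``$p$ rational'' and the objective (\ref{eqn:problem2}) indicate, and as the paper's own proof in Appendix~\ref{sec:app2} confirms --- concerns the \emph{reweighted} iteration of Algorithm~\ref{alg_spg}, in which $X_{t+1}$ is \emph{not} the exact proximal point of $g(X)=\sum_i(|\sigma_i(X)|+\varepsilon)^p$ but the proximal point of the linear majorant $u_t$ built from the stale weights $w_{t,i}=p/(\sigma_i(X_t)+\varepsilon)^{1-p}$ of (\ref{eqn:penalty2}). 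Consequently your relative-error claim $w_{t+1}:=\tfrac1\mu(X_t-X_{t+1})+\nabla f(X_{t+1})-\nabla f(X_t)\in\partial F(X_{t+1})$ is false for these iterates: Fermat's rule at the subproblem gives $\tfrac1\mu(X_t-X_{t+1})-\nabla f(X_t)\in\partial u_t(X_{t+1})$, i.e.\ a subgradient with respect to the \emph{old} weights $\w_t$, and to land in $\partial F(X_{t+1})$ one must add the mismatch term $D_{t+1}=U_{t+1}\diag(\w_{t+1}-\w_t)V_{t+1}^\top$ and then prove it is controlled by the step, namely $\|D_{t+1}\|_F\le \frac{(1-p)pn}{\varepsilon^{2-p}}\|X_{t+1}-X_t\|_F$, which the paper obtains from the Lipschitz behavior of $\sigma\mapsto p/(\sigma+\varepsilon)^{1-p}$ on $[0,\infty)$ (this is where the smoothing parameter $\varepsilon>0$ is essential, and it changes your constant $\tfrac1\mu+L$ into $\rho_3=L+\mu^{-1}+\frac{(1-p)pn}{\varepsilon^{2-p}}$). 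Likewise your sufficient-decrease inequality, derived ``from the optimality of $P_g^\mu$'', needs the majorization step of the MM interpretation --- $u_t(X_t)=g(X_t)$ and $u_t(X_{t+1})\ge g(X_{t+1})$ by concavity of $s\mapsto(s+\varepsilon)^p$ --- to transfer descent from the surrogate to the true objective $F$ of (\ref{eqn:problem2}). Without these two modifications your argument proves Proposition~\ref{prop:limit_point} for plain ISTA with an exact prox, not the present lemma; the correction term $D_{t+1}$ and its bound are precisely the new technical content here.

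Two smaller remarks. The continuity step you flag as delicate is actually immaterial in this setting: singular values are continuous functions of the matrix, so $g$ in (\ref{eqn:penalty3}) --- and hence $F$ --- is continuous, and $F(X_{t_j})\to F(\bar X)$ along convergent subsequences is immediate (your limsup argument via the proximal inequality would anyway have to be rewritten with $u_{t}$ in place of $g$). And your non-use of the rationality of $p$ is legitimate: it serves only to make $F$ semi-algebraic, hence KL, which assertions (i)--(iv) do not require; the KL property enters only in the finite-length conclusion of Theorem~\ref{thm:main2}.
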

The proof of this lemma in included in the Apendix.~\ref{sec:app2}.

Since all tools used in the proof of Theorem~\ref{thm:main1} has been verified by Lemma~\ref{lem:limit_point2} and the proofs therein. we can conclude the following theorem similar to Theorem~\ref{thm:main1}.

\begin{thm}\label{thm:main2}
Suppose that $p$ is rational and all conditions in Assumption~\ref{assump:1} are hold. Let $\{X_t\}_{t \in \N}$ be the sequence generated by Algorithm~\ref{alg_spg}, which is bounded. We have that
\begin{equation}\label{eqn:thm2_sum_bound}
\sum_{t = 0}^\infty \|X_{t+1}-X_{t}\|_F <\infty
\end{equation}
and $\{X_t\}_{t \in \N}$ converges to a critical point $X_*$ of $F$ defined by (\ref{eqn:problem2}).
\end{thm}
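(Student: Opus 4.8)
The plan is to follow the by-now-standard Kurdyka–\L{}ojasiewicz recipe for descent methods (sufficient decrease, a relative-error subgradient bound, and the KL inequality), exactly as in the proof of Theorem~\ref{thm:main1}, but with the one genuinely new feature that Algorithm~\ref{alg_spg} does not minimize the true objective $F$ of (\ref{eqn:problem2}) at each step: it minimizes the majorizing surrogate in which the concave penalty $g(X)=\sum_i(|\sigma_i(X)|+\varepsilon)^p$ is replaced by its linearization $u_t$ from (\ref{eqn:upper_bound}). I would first record that $F$ in (\ref{eqn:problem2}) is itself a KL function: since $p$ is rational, each map $\sigma_i\mapsto(|\sigma_i|+\varepsilon)^p$ is semi-algebraic, so by the Tarski–Seidenberg argument already used in Lemma~\ref{lem:multi_KL_property} together with Proposition~\ref{proposition:semi_alge}.(\ref{proposition:semi_alge2}),(\ref{proposition:semi_alge3}),(\ref{proposition:semi_alge4}), the composite $F=f+g$ is semi-algebraic and hence KL by Lemma~\ref{lem:KL_semialgebraic}. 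Boundedness of $\{X_t\}_{t\in\N}$ is then obtained for free from coercivity (Assumption~\ref{assump:1}.(\ref{assump:a3})) once monotone decrease of $F(X_t)$ is established.

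The crucial step is the sufficient-decrease inequality for the \emph{true} $F$, and here the majorization structure does the work. Because $g$ is concave in $|\sigma(X)|$, its linearization satisfies $u_t(X)\geq g(X)$ for all $X$ with equality at $X=X_t$. Writing the optimality of $X_{t+1}$ for the surrogate and plugging in the feasible point $X_t$ kills the linear and quadratic terms, giving
\begin{align*}
\langle\nabla f(X_t),X_{t+1}-X_t\rangle+\tfrac{1}{2\mu}\|X_{t+1}-X_t\|_F^2+u_t(X_{t+1})\leq u_t(X_t)=g(X_t).
\end{align*}
Combining this with the descent lemma for the $L$-smooth $f$ and using $u_t(X_{t+1})\geq g(X_{t+1})$ yields
\begin{align*}
F(X_{t+1})\leq F(X_t)-\Big(\tfrac{1}{2\mu}-\tfrac{L}{2}\Big)\|X_{t+1}-X_t\|_F^2,
\end{align*}
so the choice $\mu<1/L$ from Algorithm~\ref{alg_spg} delivers a strict decrease of $F$ with the gap controlled by $\|X_{t+1}-X_t\|_F^2$, and summing telescopes to $\sum_t\|X_{t+1}-X_t\|_F^2<\infty$.

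Next I would establish the relative-error bound: there is $v_{t+1}\in\partial F(X_{t+1})$ with $\|v_{t+1}\|_F\leq b\|X_{t+1}-X_t\|_F$. The first-order condition of the proximal step yields $\eta:=-\nabla f(X_t)-\tfrac{1}{\mu}(X_{t+1}-X_t)\in\partial u_t(X_{t+1})$, i.e.\ $\eta$ is a spectral subgradient of the \emph{fixed-weight} function $\sum_i w_{t,i}|\sigma_i(\cdot)|$ at $X_{t+1}$. The corresponding element $\eta'\in\partial g(X_{t+1})$ is obtained by re-evaluating the same spectral weights $p/(\sigma_i(\cdot)+\varepsilon)^{1-p}$ at $X_{t+1}$ rather than at $X_t$, so setting $v_{t+1}=\nabla f(X_{t+1})+\eta'$ gives $v_{t+1}=(\nabla f(X_{t+1})-\nabla f(X_t))-\tfrac{1}{\mu}(X_{t+1}-X_t)+(\eta'-\eta)$. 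The first two summands are $O(\|X_{t+1}-X_t\|_F)$ by $L$-smoothness; the last is also $O(\|X_{t+1}-X_t\|_F)$ because $\varepsilon>0$ makes $\sigma\mapsto p/(\sigma+\varepsilon)^{1-p}$ Lipschitz on the bounded range of singular values and Weyl/Mirsky's inequality bounds $|\sigma_i(X_t)-\sigma_i(X_{t+1})|$ by $\|X_t-X_{t+1}\|_F$. I expect precisely \emph{this} transfer of the subgradient estimate from the surrogate back to $F$—tracking the spectral reweighting and its effect on the limiting subdifferential—to be the main obstacle; the $\varepsilon>0$ regularization is exactly what makes it go through.

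With these two inequalities in hand the remainder is routine and is packaged by Lemma~\ref{lem:limit_point2}: the limit set is nonempty, compact, connected, contained in $\mathrm{crit}(F)$, and $F$ is finite and constant on it. Applying the KL inequality at a limit point to the decreasing, finite-valued sequence $F(X_t)$, and combining the concavity of $\varphi$ with the sufficient-decrease and relative-error bounds, yields the finite-length estimate $\sum_{t=0}^\infty\|X_{t+1}-X_t\|_F<\infty$ of (\ref{eqn:thm2_sum_bound}); finiteness of the length makes $\{X_t\}_{t\in\N}$ Cauchy and hence convergent to a single $X_*$, while the relative-error bound together with closedness of $\partial F$ forces $0\in\partial F(X_*)$, so $X_*$ is a critical point of (\ref{eqn:problem2}). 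This mirrors the argument for Theorem~\ref{thm:main1} verbatim once the surrogate has been reconciled with $F$.
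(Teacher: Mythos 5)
Your proposal is correct and follows essentially the same route as the paper: establish the KL property of $F$ in (\ref{eqn:problem2}) via semi-algebraicity for rational $p$, derive sufficient decrease of the true objective from the majorization $u_t(X)\geq g(X)$ with equality at $X_t$ (the paper's Lemma~\ref{lemma_convergence_properties2}), and control the subgradient of $F$ at $X_{t+1}$ by adding the weight-update correction term bounded via $\varepsilon>0$ and the singular-value perturbation inequality (the paper's $D_{t+1}$ in Lemma~\ref{lem_subgradient2_bound}), before running the standard KL finite-length argument. The step you flag as the main obstacle — transferring the surrogate's subgradient back to $\partial F$ — is exactly where the paper also concentrates its effort.
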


As mentioned in \citep{candes2008enhancing}, it is very important to give a good starting point $X_0$ for Algorithm~\ref{alg_spg}.
Essentially, the performance of reweighted nuclear norm is effected by the starting point significantly as the weights $\w_{t+1}$ relies on $X_t$,
and the bad starting point could give a bad guess of ideal penalties for singular values and mislead following steps.
Due to the fact that the reconstruction error of problem (\ref{eqn:problem}) with non-convex penalty (\ref{eqn:penalty}) is smaller than convex penalties in experiments, we initialize Algorithm~\ref{alg_spg} with the solution given by Algorithm~\ref{alg_pg0}.
On the other hand, smaller $p$ may not always give better results by running Algorithm~\ref{alg_spg} as the penalty becomes more "non-convex" and Algorithm~\ref{alg_spg} is more possible to be stuck in "poor" critical points, as we can see in the experiments.

\section{Solving Rank Minimization Problem with Multiple Matrices}
\begin{algorithm}[t]
\caption{Alternating Iterative Shrinkage-Thresholding Algorithm (Alter-ISTA) for Multiple
Variables } \label{alg_pg1}
\begin{algorithmic}
\STATE \textbf{Input}: Observed matrices $\{Y^d\}$ for each view and the largest
Lipschitz constant $L_{\max}$.
\STATE \textbf{Initialize}: $X^d=0$,
$\mu<1/L_{\max}$
\FOR {$t=1,2,\ldots$}
    \FOR {$d=0,\ldots,D$}
    \STATE $M^d_{t} = X^d_{t}-\mu\nabla_{X^d_{t}}\ell(X^d_{t})$
    \STATE $X^d_{t+1}=P_{h_d}^\mu(M^d_{t})$
    \ENDFOR
\ENDFOR
\end{algorithmic}
\end{algorithm}
In many data mining and machine learning tasks, we may have to optimizing more than one data matrices simultaneously, which are more complicated problems compared to what we have discussed in the previous sections.
For instance, \cite{singh2008relational} and \cite{zhangetal2010} show that the performance of learning models can be markedly improved by exploiting the data from multiple domains.
To be specific, we will focus on a matrix completion problem in multiple domains and show that the global convergence is still achievable even if the objective is non-convex and with the penalties defined as in previous sections.

In multi-domain scenarios, given observations indexed by
$\{\Omega_d,d=1,\ldots,D\}$ from $D$ domains:
$\{Y^d\in\mathbb{R}^{n\times m_{d}},\; d=1,\ldots,D\}$ where
matrices $\{Y^d\}$ are aligned in rows, correlations among the
multiple domains can be exploited to improve the quality of matrix
completion. Specifically, we assume there exist consistency shared
among multiple domains as well as independent patterns for each
separate domain. In the case of multi-domain recommendation where
matrices $\{Y^d\}$ correspond to rating matrices on different types
of items such as \emph{user}$\times$\emph{movie} and
\emph{user}$\times$\emph{book}, it is natural to assume that
\emph{users} have some mutual interests across domains, as well as
some distinct interests in each domain.

Consider the latent factors of \emph{users} and \emph{items} by
factorizing a rating matrix $X=UV^\top$, where $U$ and $V$
correspond to low-rank \emph{user$\times$latent factor} and
\emph{item$\times$latent factor} matrices. In multiple domains, the
consistent patterns can be represented by a shared
\emph{user$\times$latent factor} matrix $U$. As a consequence, the
observations in the $d$-th domain can be factorized as
$Y^d_{\Omega_d}=(U^dV^{d^\top}+\tilde{U}{\tilde{V}}^{d^\top}+\varepsilon^d)_{\Omega_d}$,
where $\tilde{X^d}=\tilde{U}{\tilde{V}}^{d^\top}$ represents shared user interests on the
$d$-th domain; and $X^d=U^dV^{d^\top}$
corresponds to domain specific user preference. The rating behaviors
of shared user interests on various domains can be summarized in the
matrix $X^0=[\tilde{X}^1,\ldots,\tilde{X}^D]=\tilde{U}\cdot[\tilde{V}^{1^\top}, \ldots, \tilde{V}^{D^\top}]$,
which is a horizontal concatenation of $\{\tilde{X}^d\}$. To learn the
shared and domain specific user interests, we apply a general
singular value regularizer $h_0 $ on $X^0$, and $h_d $ on
${X^d}$ for $d=1,\ldots,D$.
Then the optimization problem can be formulated as follows
\begin{equation}\label{eqn:problem3}
\min_{\mbox{\tiny $\begin{array}{c}
\{{X}^d\}\R^{m_d\times n_d},\\
d=0,\ldots,D
\end{array}$
}} F(X^0,\ldots, X^D) = f({X}^0,\ldots,{X}^D)+\sum_{d=0}^D
g^d({X}^d)
\end{equation}
where $D>1$.

Similar to Assumption~\ref{assump:1}, we make following assumptions for $F$ as defined in (\ref{eqn:problem3}).
\begin{assumption}\label{assump:2}
\begin{enumerate}[label=(\roman*),ref=\roman*]
\item\label{assump:b1} {Multivariate function $f(X^0,\ldots,X^D)$ is lower bounded,
    continuously differentiable, and has \emph{$L_d$-Lipschitz} continuous
    partial gradient with respect to each $X^d$, that is
    \begin{align*}
    \|\nabla_{X^d_1}f(X^0,\ldots, X^d_1, \ldots, X^D) - \nabla_{X^d_2}f(X^0,\ldots, X^d_2, \ldots, X^D)\|_F \leq L_d\|X^d_1-X^d_2\|_F,
    \end{align*}
    for all $X^d_1,\;X^d_2\in \R^{m_d\times n_d}$, $d=0,\ldots, D$.}

\item\label{assump:b2} {$\nabla f$ is Lipschitz
    continuous on bounded subsets of $\mathbb{R}^{n_0\times
    m_0}\times\ldots\times\mathbb{R}^{n_D\times m_D} \rightarrow
    \mathbb{R}$. That is, for each bounded subsets $B_0\times\ldots\times
    B_D$, there exists a constant $M>0$, such that for all $(X^0_1,\ldots,X^D_1),\;(X^0_2,\ldots,X^D_2)\in B_0
    \times\ldots\times B_D$, the following inequality holds:
    \begin{align*}
    \begin{split}
    \big\|\big(&\nabla_{X^0}\ell(X^0_1,\ldots,X^D_1)-\nabla_{X^0}\ell(X^0_2,\ldots,X^D_2),\ldots,\\
       &\nabla_{X^D}\ell(X^0_1,\ldots,X^D_2)-\nabla_{X^D}\ell(X^0_2,\ldots,X^D_2)\big)\big\|_F\\
    \leq &M\big\|\big(X^0_1-X^0_2,\ldots,X^D_1-X^D_2\big)\big\|_F.
    \end{split}
    \end{align*}}
\item\label{assump:b3} {Each penalty component $g^d:\mathbb{R}\rightarrow \mathbb{R}$
is a proper, lower bounded function.
}

\item\label{assump:b4}{
    Function $F$ has the KL property.
}
\end{enumerate}
\end{assumption}
It is easy to show that (\ref{assump:b1}) and (\ref{assump:b2}) are satisfied whenever $f$ is $C^2$ continuous.

When $g^d$ is defined as (\ref{eqn:penalty}),
following the similar analysis as in Section~\ref{sec:primary}, the convergence property can be summarized by following theorem and the proof is included in the Appendex.~\ref{sec:app3}.

\begin{thm}\label{thm:main3}
 If $\{g^d\}_{d=0,\ldots,D}$ are defined as (\ref{eqn:penalty}), all conditions in Assumption~\ref{assump:2} are hold and
a step size is chosen such that $\mu<1/L_{\max}$ where $L_{\max}$ is
the maximum of $\{L_d\}_{d=0,\ldots,D}$, then the sequence
$\{(X^0_{t},\ldots,X^D_{t})\}_{t\in \mathbb{N}}$ generated by any
alternative proximal gradient method, such as Algorithm
\ref{alg_pg1}, will have finite length and converge to a critical
point of (\ref{eqn:problem3}). That is
\item[(i)]{
    The sequence $\{Z_t\}_{t\in\mathbb{N}}$ has
    finite length,
    \begin{equation}\label{eq_theo2_1_1}
    \sum_{t=1}^\infty\|Z_{t+1}-Z_t\|_F<\infty
    \end{equation}
    }
\item[(ii)]{
    The sequence $\{Z_t\}_{t\in\mathbb{N}}$ converges
    to a critical point $Z^*$ of (\ref{eqn:problem3}).
    }

\end{thm}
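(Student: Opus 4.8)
The plan is to follow the standard Kurdyka--\L{}ojasiewicz machinery, now in the block (Gauss--Seidel) setting, exactly as in the proof of Theorem~\ref{thm:main1} but with the single proximal step replaced by the $D+1$ alternating updates of Algorithm~\ref{alg_pg1}. Writing $Z_t=(X^0_t,\ldots,X^D_t)$, I would verify the three abstract ingredients --- a sufficient-decrease inequality, a subgradient (relative-error) bound, and the limit-set properties --- and then invoke the KL property granted by Assumption~\ref{assump:2}.(\ref{assump:b4}) together with the boundedness hypothesis to obtain finite length and convergence. The per-block proximal maps $P_{h_d}^\mu$ are well defined and computable by Corollary~\ref{corollary_proximal_map}, so no further work is needed on the subproblems themselves.

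First I would establish sufficient decrease. Since $\mu<1/L_{\max}\le 1/L_d$ for every block, the descent lemma applied to the $L_d$-smooth partial function $X^d\mapsto f(\ldots,X^d,\ldots)$, combined with the optimality of the step $X^d_{t+1}=P_{h_d}^\mu(M^d_t)$, yields for each $d$
\begin{align*}
F(\ldots,X^d_{t+1},\ldots)\le F(\ldots,X^d_{t},\ldots)-\tfrac12\Big(\tfrac1\mu-L_d\Big)\|X^d_{t+1}-X^d_t\|_F^2 .
\end{align*}
Chaining these $D+1$ inequalities across one outer iteration gives $F(Z_{t+1})\le F(Z_t)-\rho\|Z_{t+1}-Z_t\|_F^2$ with $\rho=\tfrac12(\tfrac1\mu-L_{\max})>0$; since $f$ is lower bounded (Assumption~\ref{assump:2}.(\ref{assump:b1})) and each $g^d$ is lower bounded (\ref{assump:b3}), summing over $t$ already forces $\sum_t\|Z_{t+1}-Z_t\|_F^2<\infty$.

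Next I would construct, from the first-order optimality condition of each block subproblem, an explicit element of $\partial F(Z_{t+1})$. Its $d$-th block equals $\nabla_{X^d}f(Z_{t+1})-\nabla_{X^d}f(\widetilde Z^d_t)-\tfrac1\mu(X^d_{t+1}-X^d_t)$, where $\widetilde Z^d_t$ is the partially updated point at which the $d$-th gradient was actually evaluated. Invoking Assumption~\ref{assump:2}.(\ref{assump:b2}), the Lipschitz continuity of $\nabla f$ on the bounded set containing $\{Z_t\}$, to control each gradient discrepancy by $M\|Z_{t+1}-Z_t\|_F$, one obtains $\dist(0,\partial F(Z_{t+1}))\le b\|Z_{t+1}-Z_t\|_F$ for some $b>0$. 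Together with boundedness and the decrease estimate, an argument identical to Lemma~\ref{lem:limit_point2} shows that $\mathrm{limit}(Z_0)$ is nonempty, compact, connected, contained in $\mathrm{crit}(F)$, and that $F$ is constant there.

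Finally, applying the uniformized KL inequality on this limit set and the usual telescoping/concavity argument --- bounding $\|Z_{t+1}-Z_t\|_F$ by successive differences of $\varphi(F(Z_t)-F^*)$ via the two estimates above --- yields $\sum_t\|Z_{t+1}-Z_t\|_F<\infty$, which is (\ref{eq_theo2_1_1}); finite length makes $\{Z_t\}$ Cauchy, hence convergent, and its limit lies in $\mathrm{crit}(F)$, giving (ii). I expect the only genuinely new difficulty relative to the single-variable case to be the subgradient bound: because the blocks are updated in Gauss--Seidel order, $\nabla_{X^d}f$ is evaluated at the partially updated $\widetilde Z^d_t$ rather than at $Z_{t+1}$, so the stronger bounded-set Lipschitz hypothesis (\ref{assump:b2}), not merely the per-block constants (\ref{assump:b1}), is precisely what closes the relative-error estimate uniformly across the $D+1$ blocks.
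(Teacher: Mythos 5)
Your proposal is correct and follows essentially the same route as the paper's proof in Appendix~\ref{sec:app3}: per-block sufficient decrease summed over $d=0,\ldots,D$ with $\rho=\mu^{-1}-L_{\max}$, a subgradient bound for the concatenated blocks that uses the bounded-set Lipschitz hypothesis of Assumption~\ref{assump:2}.(\ref{assump:b2}) precisely because the $d$-th partial gradient is evaluated at the partially updated point, the limit-set properties, and then the uniformized KL/telescoping argument of Theorem~\ref{thm:main1}. You also correctly isolate the only genuinely new ingredient relative to the single-variable case, namely closing the relative-error estimate across the Gauss--Seidel blocks.
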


It is not hard to see that multivariate  reweighted penalties, e.g. $g^d$ is defined as (\ref{eqn:penalty2}) for $d=0,\ldots, D$, can also be solved  by revising Algorithm~\ref{alg_spg} into an alternating framework, which is given in Algorithm~\ref{alg_spg1}. This result is summarized by following corollary.

\begin{algorithm}[t]
\caption{Alternating Iterative Shrinkage-Thresholding and Reweighted Algorithm (Alter-ISTRA) for Multiple
Variables } \label{alg_spg1}
\begin{algorithmic}
\STATE \textbf{Input}: Observed matrices $\{Y^d\}$ for each view, the largest
Lipschitz constant $L_{\max}$ and approximation parameter $\varepsilon$.
\STATE \textbf{Initialize}: $X^d=0$,
$\mu<1/L_{\max}$
\FOR {$t=1,2,\ldots$}
    \FOR {$d=0,\ldots,D$}
    \STATE $M^d_{t} = X^d_{t}-\mu\nabla_{X^d_{t}}\ell(X^d_{t})$
    \STATE $X^d_{t+1}=P_{u^d_t}^\mu(M^d_{t})$ where $u^d_t$ is defined in (\ref{eqn:proximal1}).
    \ENDFOR
\ENDFOR
\end{algorithmic}
\end{algorithm}
\begin{corollary}
Suppose that $p$ is rational and all conditions in Assumption~\ref{assump:2} are hold. Let $\{Z_t\}_{t \in \N}$ be the sequence generated by Algorithm~\ref{alg_spg1}, which is bounded. We have that
\begin{equation}\label{eqn:thm3_sum_bound}
\sum_{t = 0}^\infty \|Z_{t+1}-Z_{t}\|_F <\infty
\end{equation}
and $\{Z_t\}_{t \in \N}$ converges to a critical point $Z_*$ of $F$ with $g^d$ defined by (\ref{eqn:penalty2}) for all $d=0,\ldots,D$.
\end{corollary}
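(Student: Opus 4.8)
The plan is to read this corollary as the common refinement of Theorem~\ref{thm:main2}, which handles the single-matrix reweighted penalty by majorization--minimization, and Theorem~\ref{thm:main3}, which handles the alternating multi-variable scheme. Accordingly I would prove it by checking, for the genuine non-convex objective
\begin{equation*}
F(X^0,\ldots,X^D) = f(X^0,\ldots,X^D) + \sum_{d=0}^D \sum_{i=1}^{n_d} \big(|\sigma_i(X^d)|+\varepsilon\big)^p ,
\end{equation*}
the multi-block analogue of (\ref{eqn:problem2}), the three abstract ingredients (sufficient decrease, relative error, and KL) that drive the limit-point analysis of Lemma~\ref{lem:limit_point2}. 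Writing $Z_t=(X^0_t,\ldots,X^D_t)$, the first step is to verify that this $F$ is a KL function: since $p$ is rational, each term $(|\sigma_i(X^d)|+\varepsilon)^p$ is semi-algebraic in $X^d$ by the auxiliary-function and Tarski--Seidenberg device of Lemma~\ref{lem:multi_KL_property} together with Proposition~\ref{proposition:semi_alge}.(\ref{proposition:semi_alge1})--(\ref{proposition:semi_alge4}); adding these finitely many summands to $f$ preserves semi-algebraicity and hence the KL property required in Assumption~\ref{assump:2}.(\ref{assump:b4}).

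The second step is to make the majorization--minimization structure explicit in the alternating setting. Each inner update $X^d_{t+1}=P^\mu_{u^d_t}(M^d_t)$ minimizes the surrogate obtained by linearizing $f$ in the $d$-th block and replacing $g^d$ by its tangent upper bound $u^d_t$ of the form (\ref{eqn:upper_bound}), whose weights are frozen at $\sigma(X^d_t)$ as in (\ref{eqn:penalty2}); because $s\mapsto(s+\varepsilon)^p$ is concave, $u^d_t$ satisfies $u^d_t(X^d_t)=g^d(X^d_t)$ and $u^d_t(X)\ge g^d(X)$ for all $X$. These two facts transfer the decrease of the strongly convex surrogate $\Fh$ to the true objective through the sandwich that $\Fh$ majorizes $F$ with equality at $X^d_t$; summing the resulting per-block decreases over $d=0,\ldots,D$ within one outer sweep and using $\mu<1/L_{\max}$ yields
\begin{equation*}
F(Z_{t+1})\le F(Z_t)-\tfrac{1}{2}\Big(\tfrac{1}{\mu}-L_{\max}\Big)\sum_{d=0}^D\|X^d_{t+1}-X^d_t\|_F^2 .
\end{equation*}

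The third and hardest step is the relative-error estimate $\dist(0,\partial F(Z_{t+1}))\le b\,\|Z_{t+1}-Z_t\|_F$. The optimality condition of each surrogate gives $0\in\nabla_{X^d}f(\cdots)+\tfrac1\mu(X^d_{t+1}-X^d_t)+\partial\big(\textstyle\sum_i w_{t,i}|\sigma_i(\cdot)|\big)(X^d_{t+1})$, and to assemble an element of $\partial F(Z_{t+1})$ I must correct two mismatches: the partial gradients of $f$ are taken at staggered block iterates rather than at $Z_{t+1}$, and the subgradient of the frozen-weight nuclear norm at $X^d_{t+1}$ differs from $\partial g^d(X^d_{t+1})$ because its weights are evaluated at $X^d_t$ rather than at $X^d_{t+1}$. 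The first mismatch is absorbed by the bounded-set Lipschitz property of $\nabla f$ in Assumption~\ref{assump:2}.(\ref{assump:b2}); the second by Lipschitz continuity of the weight map $\sigma\mapsto p/(\sigma+\varepsilon)^{1-p}$, which is uniform over the bounded sequence precisely because $\varepsilon>0$ keeps the denominator bounded away from zero, combined with the $1$-Lipschitz dependence of singular values on $X^d$.

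The main obstacle is exactly this coupling: the frozen-weight error of the reweighting and the staggered-iterate error of the alternation must both be collapsed into a single constant multiple of $\|Z_{t+1}-Z_t\|_F$, which requires the reweighting analysis of Theorem~\ref{thm:main2} and the block analysis of Theorem~\ref{thm:main3} to be carried simultaneously rather than in sequence. Once the descent and relative-error inequalities are established and $F$ is known to be KL, the finite-length bound (\ref{eqn:thm3_sum_bound}) and convergence of $\{Z_t\}$ to a critical point $Z_*$ of $F$ follow verbatim from Lemma~\ref{lem:limit_point2} and the argument proving Theorem~\ref{thm:main3}.
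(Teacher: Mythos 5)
Your proposal is correct and follows exactly the route the paper intends: the paper gives no explicit proof of this corollary, asserting only that it follows by combining the reweighted analysis (Appendix~\ref{sec:app2}, where the frozen-weight correction $D_{t+1}$ is bounded by $\frac{(1-p)pn}{\varepsilon^{2-p}}\|X_{t+1}-X_t\|_F$) with the alternating block analysis (Appendix~\ref{sec:app3}, where the staggered partial gradients are absorbed via the bounded-set Lipschitz constant $M$). Your identification of the two error sources that must be merged into a single relative-error constant, plus the semi-algebraicity check for rational $p$, is precisely the combination the paper leaves implicit.
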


\section{Computational Difficulties and Solutions}\label{sec:accelerate}
The most time consuming part of above algorithms is an SVD computation in each
iteration, which makes its scalability an issue in real-world
applications. To accelerate the convergence, we use line-search to
choose $\eta_{t}$ instead of a constant step size. Specifically,
one can decrease $\eta_{t}$ by $\eta_{t}=\mu\eta_{t-1},\;
\mu<1$ and make sure the inequality
\begin{equation}\label{eq_linesearch}
\ell(X_{t+1})\leq \ell(X_{t})-\sigma\|X_{t+1}-X_{t}\|_F^2,\sigma\in(0,1)
\end{equation}
is strictly satisfied until $\eta_{t+1}<1/L_{\max}$, which is known as backtracking \citep{beck2009fast}. In the
meantime, a larger step size would lead to fewer positive components
when solving shrinkage-thresholding problems, which implies lower
rank of $X_{t+1}$ and fewer singular values to compute. The
convergence is still promised by this strategy.

Furthermore, as we observed from the convergent sequence in experiments, the rank
would start and decrease from a large number which entails inefficient
computation at the beginning. We use a decreasing sequence
$\{\tau_0,\ldots,\tau_l\}$ with $\tau_l = 1$ to reduce the number
of singular values above the threshold. In each iteration, the
proximal map is computed as $P_h^{\tau_{(t)}\mu_{(t)}}(M_{(t)})$. It
is clear that the convergence property is not affected as
$\{\tau_i\}$ is a finite sequence.
Besides, stochastic SVD
\citep{shamir2015stochastic} is also a practical approach to compute
singular values for large datasets.

\section{Experiments}
In this section, we conduct experiments on the matrix completion task
with both synthetic and real data.


\subsection{Synthetic Data}
We first compare the Algorithm~\ref{alg_pg0} and Algorithm~\ref{alg_spg}, ISTA and ISTRA respectively, with four commonly used matrix completion methods,
among which SVT \citep{cai2010singular}, APGL \citep{toh2010APGL} are based on the nuclear norm, SVP \citep{jain2010guaranteed}
adopts nuclear norm with affine constrains,
and TNNR \citep{hu2013fast}\footnote{The code is from \url{https://github.com/xueshengke/TNNR}.},
denoted by TNNR\_origin in this section,
is the state-of-the-art nonconvex algorithm using  the truncated nuclear norm.
The best results of algorithms in \citep{hu2013fast} are reported to make a fair comparison and to insure the convergence, we enlarge the maximum number of iteration of inner loop from 200 to 1000.
All algorithms are well tuned, e.g. penalty parameters are chosen between $[1:10:1000]$, to achieve the best performances. The stopping criterion is $\|X_{t+1}-X_t\|_F/\|X_0\|_{\Omega}\leq 10^{-4}$, where start point $X_0$ is chosen to be observed matrix for all methods to make a fair comparison.

We generate synthetic $m\times n$ matrix by $M+aZ$, where $M$ is the ground truth matrix of rank $b$,
$Z$ is Gaussian white noise, and $a$ controls the noise level.
$M$ is generated by $M=AB$, where $A\in \mathbb{R}^{m\times b}$ and $B\in \mathbb{R}^{b\times n}$ both have i.i.d. Gaussian entries.
The set of observed entries $\Omega$ is uniformly sampled. We adopt the widely used measure called relative error ($RE = \|X^\ast-M \|_F/\|M\|_F$)
to evaluate the accuracy of the recovered matrix $X^\ast$.
All reported results are the averages of 10 rounds to avoid the negative effects of randomness. When observed ratio is less than $20\%$, we also tuned $\w_{1:rank}$ between $[1:1:10]$ for ISTA and ISTRA to achieve better performances.

\begin{figure*}[htbp]
 \centering
\includegraphics[scale=0.28]{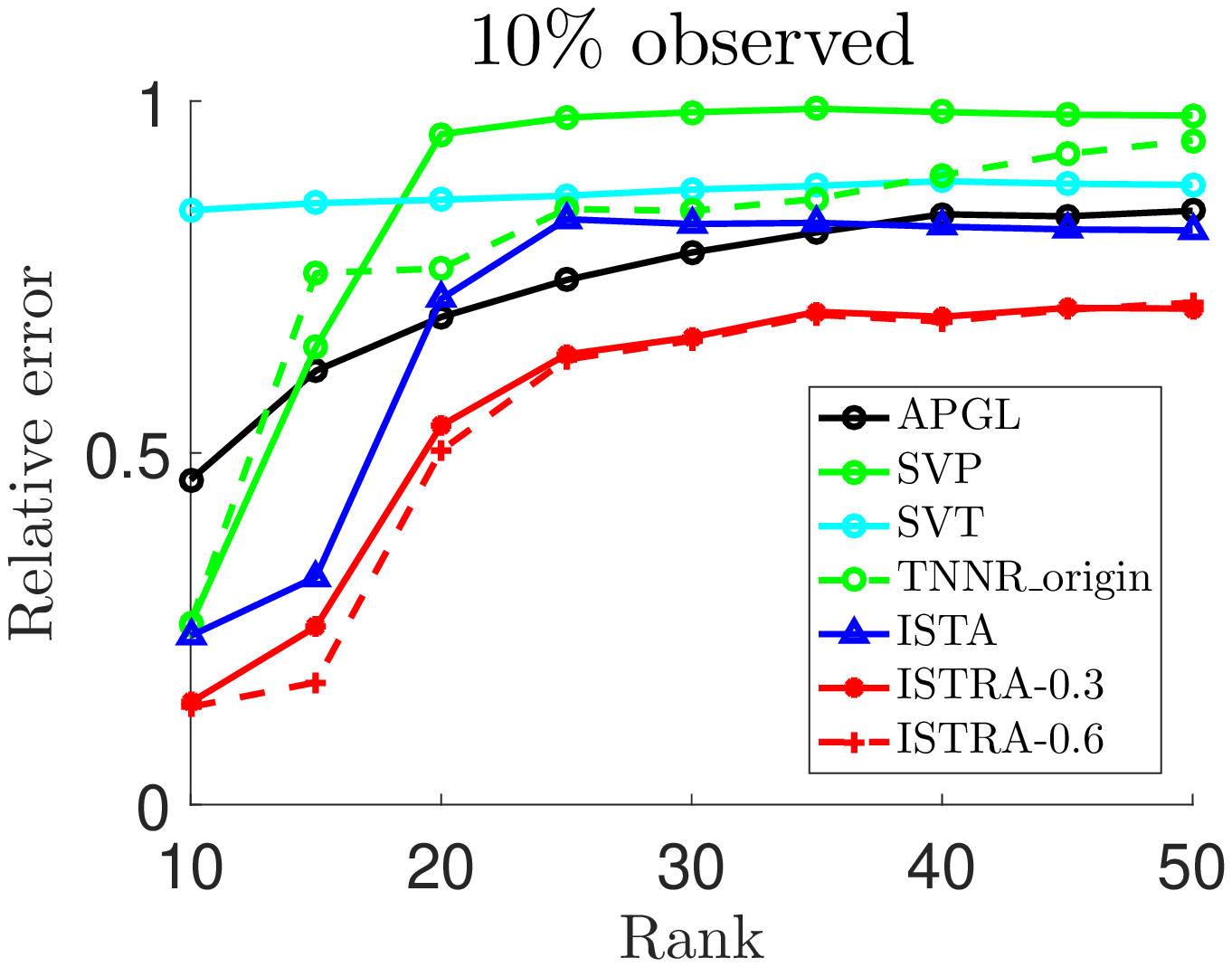}\hspace*{\fill}
\includegraphics[scale=0.28]{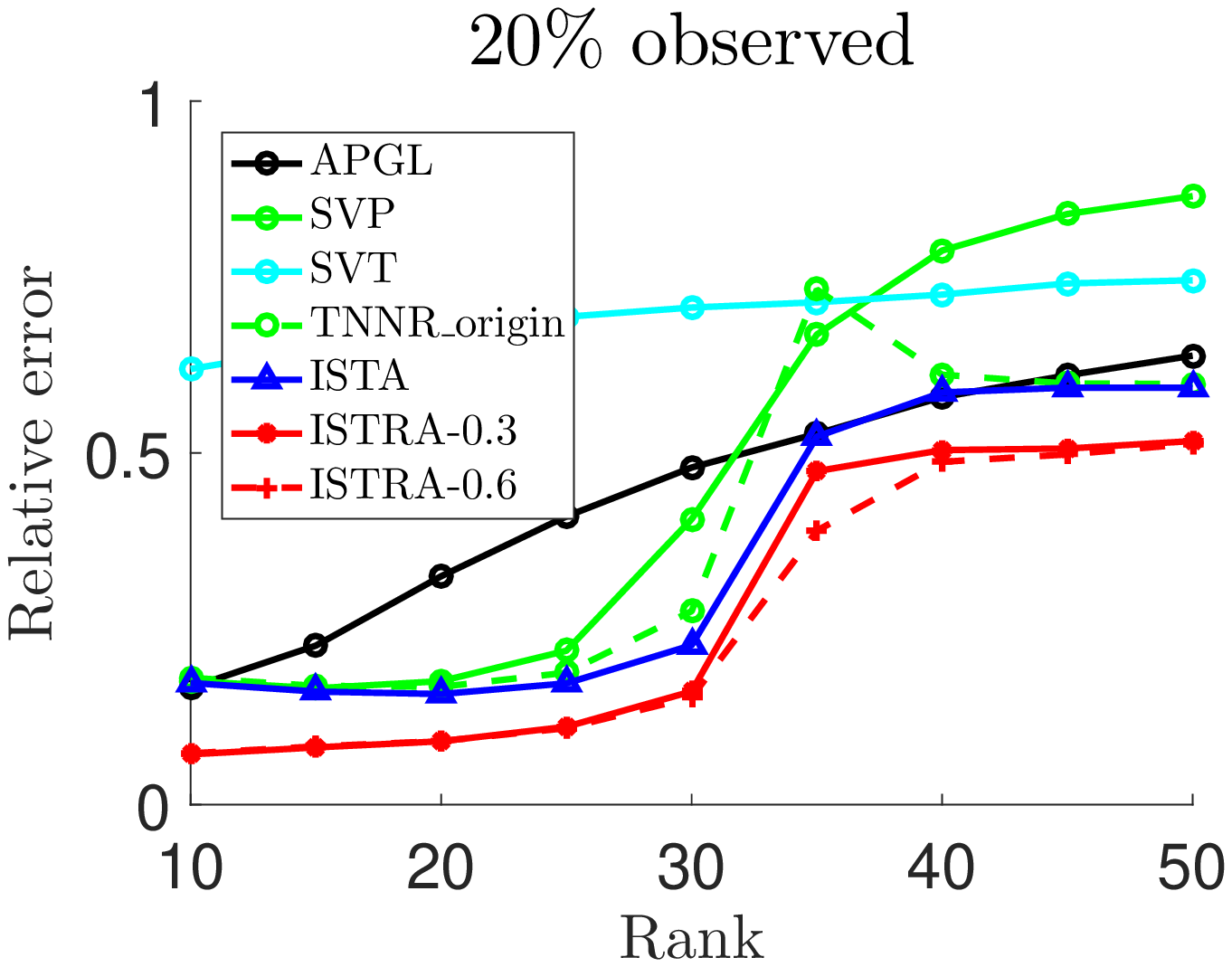}\hspace*{\fill}
\includegraphics[scale=0.28]{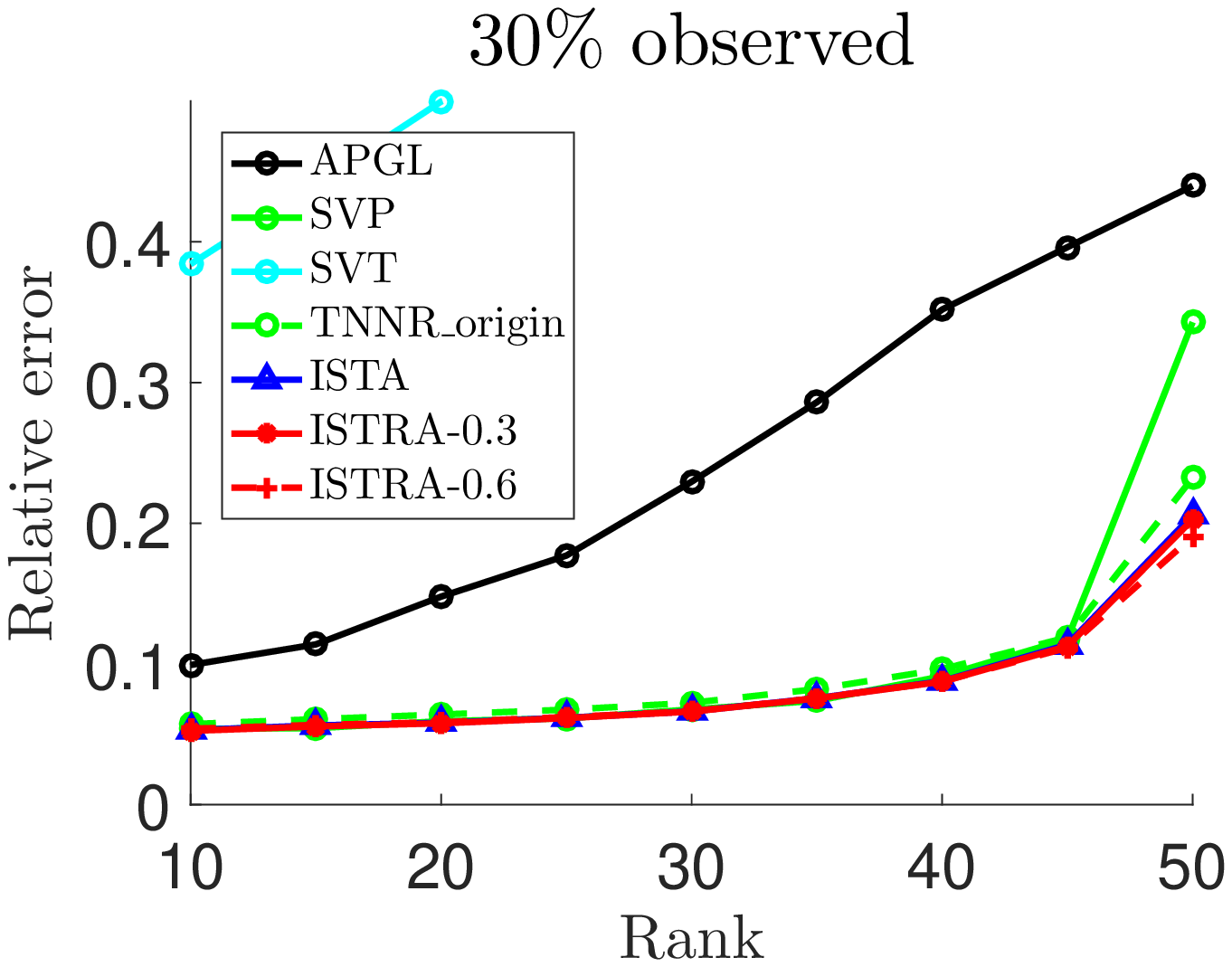}\hspace*{\fill}
\includegraphics[scale=0.28]{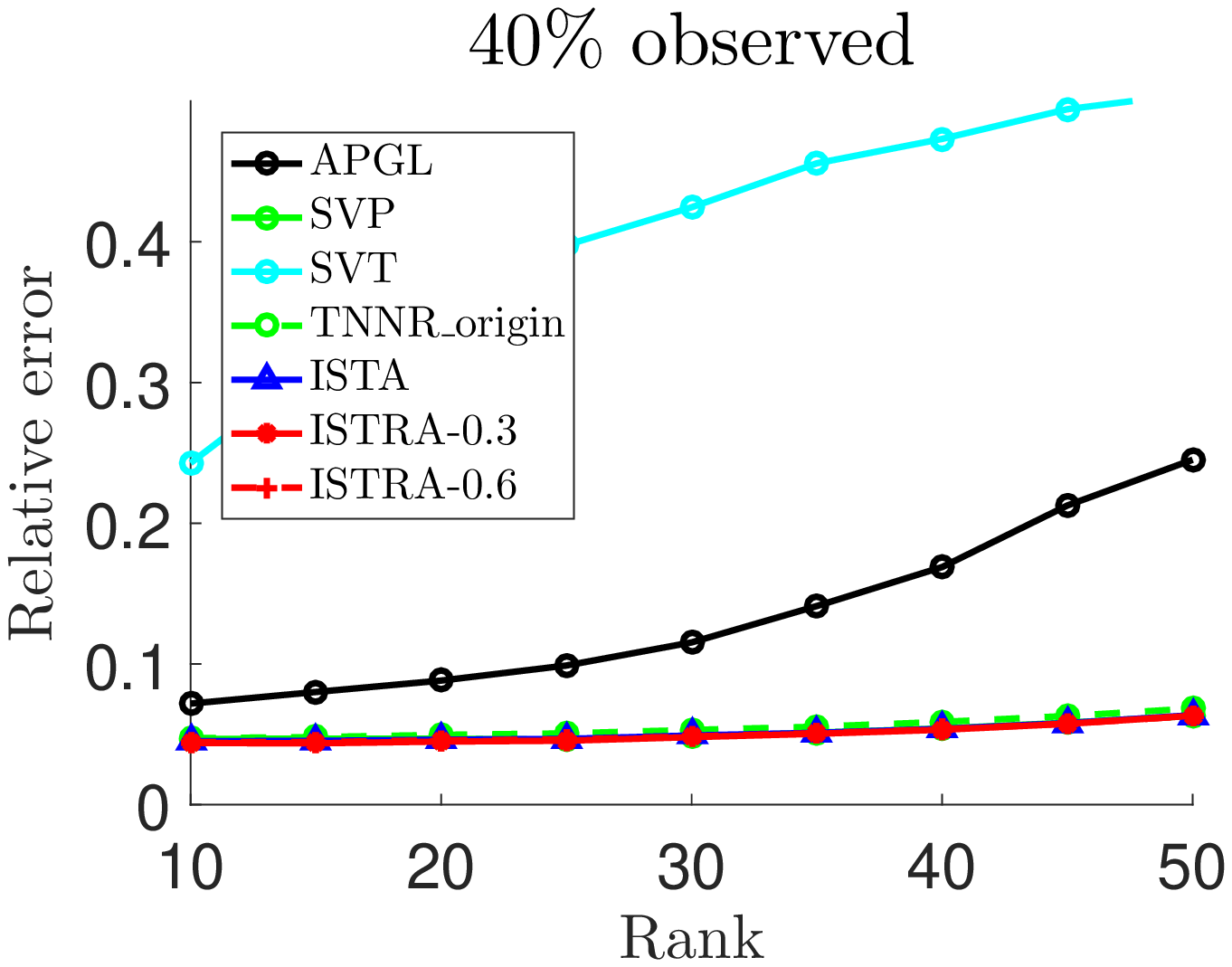}
\caption{{Relative error versus rank
with different observations}}
\label{Fig.all_rank}
\end{figure*}
\begin{figure*}[htbp]
 \centering
\includegraphics[scale=0.28]{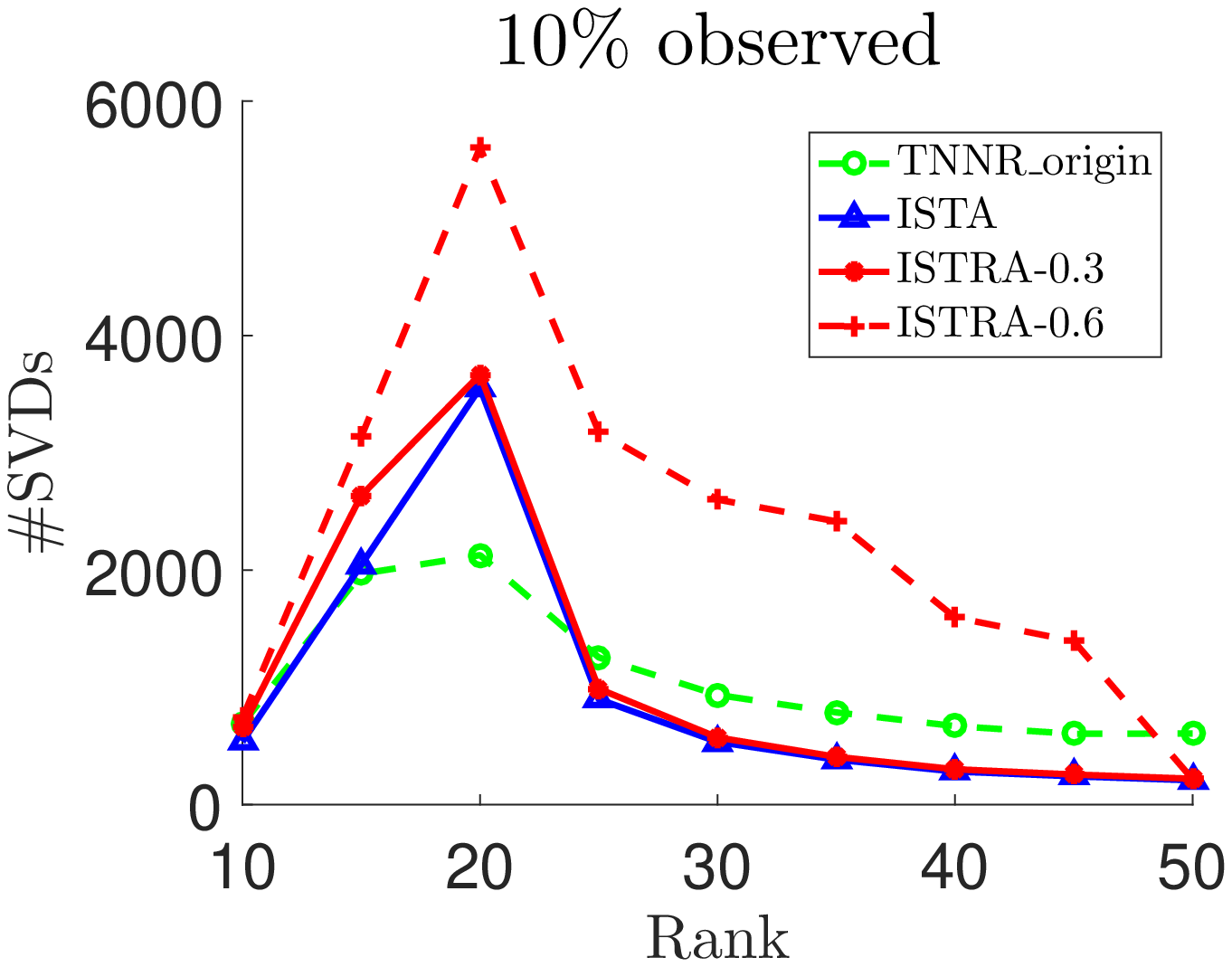}\hspace*{\fill}
\includegraphics[scale=0.28]{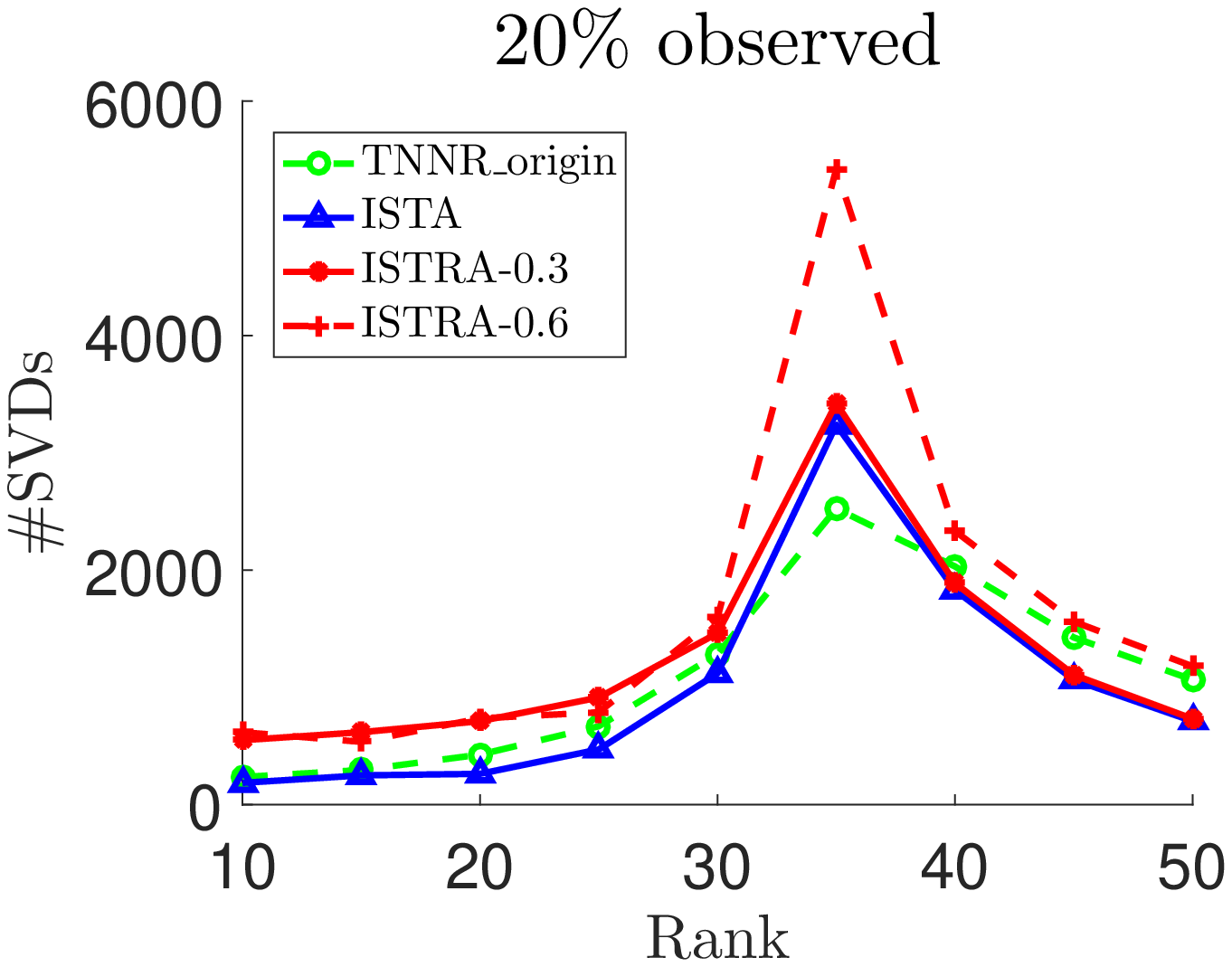}\hspace*{\fill}
\includegraphics[scale=0.28]{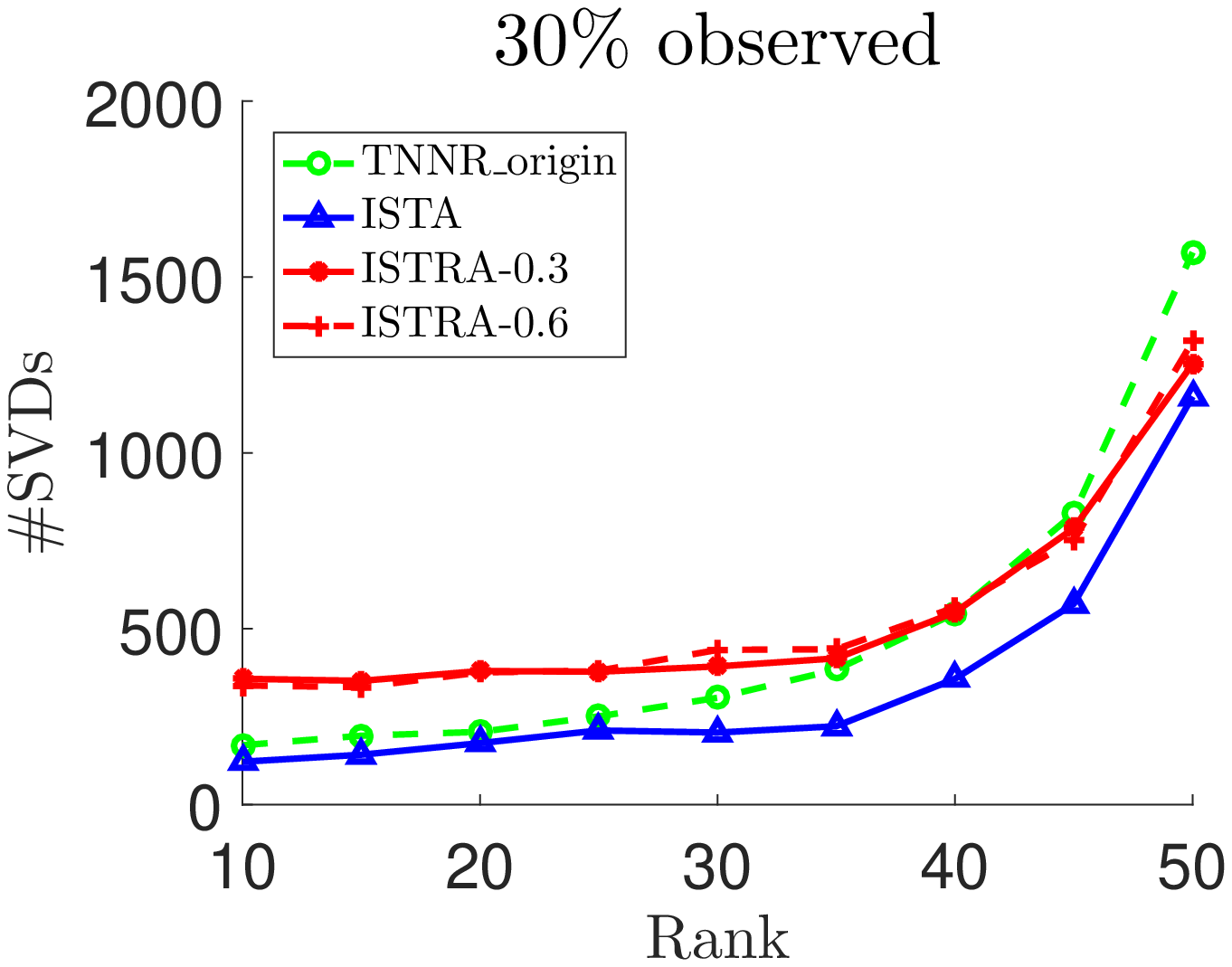}\hspace*{\fill}
\includegraphics[scale=0.28]{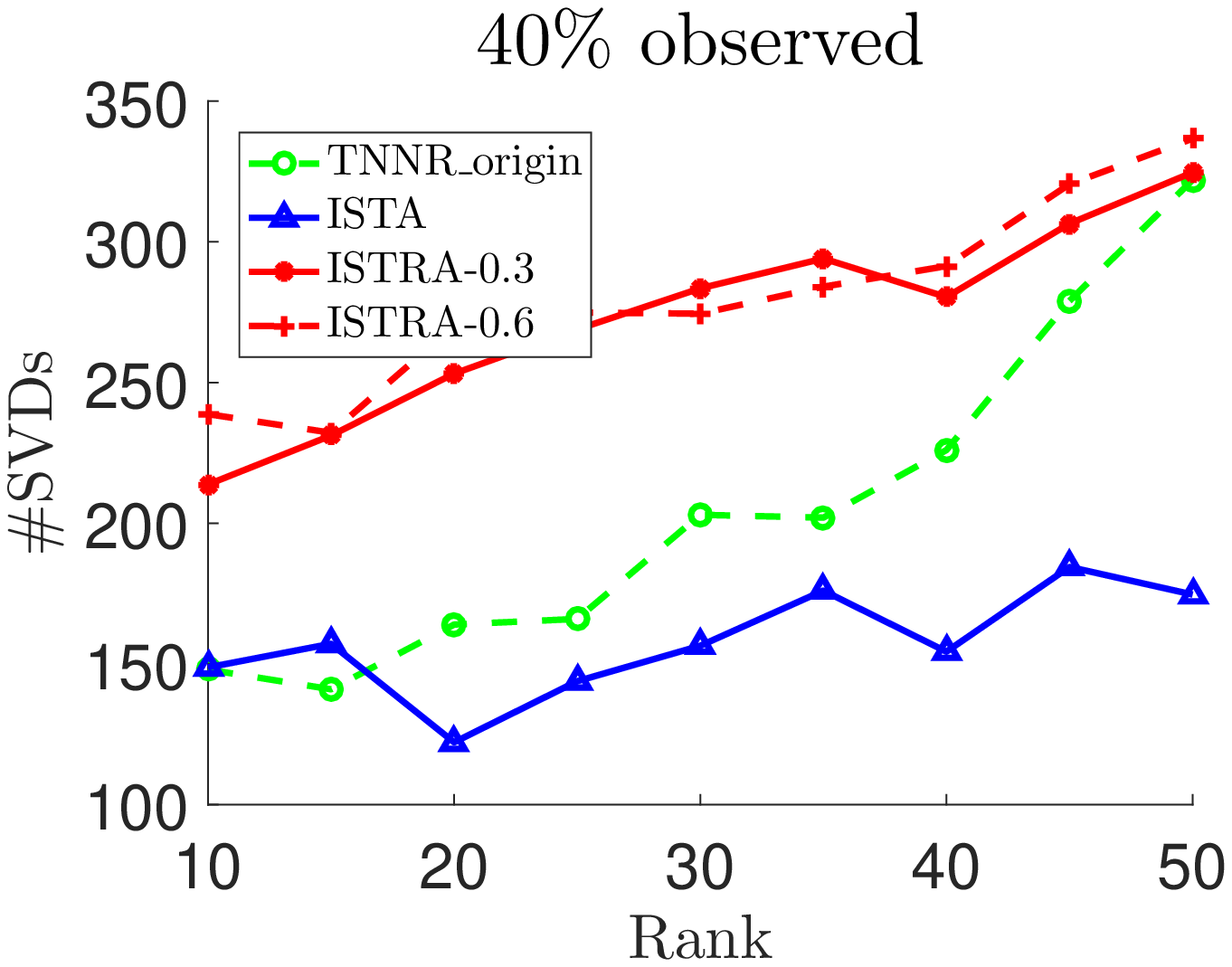}
\caption{{Number of SVD computations versus rank
with different observations}}
\label{Fig.all_rank_svd}
\end{figure*}

First, we fix the matrix size and noise level to be $400\times 300$, $a=0.5$ respectively, and change the rank
with different observed ratios.
The results are shown in Figure \ref{Fig.all_rank}.
Next, we fix the matrix size and rank to be $400\times 300$, $b=30$ respectively, and change the noise level
with different observed ratios.
 We found that that all algorithms were failed in recovering matrices when observed ratio is $10\%$.
As a result, the starting ratio is raised up to $20\%$. The results has been shown in Figure \ref{Fig.all_noise}.
To verify the computational effectiveness, the number of SVDs for TNNR\_origin, ISTA and ISTRA are shown in Figure~\ref{Fig.all_rank_svd} and Figure~\ref{Fig.all_noise_svd} with same settings as in Figure~\ref{Fig.all_rank} and Figure~\ref{Fig.all_noise} respectively. To make a fair comparison, the number of SVD computations for preprocessing the starting point for ISTRA has been added.
\begin{figure}[htbp]
 \centering
\includegraphics[scale=0.28]{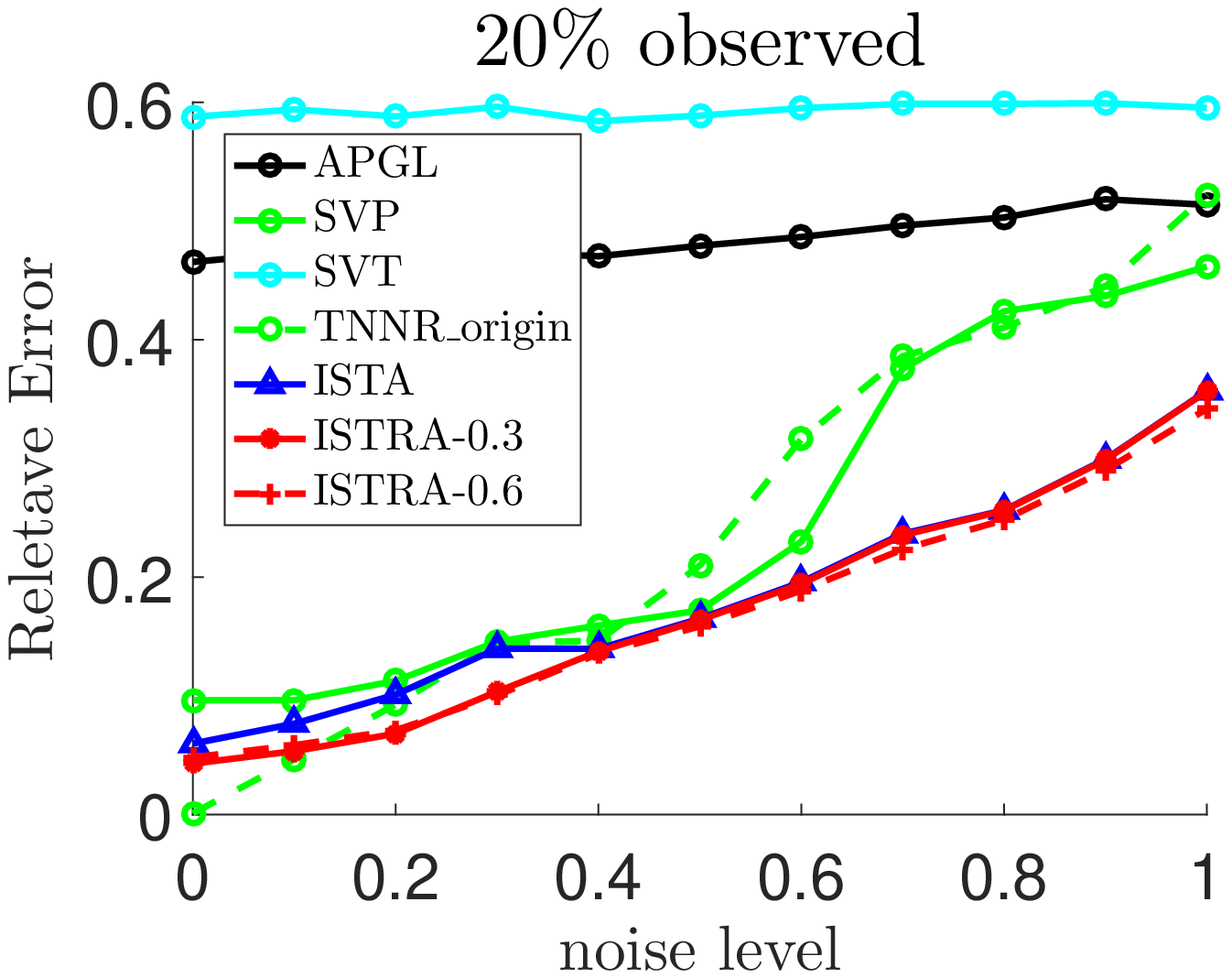}\hspace*{\fill}
\includegraphics[scale=0.28]{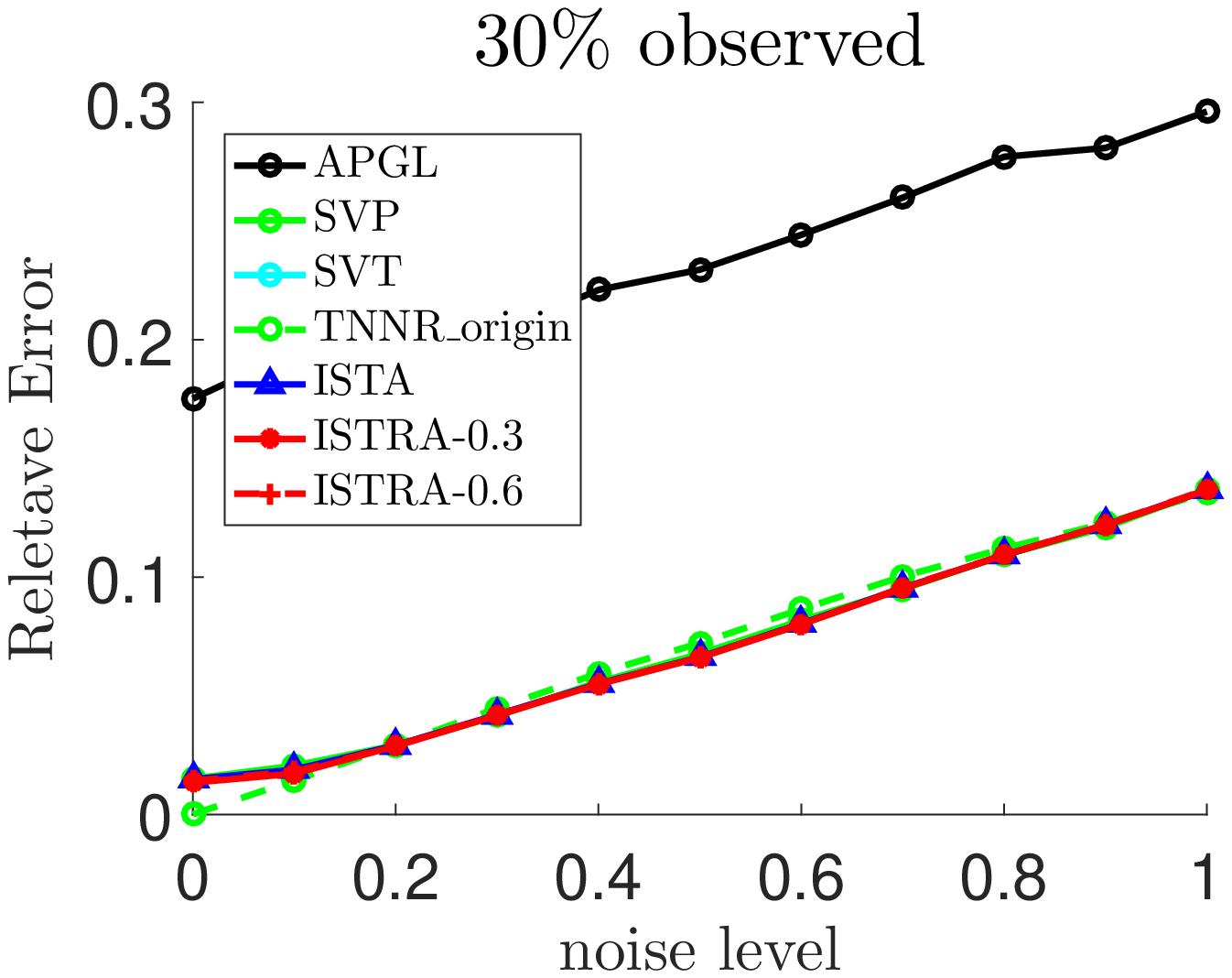}\hspace*{\fill}
\includegraphics[scale=0.28]{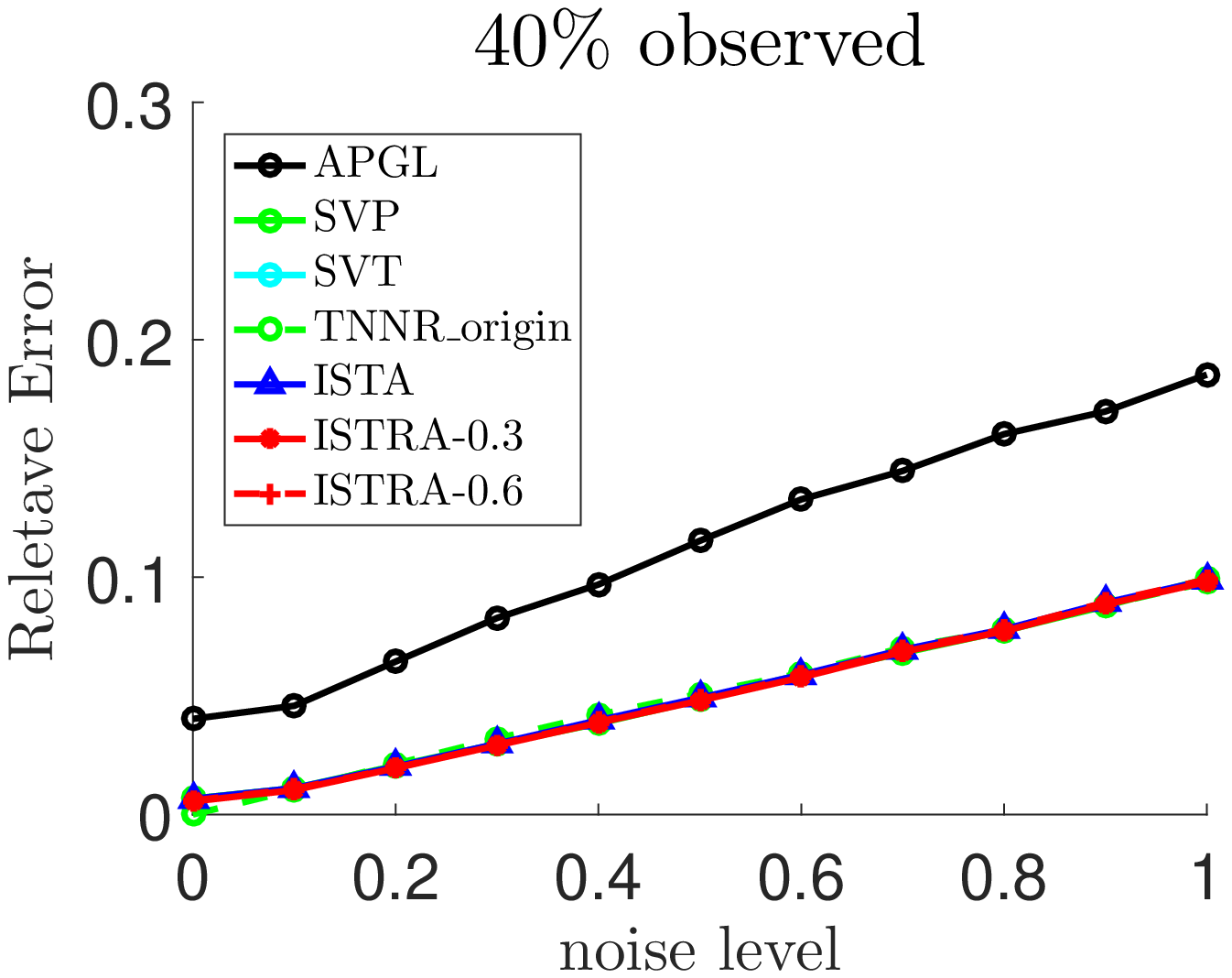}\hspace*{\fill}
\includegraphics[scale=0.28]{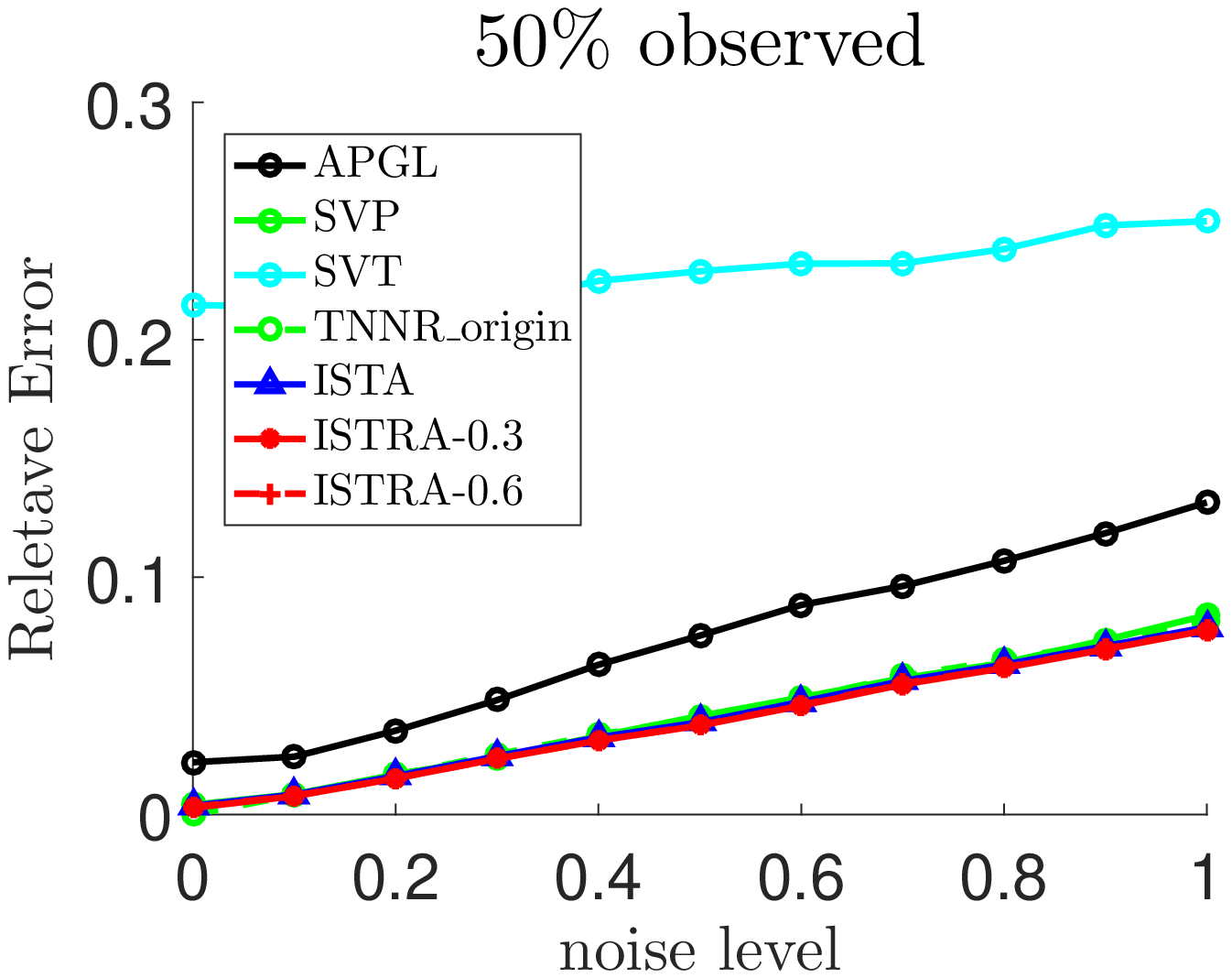}
\caption{{Relative error versus noise
with different observations}}
\label{Fig.all_noise}
\end{figure}
\begin{figure*}[htbp]
 \centering
\includegraphics[scale=0.28]{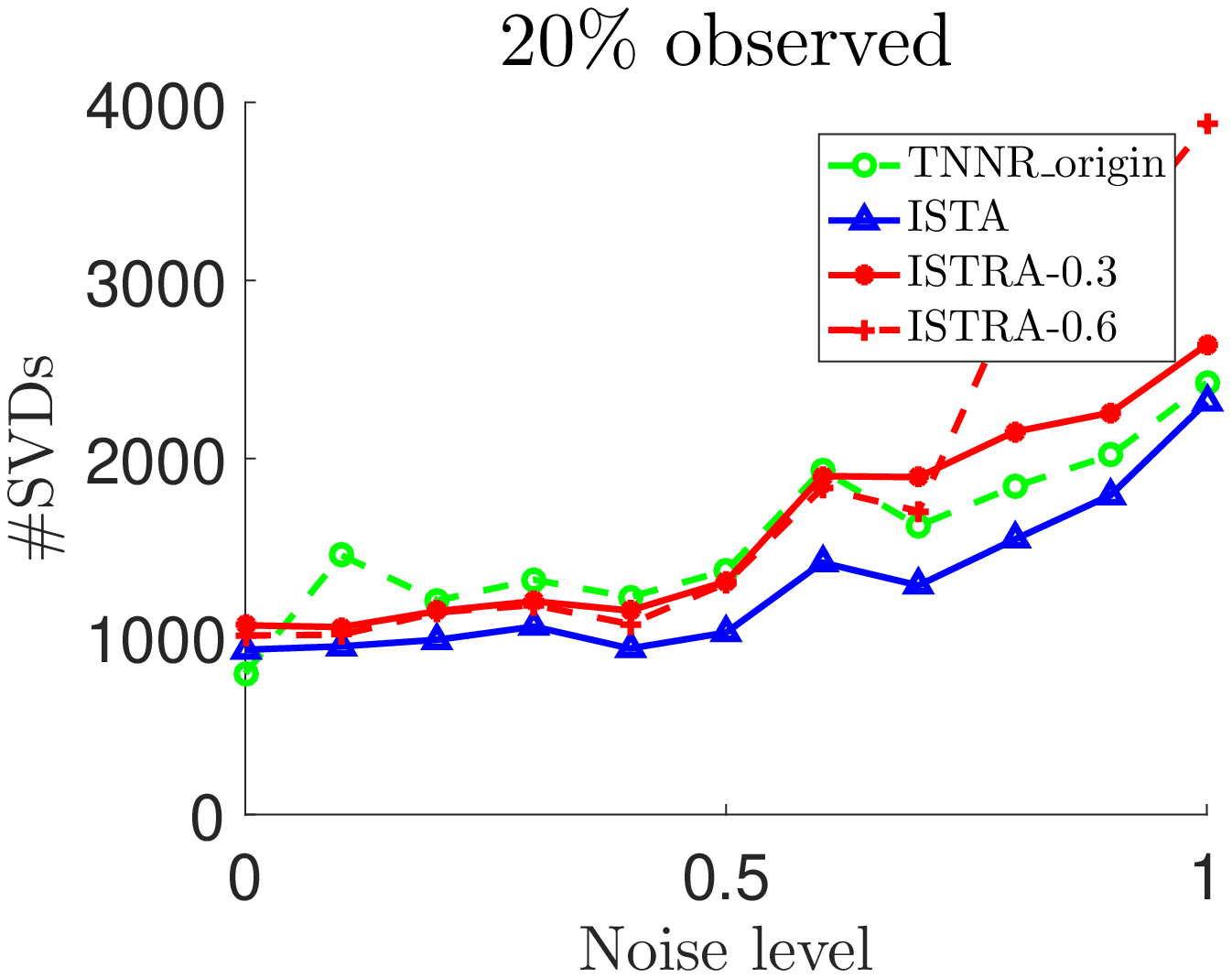}\hspace*{\fill}
\includegraphics[scale=0.28]{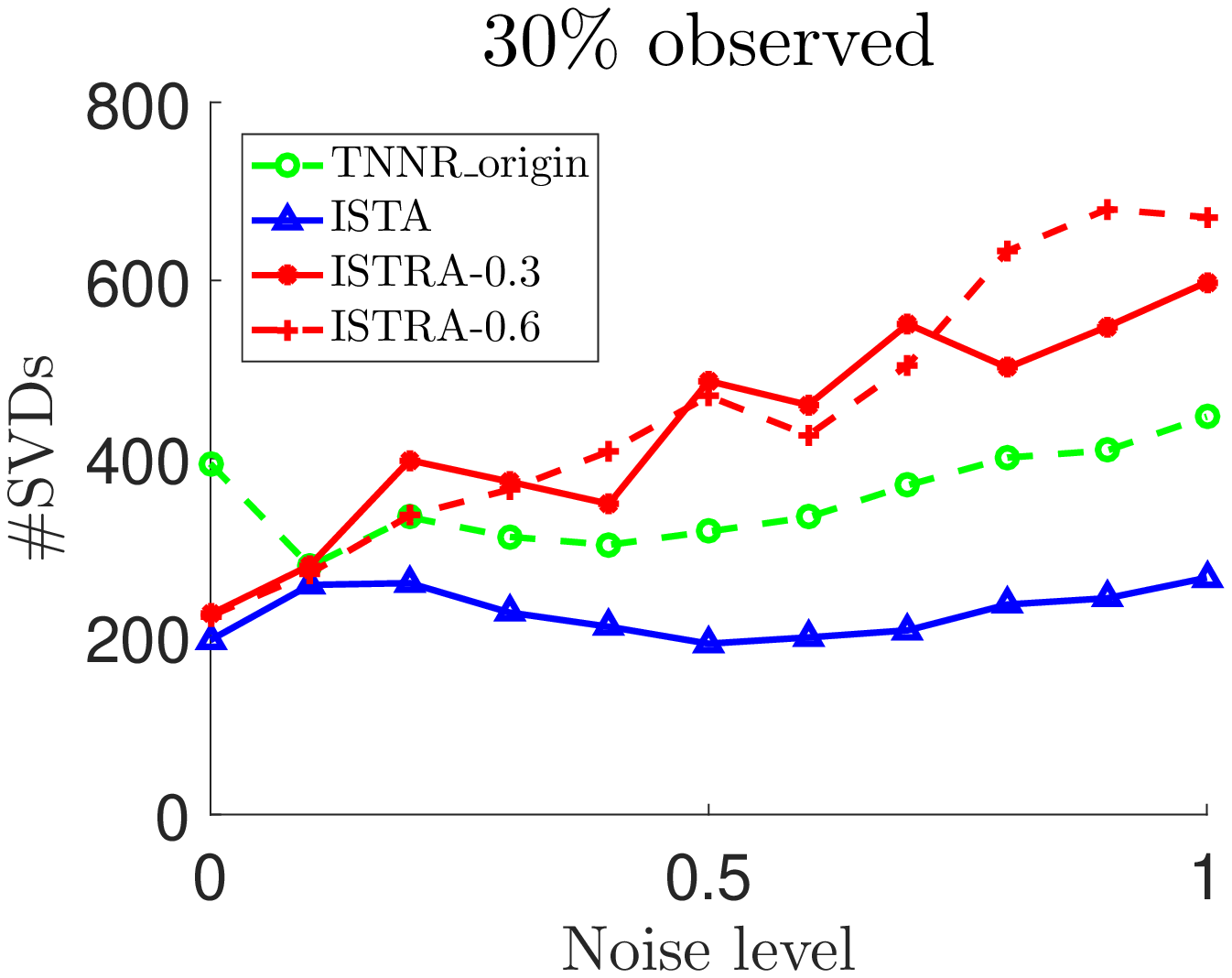}\hspace*{\fill}
\includegraphics[scale=0.28]{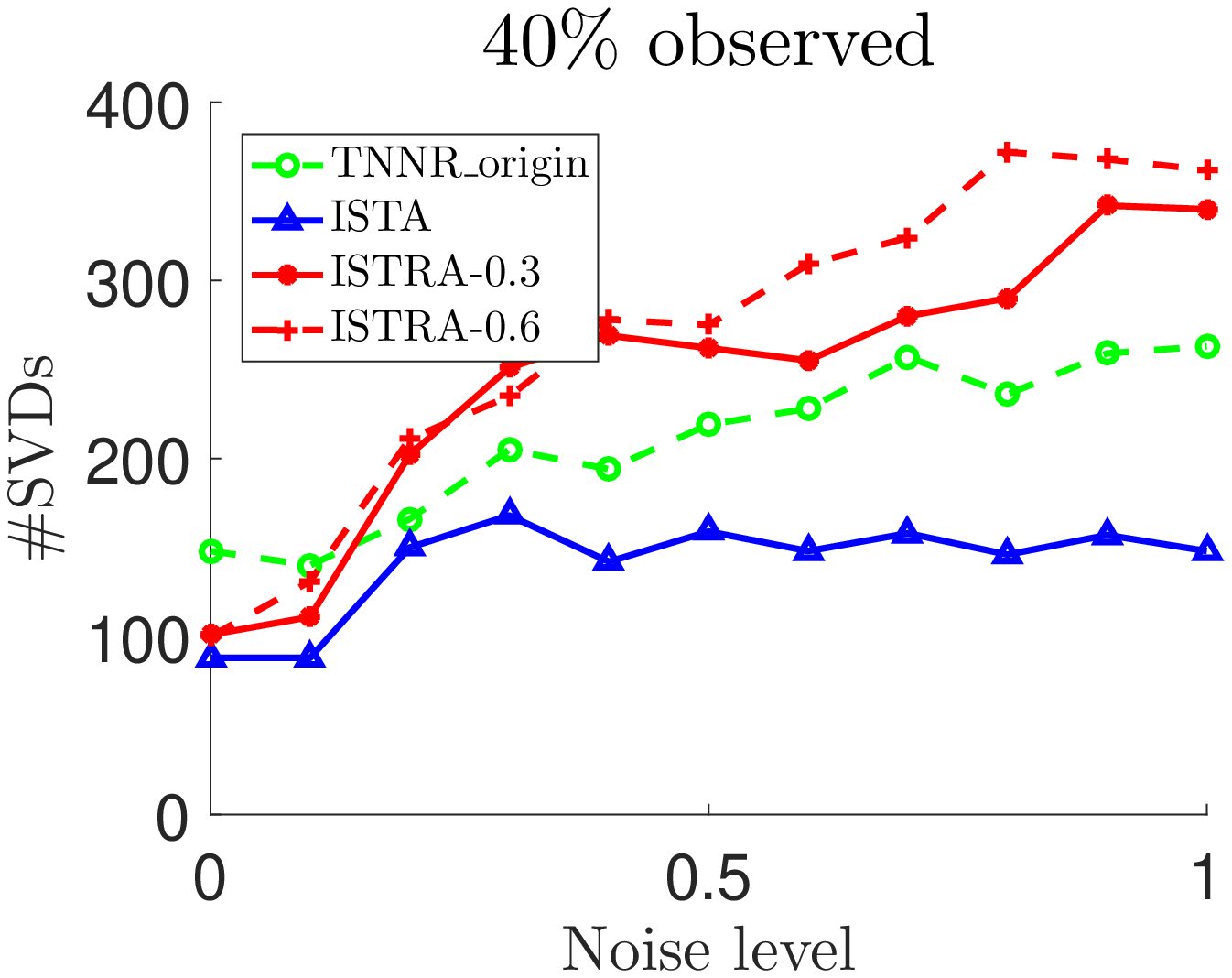}\hspace*{\fill}
\includegraphics[scale=0.28]{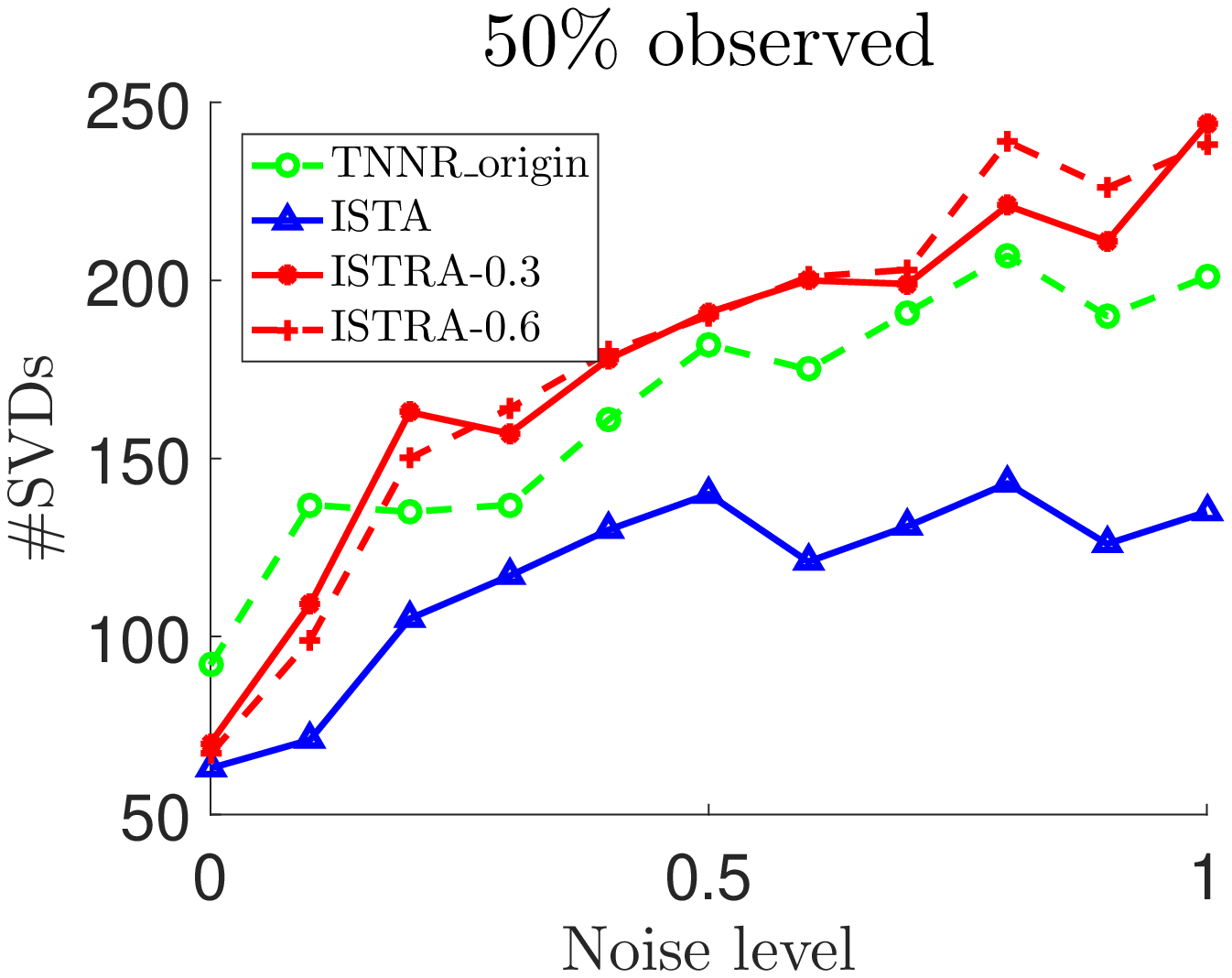}
\caption{{Number of SVD computations versus noise
with different observations}}
\label{Fig.all_noise_svd}
\end{figure*}

As can be observed from Figure \ref{Fig.all_rank}-\ref{Fig.all_noise_svd}, the proposed ISTA and ISTRA are more robust to noise and more reliable as the underlying rank and noise increases.
Particularly, our algorithms have notable advantages when problem becomes harder (less entries or entries with larger noise are observed), and therefore is able to survive more corrupted data,
which will significantly enhance the low rank recovery in real applications.
Compared with TNNR\_origin, we can see that ISTA needs fewer SVD computations to converge and gives comparable or more accurate solutions in the most settings, which shows the correctness of our theory and make the proposed algorithms more appealing in the real-world applications.
By comparing ISTA and ISTRA, we can find that ISTRA can continue making progress when ISTA has stopped.
Although small $p$ produces better approximation of rank function, it makes algorithm more likely be stuck in poor solutions, which also result in fewer SVDs to make progress as we can see from Figure~\ref{Fig.all_rank_svd}, \ref{Fig.all_noise_svd}.
In Figure~\ref{Fig.all_rank_svd}, the peaks arise for all algorithms when underlying truths first become not achievable, in which case ISTA and ISTRA are still attempting to complete the matrix with more iterations and gives better solutions.
\begin{figure}[htbp]
 \centering
\includegraphics[scale=0.28]{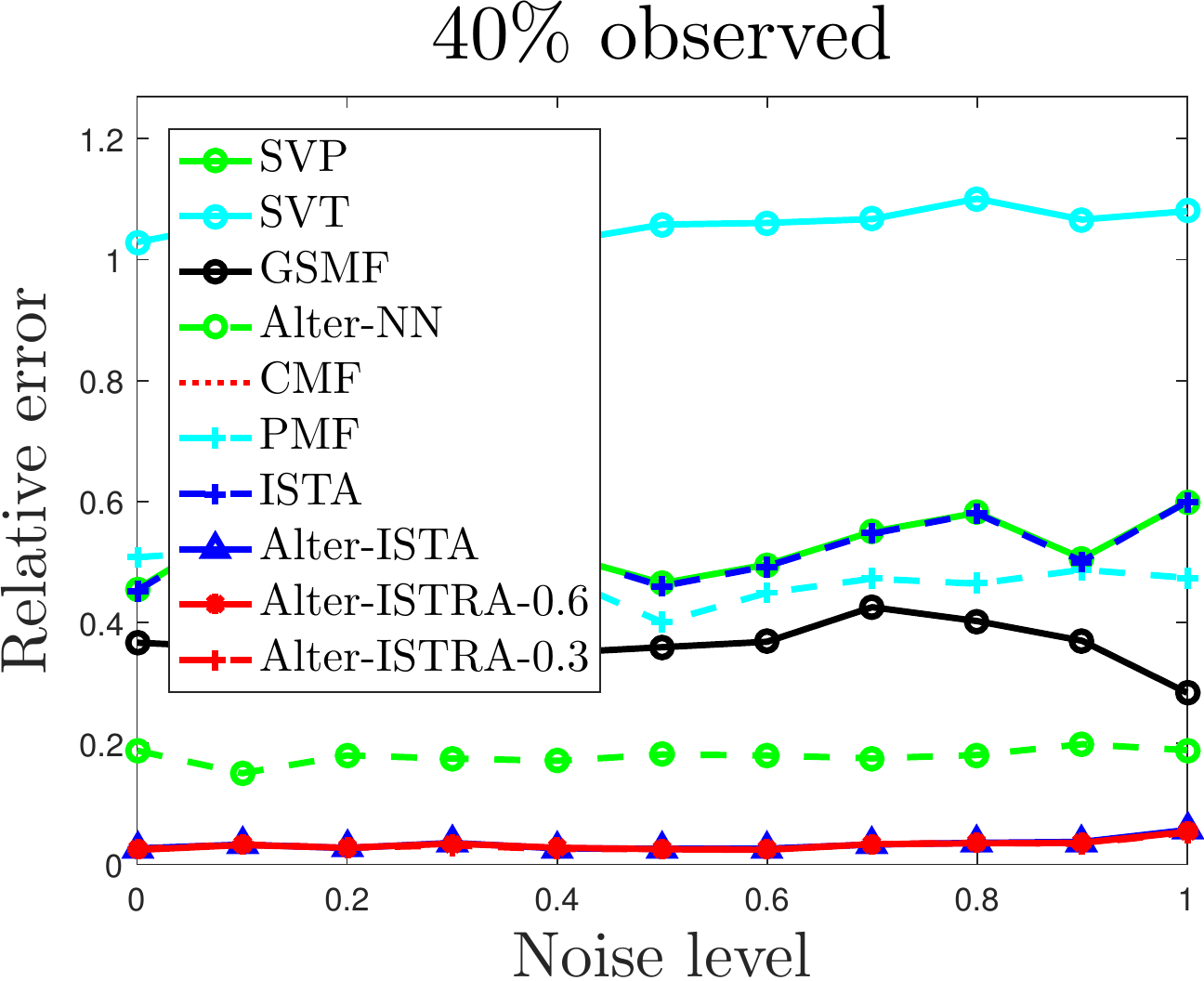}\hspace*{\fill}
\includegraphics[scale=0.28]{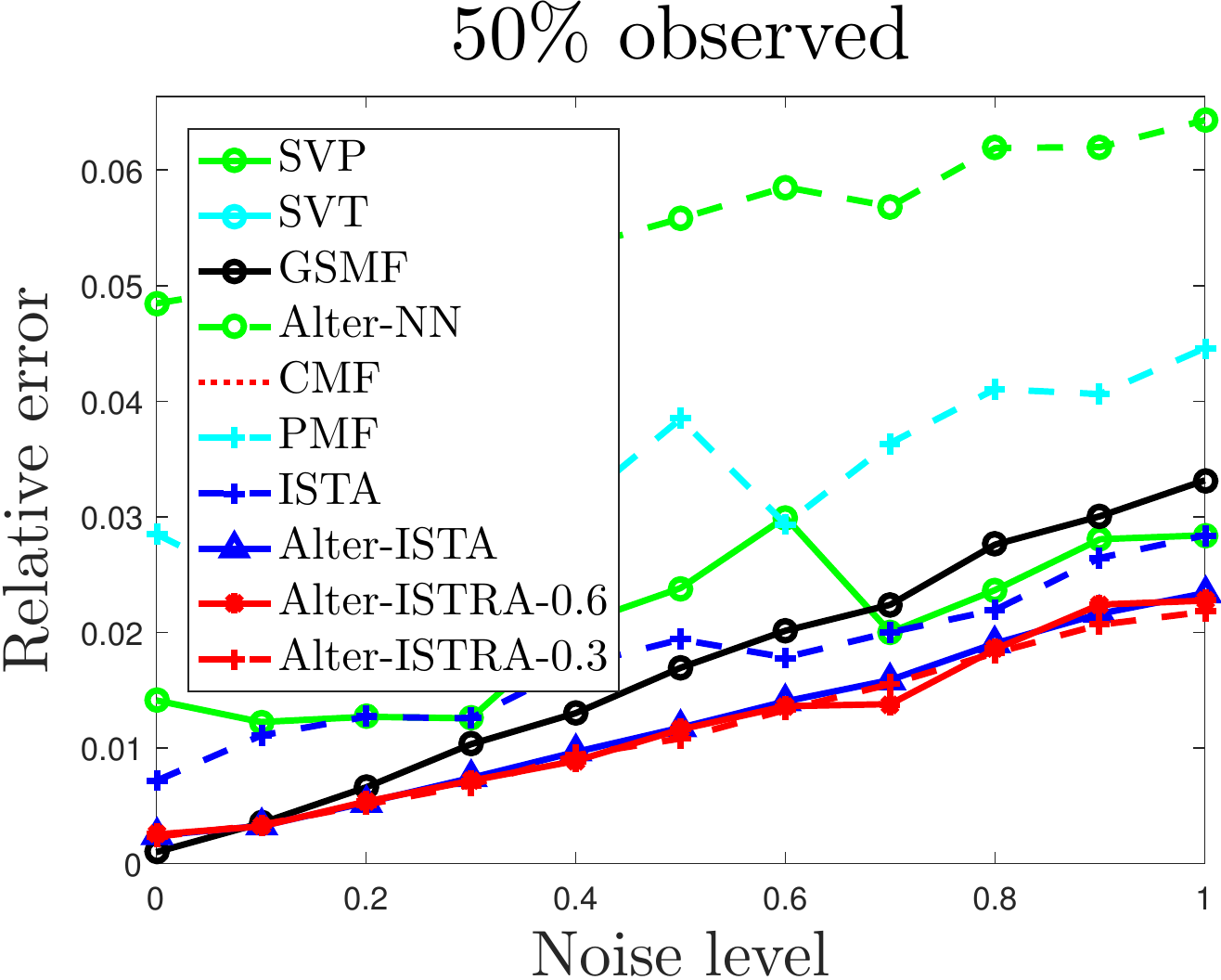}\hspace*{\fill}
\includegraphics[scale=0.28]{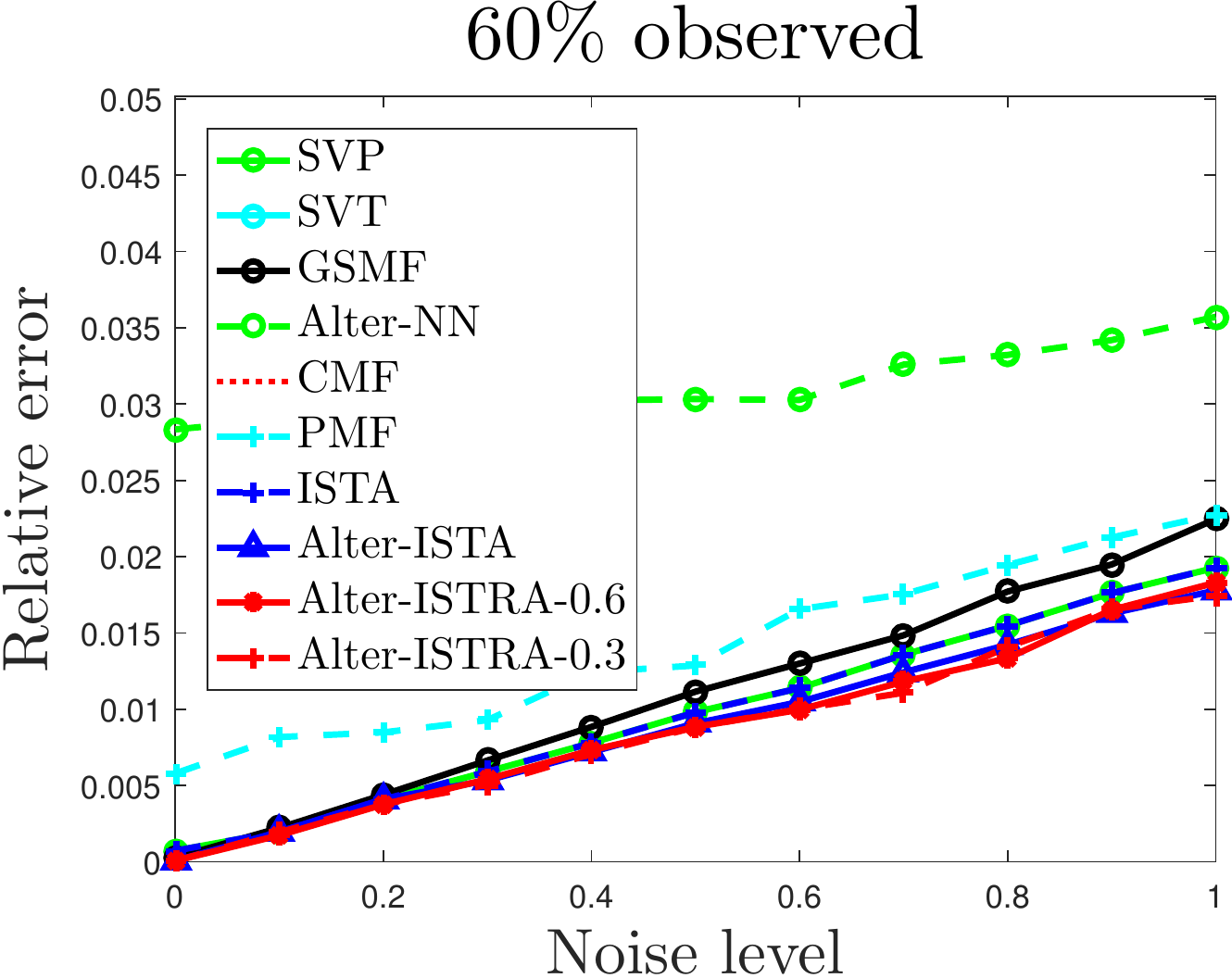}\hspace*{\fill}
\includegraphics[scale=0.28]{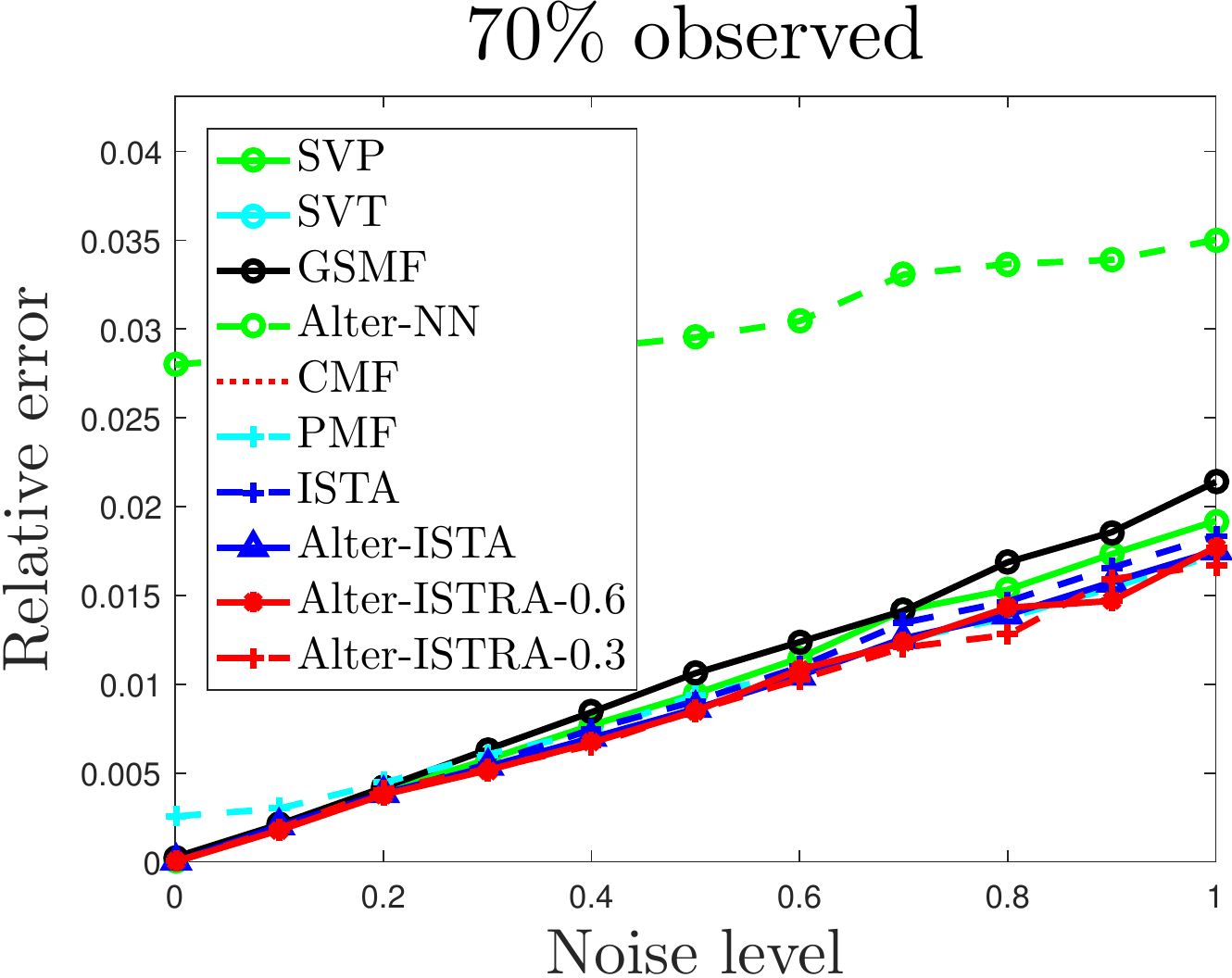}
\caption{{Relative error versus noise with different
observation ratios}}
\label{Fig.alter}
\end{figure}



To show the effectiveness of Algorithm~\ref{alg_pg1} and Algorithm~\ref{alg_spg1}, and the correctness of the assumption that shared and domain specific components could help complete the noised matrices, more experiments are conducted on multiple synthetic matrices.
The recommendation based baselines are included for completeness, among which PMF \citep{mnih2008probabilistic} is a classical single-viewed collaborative filtering method, CMF \citep{singh2008relational} and GSMF \citep{yuan2014recommendation} also consider exploiting cross-domain information.
Since ISTA performs better than TNNR\_origin and comparable with ISTRA as shown before, only ISTA is included in the following comparison.
To investigate the behavior of the proposed non-convex penalties, we also evaluate the performance of problem (\ref{eqn:problem3}) with the standard nuclear norm penalties, which is denoted by Alter-NN.

Due to the fact that the problems are more difficult and time-consuming compared to single matrix scenario, the settings are changed in the following experiments. The synthetic data is constructed on two domains for experimental
investigation. We randomly generate two $100\times100$ matrices with
shared and distinct components as follows:
\begin{equation}\label{eq_generatesyn}
Z^d=M^d+D^d,\;\; Y_{\Omega}^d = Z_{\Omega}^d+\varepsilon, \;d=1,2.
\end{equation}
Here $\{Z^d\}$ are the ground truth for all the domains, and
$\{Y_{\Omega}^d\}$ are the noisy observed matrices. The shared
components are generated by $M^d=AB^d$ where $A$ is shared across
all the domains, $A\in\mathbb{R}^{100\times10}$ and $B^d\in
\mathbb{R}^{10\times100}$ consist of i.i.d. Gaussian entries with
variance 25. The distinct parts are generated by
$D^d=P^dQ^d$ 
where $P^d\in\mathbb{R}^{100\times10}$ and
$Q^d\in\mathbb{R}^{10\times100}$ also consist of i.i.d. Gaussian
entries but with variance 100. The observation indexes
$\{\Omega_d\}$ are sampled uniformly at random. The variance of the
shared components is set 
smaller than that of the distinct components to simulate real
situations. For all methods, parameters are tuned as mentioned before.
Average results of 10 rounds are shown in Figure~\ref{Fig.alter}.

We can
first observe that CMF and SVT fail to recover the matrices in all
settings. The performance of CMF is likely due to the fact that the
distinct components are more significant than the shared part,
contradicting with the assumption of CMF; while the number of
observed entries does not satisfy the recovery condition of SVT,
which explains its degeneration of performance.
We can also see that Alter-NN cannot achieve very low RE level even in the settings with high observed ratio or low noise level, but it is quit stable compared with other baselines, which is similar to the performance of APGL in Figure.~\ref{Fig.all_rank}-\ref{Fig.all_noise}.
Meanwhile, the improvement of Alter-ISTA
over Alter-NN justifies the advantage of no-convex regularization over the standard nuclear norm.
Just like the single matrix scenario, Alter-ISTRA performs better than Alter-ISTA, but requires much more running time.
All the other algorithms perform reasonably when the observation ratio is above
60\%.
When the ratio decreases to 50\%, the $\mathrm{RE}$ values of all the
baselines grow faster with increasing noise than Alter-ISTA and Alter-ISTRA. When
the observed ratio drops to 40\%, all the comparing methods fail to
recover the matrices correctly even if the observations are
noiseless; whereas Alter-ISTA and Alter-ISTRA are capable of exploit the correlations
among multiple domains to significantly alleviate the low-rank problem, which justify the necessity of assumption.

\subsection{Real Image Data}

\begin{figure*}[htbp]
\centering
\includegraphics[scale=0.3]{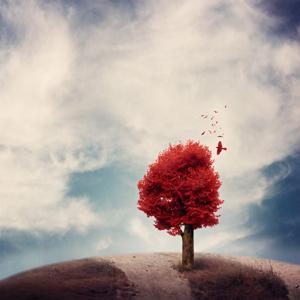}\hspace{15pt}
\includegraphics[scale=0.3]{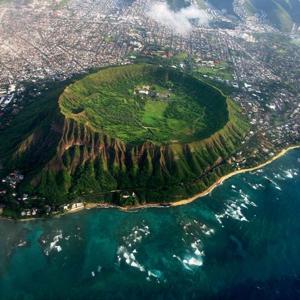}\hspace{15pt}
\includegraphics[scale=0.3]{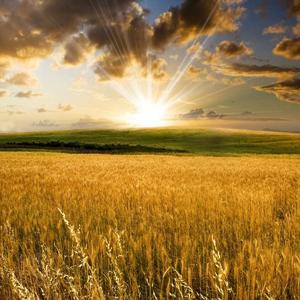}\hspace{15pt}
\includegraphics[scale=0.3]{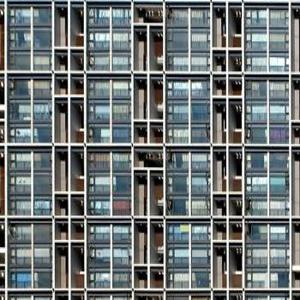}\\

\vspace{8pt}
\hspace{0pt}
\includegraphics[scale=0.3]{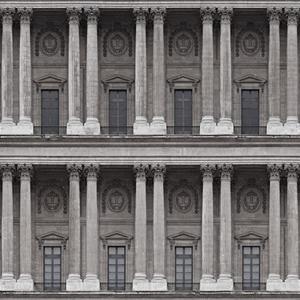}\hspace{15pt}
\includegraphics[scale=0.3]{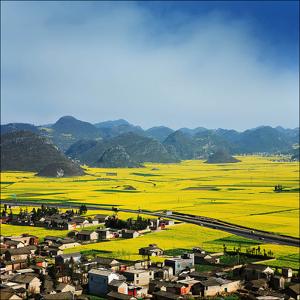}\hspace{15pt}
\includegraphics[scale=0.6]{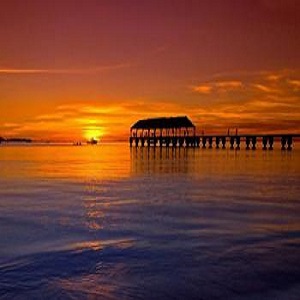}\hspace{15pt}
\includegraphics[scale=0.3]{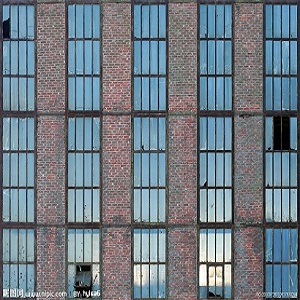}
\caption{{Images used in experiments (number 1-8)}}
\label{test image}
\end{figure*}

\begin{figure*}[htbp]
\centering
\includegraphics[scale=0.28]{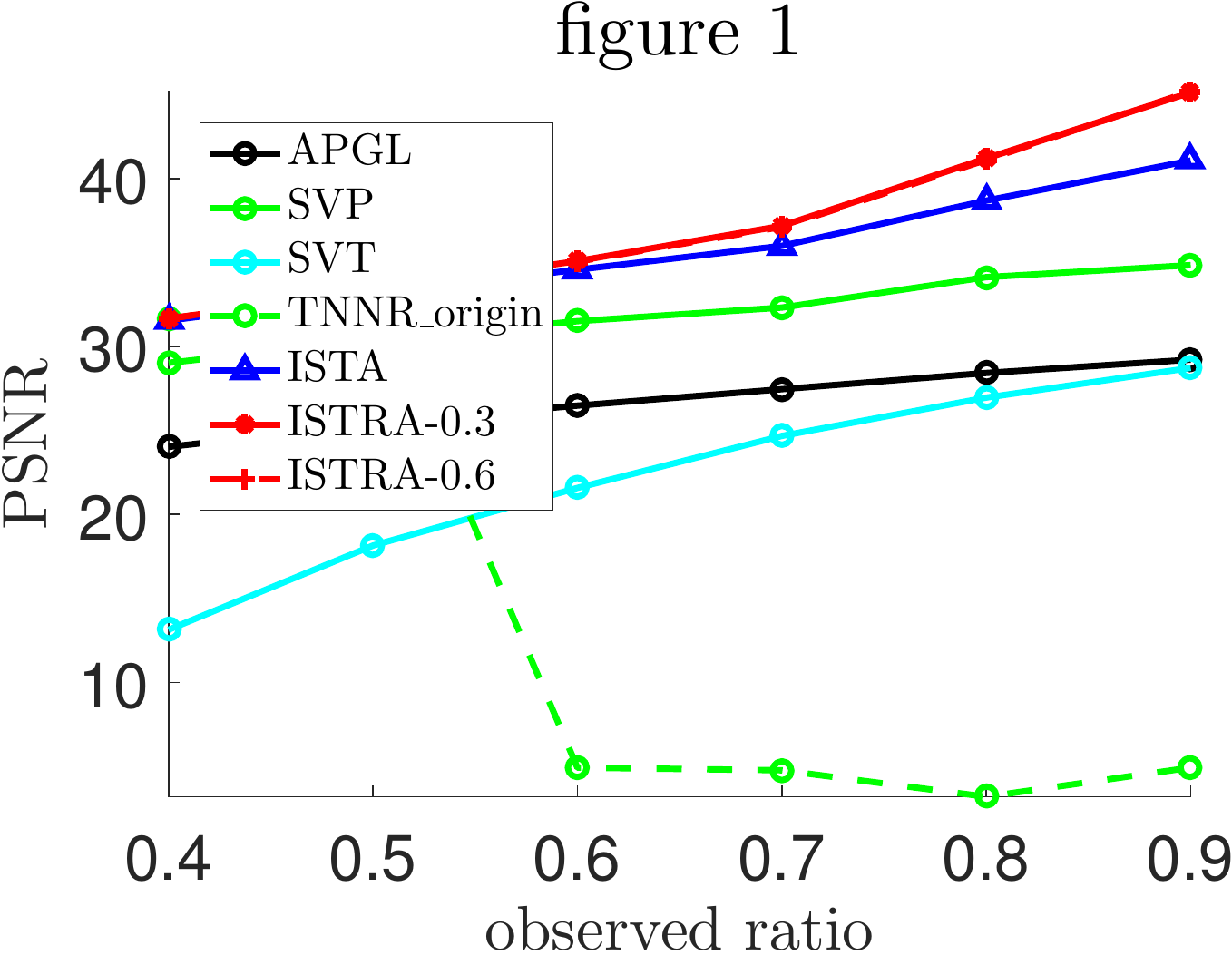}\hspace*{\fill}
\includegraphics[scale=0.28]{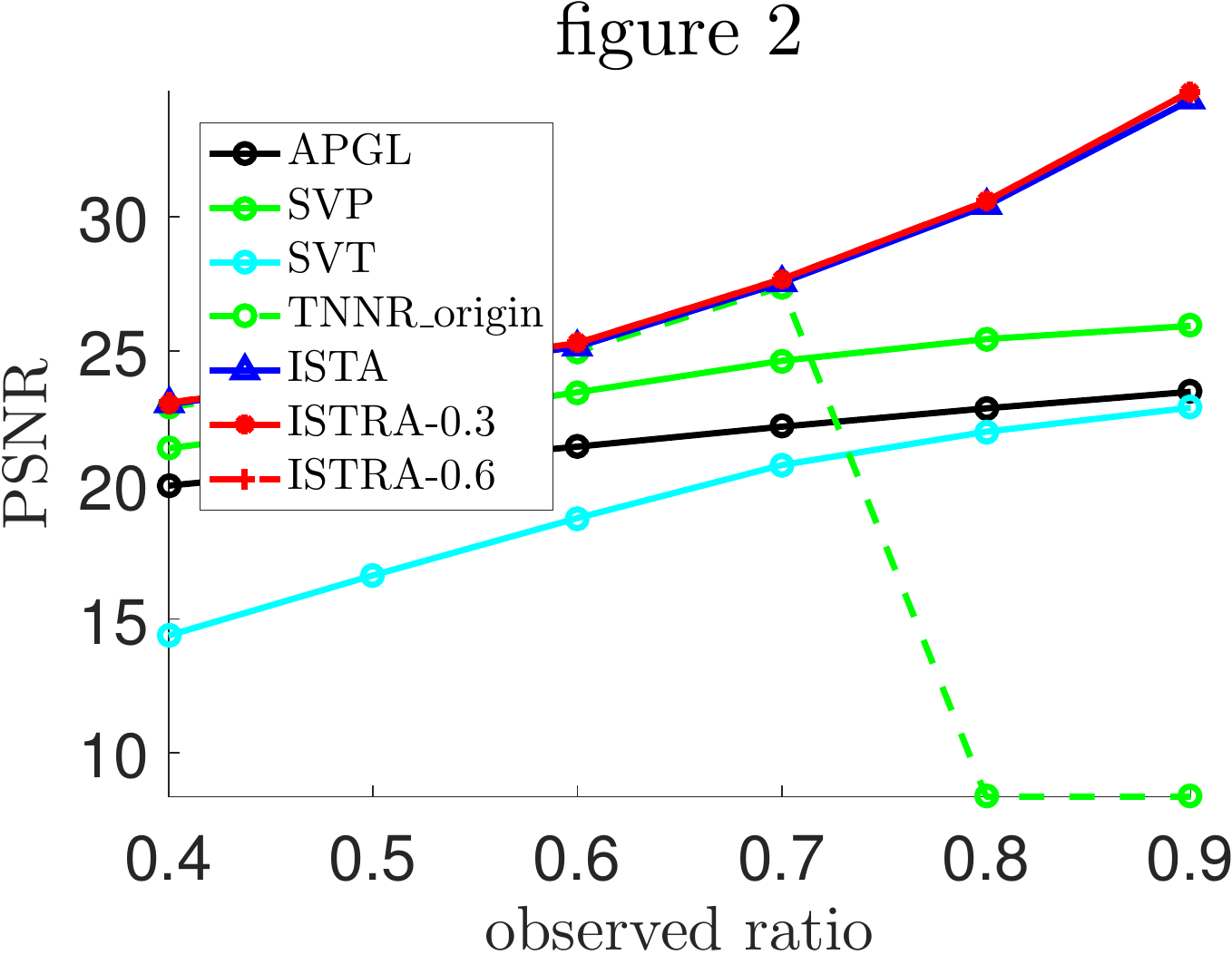}\hspace*{\fill}
\includegraphics[scale=0.28]{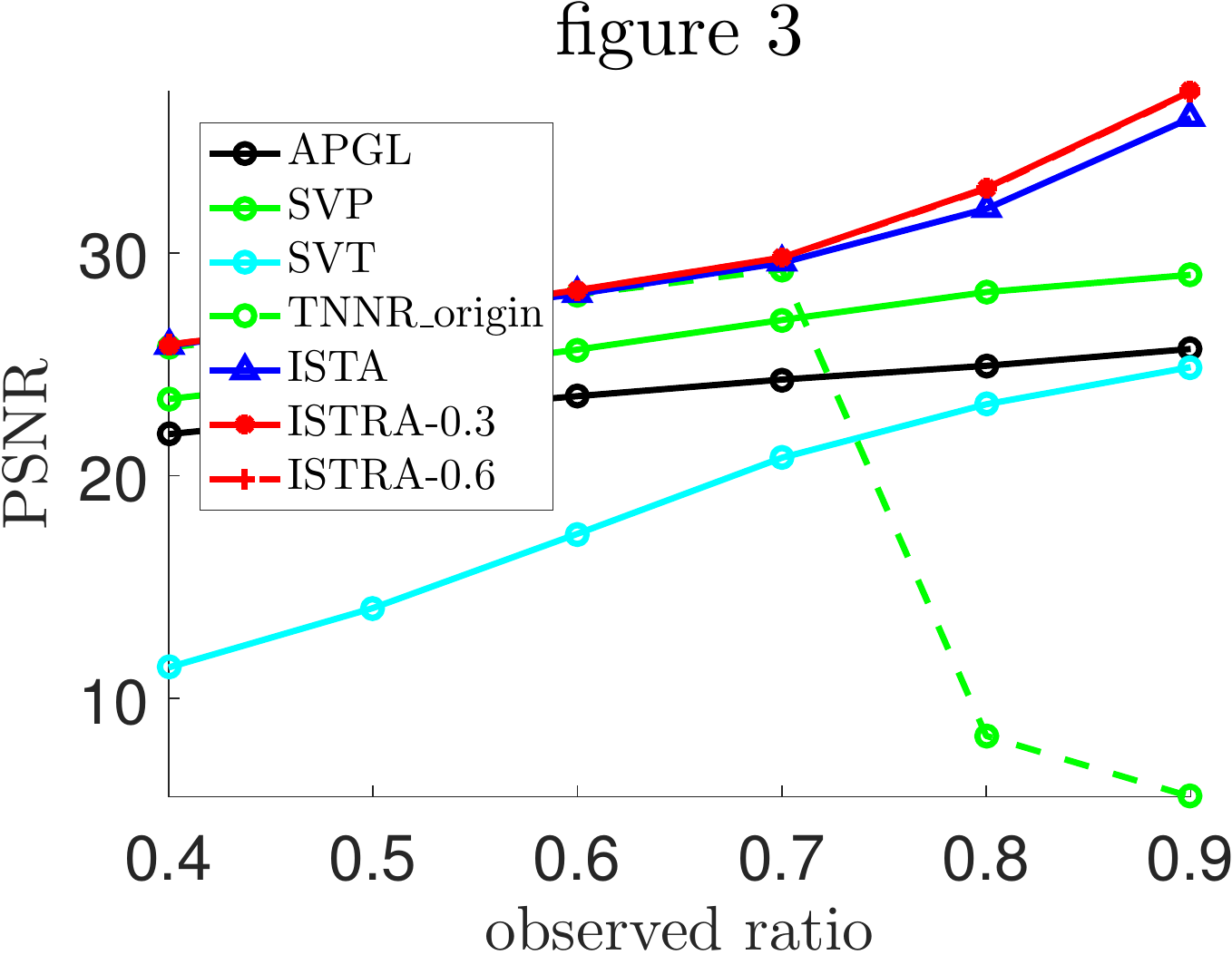}\hspace*{\fill}
\includegraphics[scale=0.28]{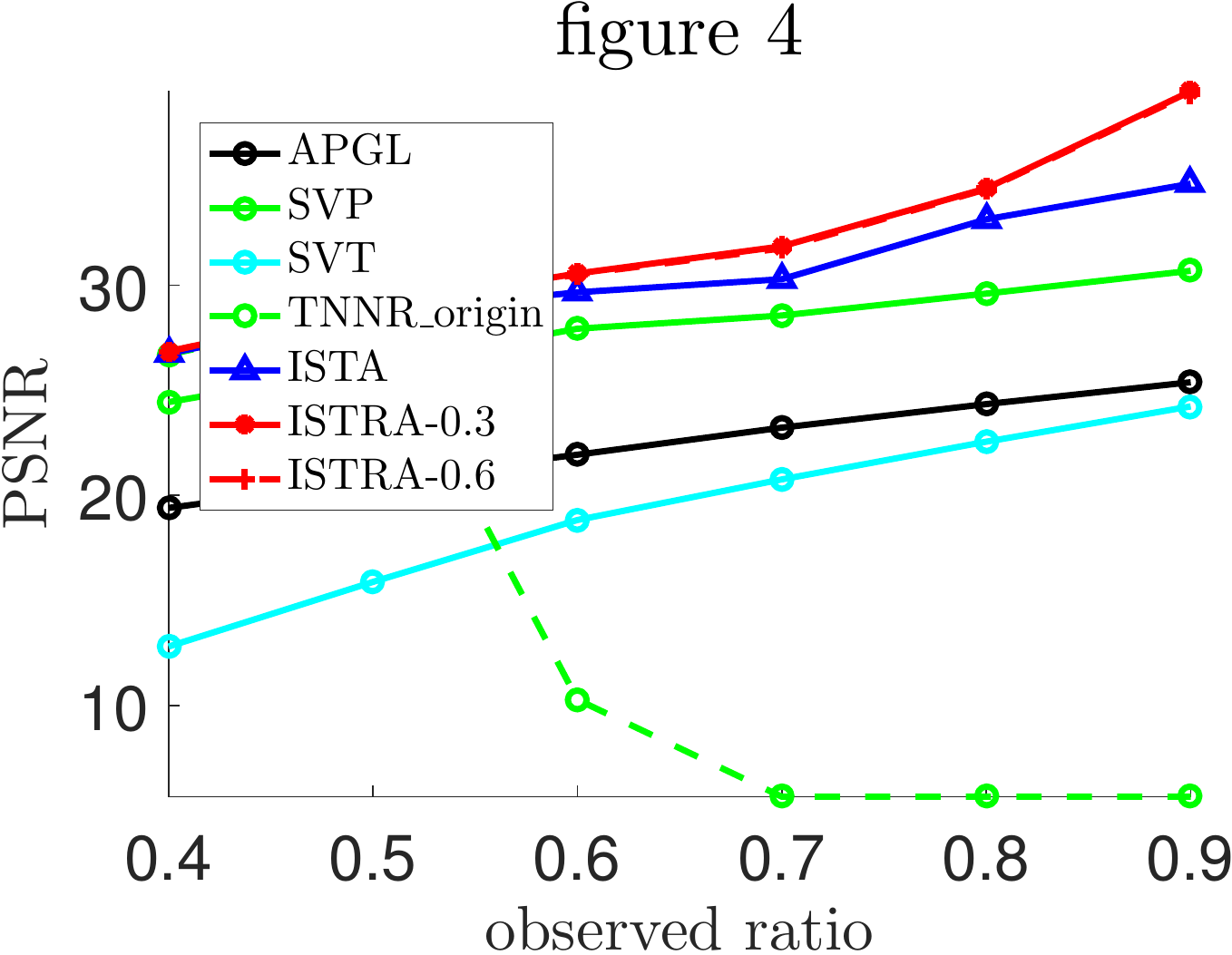}

\medskip
\includegraphics[scale=0.28]{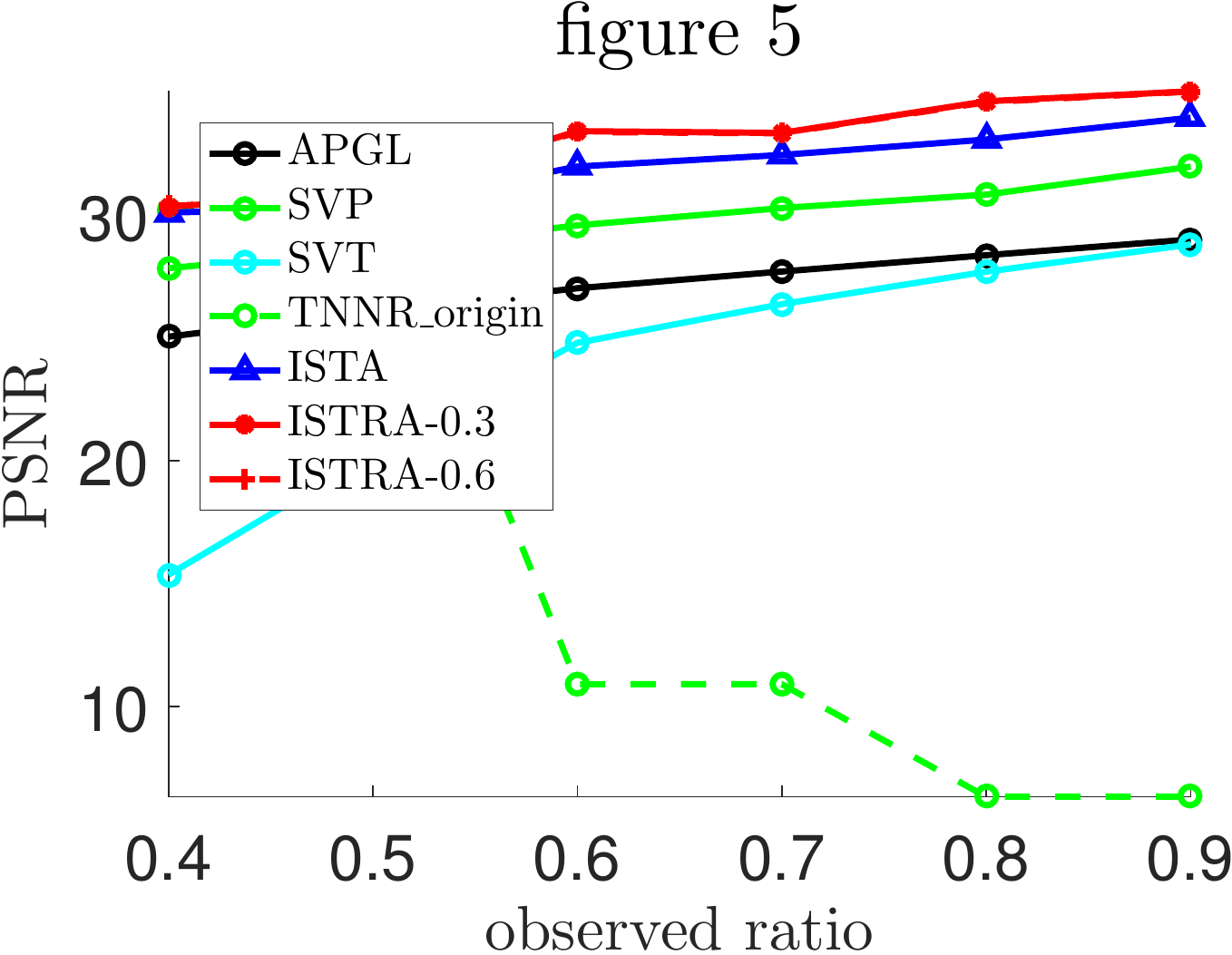}\hspace*{\fill}
\includegraphics[scale=0.28]{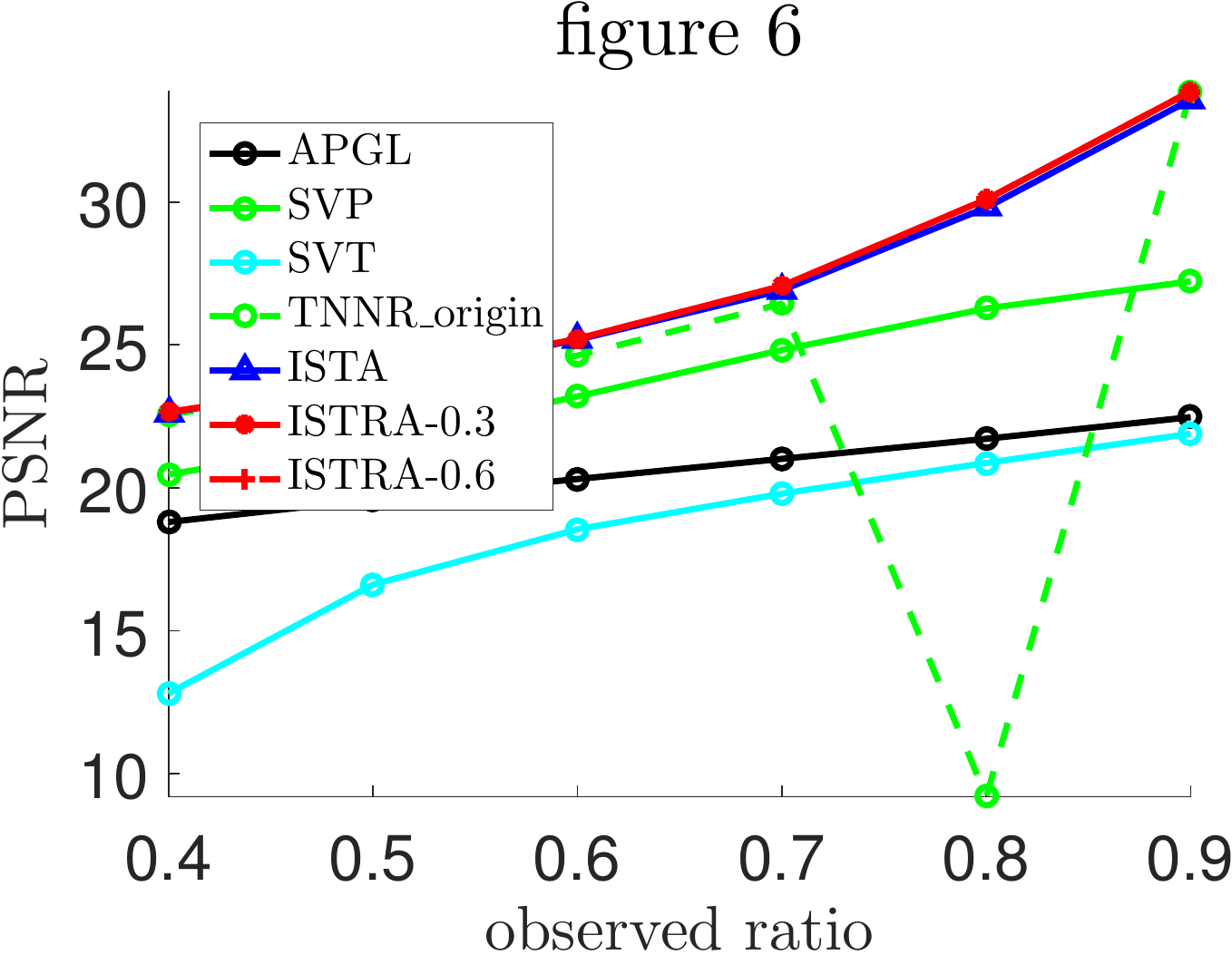}\hspace*{\fill}
\includegraphics[scale=0.28]{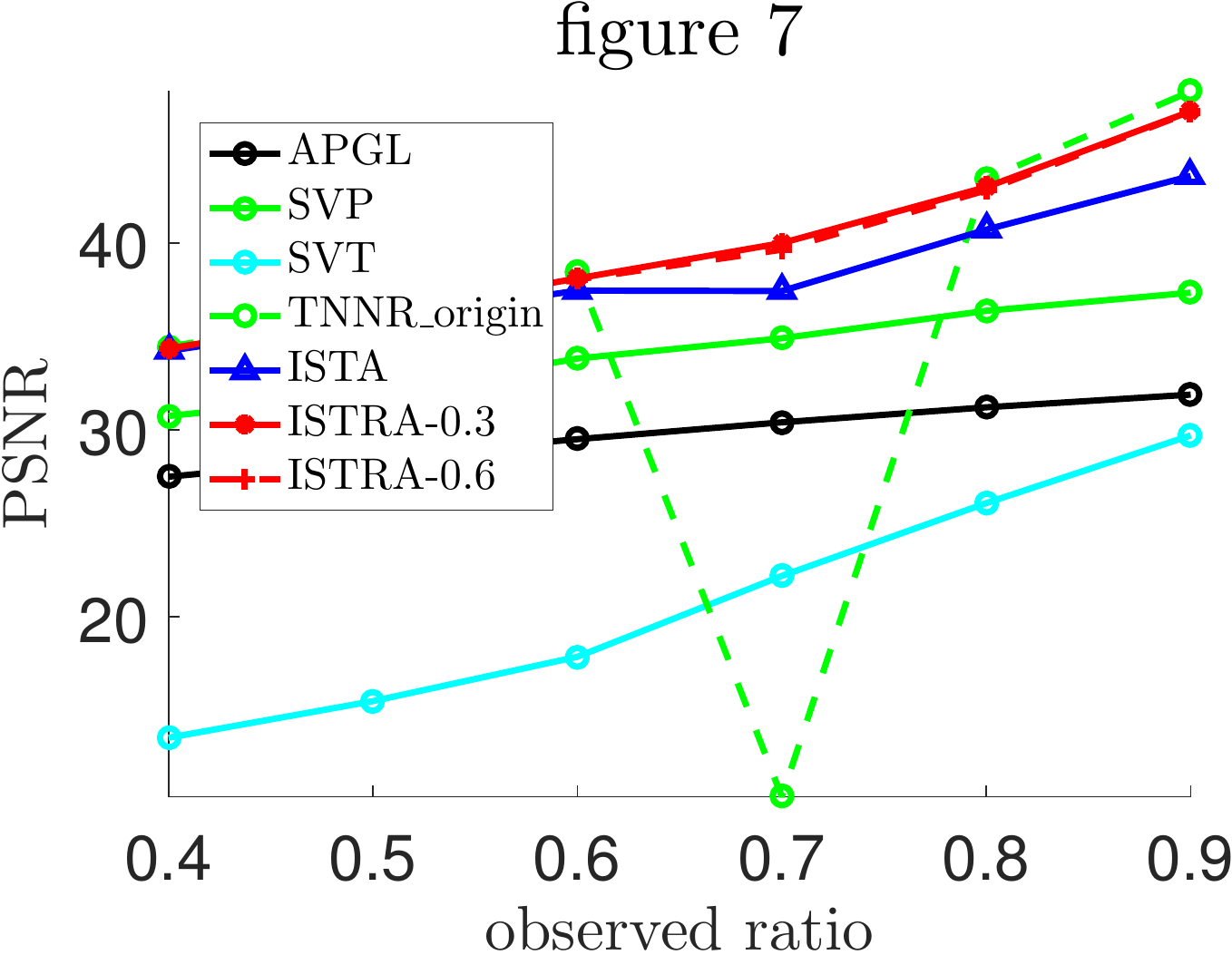}\hspace*{\fill}
\includegraphics[scale=0.28]{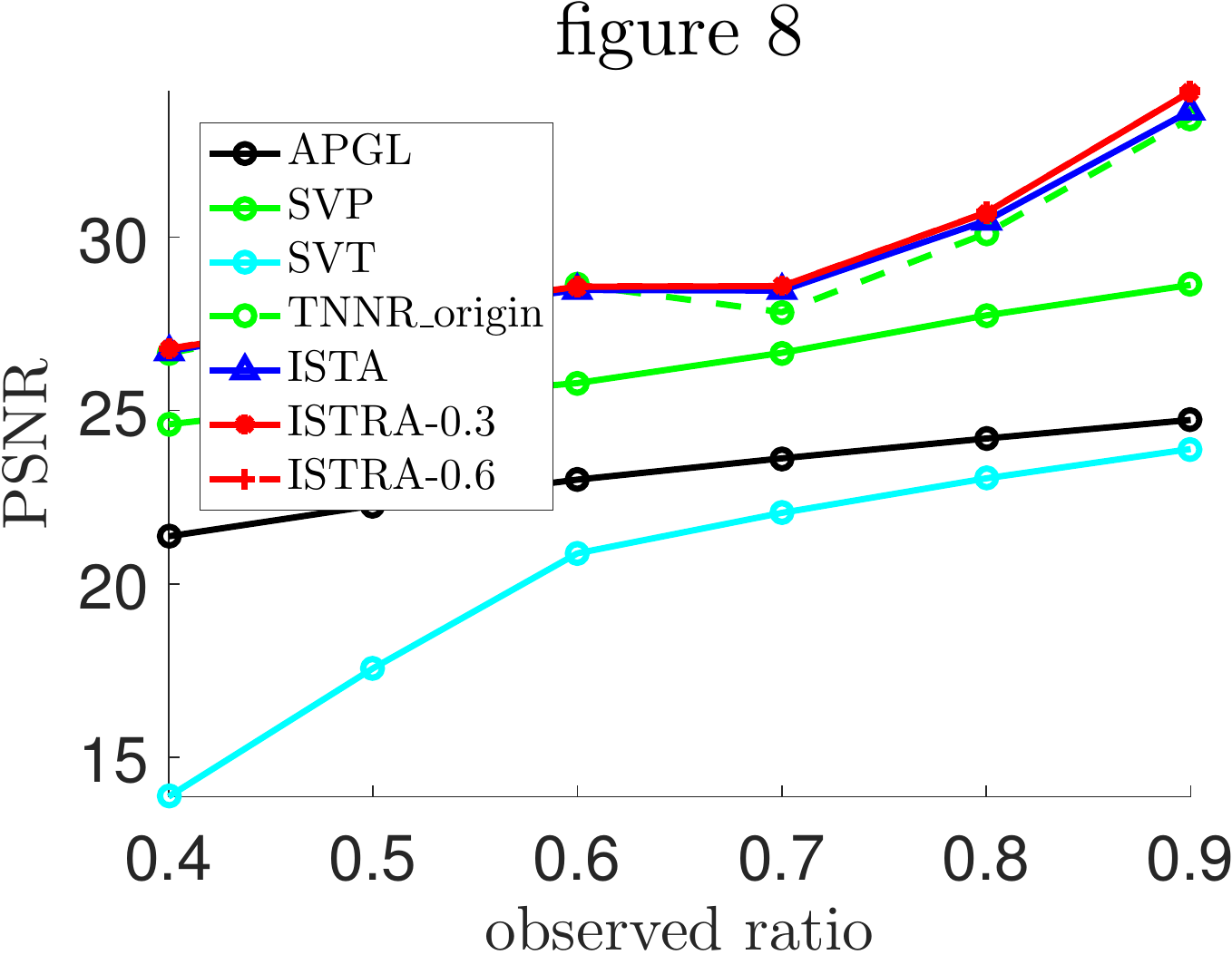}
\caption{{PSNR values of recovered images with different observed ratios for random mask (top row: images 1-4, bottom row: images 5-8)}}
\label{Fig.rand}
\end{figure*}

\begin{figure*}[htbp]
 \centering

\subfigure[{Random mask}]{
\includegraphics[scale=0.33]{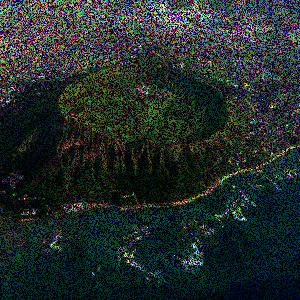}}
\subfigure[{SVP 22.34}]{
\includegraphics[scale=0.33]{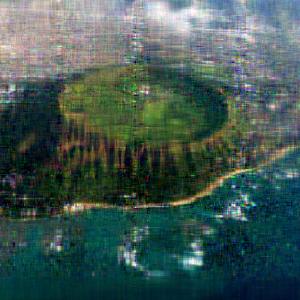}}
\subfigure[{SVT 18.15}]{
\includegraphics[scale=0.33]{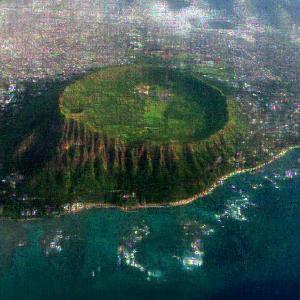}}
\subfigure[{APGL 20.71}]{
\includegraphics[scale=0.33]{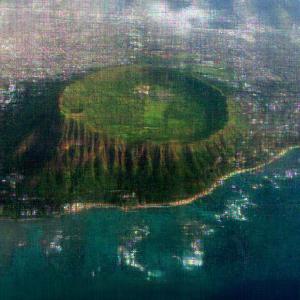}}\\

\medskip
\hspace{3pt}\subfigure[{TNNR\_origin 24.02}]{
\includegraphics[scale=0.33]{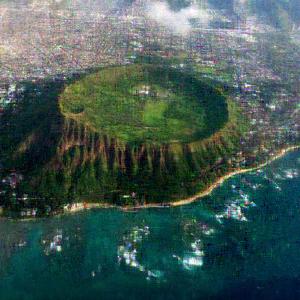}}
\subfigure[{ISTA 24.14}]{
\includegraphics[scale=0.33]{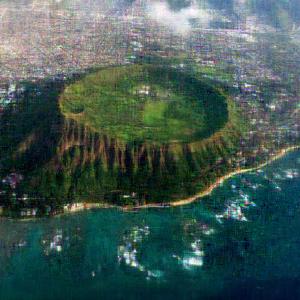}}
\subfigure[{ISTRA-0.3 24.24}]{
\includegraphics[scale=0.33]{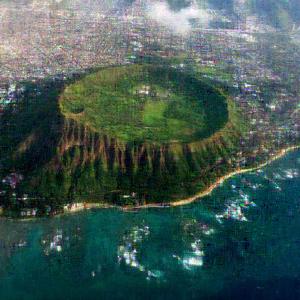}}
\subfigure[{ISTRA-0.6 24.23}]{
\includegraphics[scale=0.33]{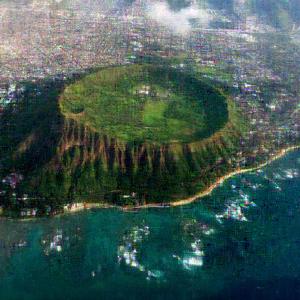}}
\caption{\footnotesize{Recovered images and PSNR values by different methods (50\% pixels are randomly masked)}}
\label{show image_random}
\end{figure*}

Here we consider the task of image inpainting which can also be treated as a matrix completion problem.
Regarding a noisy image as three separate incomplete matrices (3 channels),
we aim to recover missing pixels by exploiting the low rank structure.
The quality of recovered image is evaluated by the well known PSNR (Peak Signal-to-Noise Ratio) measure, which is defined as $10\log_{10}\big(\frac{255^2}{MSE}\big)$ and $MSE$ is mean squared error.
Higher PSNR values indicate better performance.
All parameters are tuned as in previous section.

We test all methods using $8$ images ($300\times 300$ pixels) in Figure \ref{test image}.
We solve the matrix completion
tasks with random
mask and minor noise, where the missing pixels are randomly sampled and noise is i.i.d. standard Gaussian, which is relatively small compared with signal.
The results are shown in Figure \ref{Fig.rand} and \ref{show image_random}.
We can see that
the ISTRA achieves higher or comparable PSNR values with ISTA, but TNNR\_origin fails to recover some images
when more entries observed.
We find that the rank of recovered matrices of ISTA and ISTRA increase consistently while observed ratio grows, and in which case the smallest singular values are often closed to zero.
This could happen when more details are observed, because the true images are not low-rank usually but they can be approximated by low-rank representations appropriately.
As a two-loop algorithm, TNNR\_origin cannot update the singular direction for small but non-zero singular values frequently, which leads the algorithm to a suboptimal solution and explains the obtained results.

\begin{table*}[t]
\caption{{PSNR values of recovered images with text mask and iteration numbers of SVD computation}}
\label{table:imresults}
\centering
\scalebox{0.71}{
\begin{tabular}{|c|c|c|c|c|c|c|c|c|c|}
  \hline
  Image & SVP & SVT  & APGL & TNNR\_origin & ISTA & ISTRA-0.3 & ISTRA-0.6 & \#SVD (TNNR\_origin) & \#SVD (ISTA) \\
  \hline
  1     & 26.76 & 25.97 & 28.93 & 27.05 & \textbf{32.00} & 31.23 & 31.39  & 1272 & 153      \\
  2     & 22.20 & 26.10 & 25.43 & \textbf{24.99} & 23.58 & 24.52 & 24.51  & 4796 & 147      \\
  3     & 23.22 & 26.18 & 26.95 & 27.07 & 29.42 & 30.26 & \textbf{30.33}  & 1829 & 368      \\
  4     & 24.62 & 25.80 & 25.32 & \textbf{28.69} & 26.75 & 27.48 & 27.61  & 5305 & 262       \\
  5     & 27.65 & 30.58 & 30.17 & 30.06 & \textbf{32.99} & 32.34 & 32.43  & 4274 & 112 \\
  6     & 22.62 & 21.53 & 22.29 & 23.09 & 22.52 & 23.27 & \textbf{23.39}  & 4618 & 255 \\
  7     & 29.27 & 30.97 & 31.77 & 31.09 & \textbf{35.23} & 33.33 & 33.57  & 3964 &  68 \\
  8     & 24.27 & 26.68 & 26.32 & \textbf{31.23} & 26.28 & 27.12 & 27.14  & 1328 & 358 \\
  \hline
\end{tabular}}
\end{table*}

\begin{figure*}[htbp]
 \centering
\subfigure[{Random mask}]{
\includegraphics[scale=0.33]{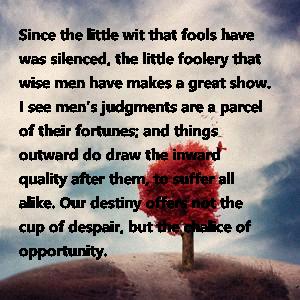}}
\subfigure[{SVP 26.76}]{
\includegraphics[scale=0.33]{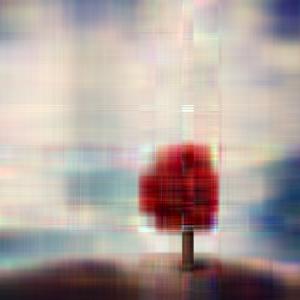}}
\subfigure[{SVT 25.97}]{
\includegraphics[scale=0.33]{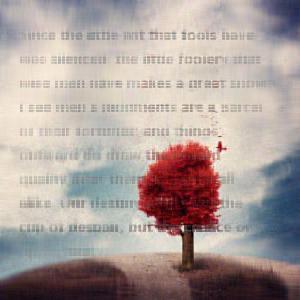}}
\subfigure[{APGL 28.93}]{
\includegraphics[scale=0.33]{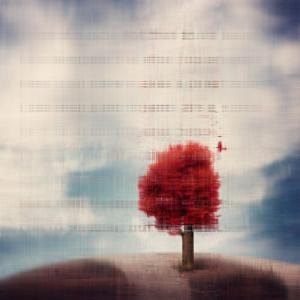}}\\

\subfigure[{TNNR\_origin 27.06}]{
\includegraphics[scale=0.33]{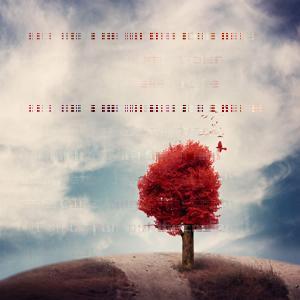}}
\subfigure[{ISTA 32.0}]{
\includegraphics[scale=0.33]{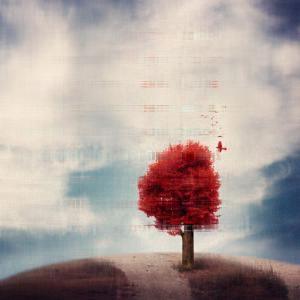}}
\subfigure[{ISTRA-0.3 31.22}]{
\includegraphics[scale=0.33]{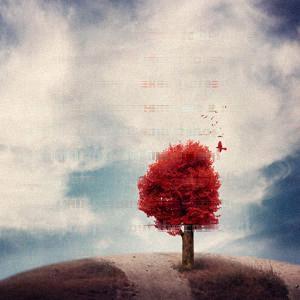}}
\subfigure[{ISTRA-0.6 31.39}]{
\includegraphics[scale=0.33]{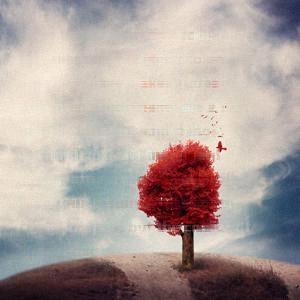}}
\caption{{Recovered images and PSNR values by different methods (text mask)}}
\label{show_image_text}
\end{figure*}
Next, we conduct experiments on text mask, which is harder since missing pixels are no longer random chosen. The parameters of APGL and SVT are tuned between $[1:100:10^4]$ to achieve better performances.
The results are shown in Table~\ref{table:imresults} and Figure~\ref{show_image_text}. We can see that truncated nuclear norm is more robust compared with reweighted nuclear norm. Compared with TNNR\_origin, ISTA requires about 10 times less iterations, which shows the effectiveness of proposed algorithms.

\subsection{Multi-Domain Recommendation}
\begin{table*}[h]
\caption{Statistics of the multi-domain recommendation
data}\label{table1}
\begin{center}
\begin{tabular}{|c|c|c|}
\hline
Domains& Book & Movie\\
\hline\hline \#Users & 13090 & 13090 \\ \hline \#Items & 17590 &
17922 \\ \hline
Sparsity & 99.66\% & 98.68\% \\
\hline
\end{tabular}
\end{center}
\end{table*}

\begin{table*}[t]
\caption{Comparison of performance with different training ratios.
Results are presented in the form of
$\mathrm{RMSE}_\textrm{test}$\scriptsize{($\mathrm{RMSE}_\textrm{train}$)}.}\label{table_results}
\begin{center}
\scalebox{0.76}{
\begin{tabular}{|c|c||c|c|c|c|c|c|}
\hline
Domains& Training & SVP & ISTA & PMF & CMF & GSMF & Alter-ISTA \\
\hline\hline \multirow{3}{*}{Book} & 80\% &
0.9606\scriptsize{(0.4898)} & 0.8801\scriptsize{ (0.6144)} &
0.7809\scriptsize{ (0.5235)} & 0.8172\scriptsize{ (0.6362)} &
0.7813\scriptsize{ (0.5684)} & \textbf{ 0.7389\scriptsize{
(0.4008)}}\\ \cline{2-8} & 60\% & 1.0147\scriptsize{(0.4658)} &
0.9066\scriptsize{ (0.5663)} & 0.7967\scriptsize{ (0.5353)} &
0.8517\scriptsize{ (0.6523)} & 0.7962\scriptsize{(0.6078)} &
\textbf{ 0.7479\scriptsize{ (0.4550)}}\\ \cline{2-8}
& 40\% & 1.1571\scriptsize{(0.4175)} & 1.0239\scriptsize{ (0.5563)} & 0.8397\scriptsize{ (0.5083)} & 0.9345\scriptsize{ (0.6227)} & 0.8030\scriptsize{ (0.5643)}& \textbf{0.7558\scriptsize{ (0.4911)}}\\
\hline\hline \multirow{3}{*}{Movie} & 80\% &
0.7661\scriptsize{(0.6011)} & 0.7336\scriptsize{ (0.6524)} &
0.7342\scriptsize{ (0.6014)} & 0.7325\scriptsize{ (0.6228)} &
0.7315\scriptsize{ (0.6177)} &\textbf{0.7130\scriptsize{
(0.6367)}}\\ \cline{2-8} & 60\% & 0.7870\scriptsize{(0.5905)} &
0.7429\scriptsize{ (0.6391)} & 0.7432\scriptsize{ (0.5952)} &
0.7423\scriptsize{ (0.6142)} & 0.7401\scriptsize{(0.5978)} &
\textbf{0.7209\scriptsize{ (0.6643)}}\\ \cline{2-8}
& 40\% & 0.8387\scriptsize{(0.5616)} & 0.7752\scriptsize{ (0.6259)} & 0.7678\scriptsize{ (0.5764)} & 0.7829\scriptsize{ (0.5784)} & 0.7870\scriptsize{ (0.4892)} & \textbf{0.7342\scriptsize{ (0.6885)}} \\
\hline
\end{tabular}}
\end{center}
\end{table*}

To measure the performance of Alter-ISTA and Alter-ISTRA in the practical task of
multi-domain recommendation, we use the data from a public website
Douban\footnote{\url{http://www.douban.com}}, where users can rate
movies, books and music, etc. We take two domains of ratings,
\emph{books} and \emph{movies} in our experiment. We remove users
and items with less than 10 ratings to provide enough ratings for
split into training and test sets for evaluation. A dataset is then
obtained containing 13090 users with 17590 ratings on books and
17922 ratings on movies. All ratings take values from 1 to 5. The
details of the dataset are listed in Table~\ref{table1}.

To evaluate the quality of recommendation, we use Root Mean Square
Error, $\mathrm{RMSE}(X)=\sqrt{||X_\Omega-Y_\Omega||^2/N}$, to
measure the discrepancy of predictions and the ground truth. We
compare to both matrix completion algorithms and recommendation
methods here as well. The penalty parameters are tuned between $[5:5:300]$
and the truncated rank for $X_0$ and $X_{1:2}$ are 20 and 30 respectively.
We conduct the experiments with different training ratios (80\%, 60\%
and 40\%) for a comprehensive comparison. The training sets are
sampled uniformly at random and the procedure is repeated 10 times.
The results are summarized in Table \ref{table_results}, where test
$\mathrm{RMSE}$ values are shown with training $\mathrm{RMSE}$
values inside the brackets. Bold values indicate the best
performance on the test data that is statistically significant with
95\% confidence. The results of SVT, TNNR-Origin and Alter-NN
are not reported here because first three algorithms have to compute full SVD in the first dozens of iterations which are too
expensive and not applicable to large scale problems.
Alter-ISTRA is not reported as well since it also requires full SVD and does not have significant advantages compared to Alter-ISTA based on previous experiments. 

From Table \ref{table_results}, we can observe that all the
recommendation methods achieve comparable performance in the
\emph{movie} domain, which contains relatively sufficient training
data. Meanwhile in the \emph{book} domain, CMF does not perform very
well as the training set is extremely sparse and the connection
between domains is weaker than it assumes. The performance of GSMF,
which allows different factors for different domains, is comparable
to
PMF, and better than the other baselines. 
ISTA performs comparably with the recommendation methods in the
\emph{movie} domain, while in the \emph{book} domain the
performances of the matrix completion approaches degenerate
significantly. This is probably because SVP and ISTA are more
sensitive to noise when sparsity is high.
 The last column records
the results of our proposed method of Alter-ISTA which demonstrates
significant superiority over the comparing algorithms. This
justifies that Alter-ISTA can effectively exploit the consistency
while modeling independency across multiple domains with the
benefits of improving the quality of recommendation.

\section{Conclusion}
In this paper, we propose the ISTA and ISTRA algorithm to solve rank minimization problems with different penalties.
We prove that the proposed algorithms can converge to a high-quality critical point globally with sublinear convergence rate $O(1/T)$, which is a much stronger result compared with existing work.
Empirical results on synthetic data and real-world applications further verify the accuracy and efficiency of our methods.
In experiments, we also observed that the iteration complexities of proposed algorithms on non-convex objectives were in the same order as proximal method on convex objective (nuclear norm), which indicated that the ISTA and ISTRA might achieve faster convergence rate in certain scenarios.
We hope to investigate the requirement of faster convergence rate in the near future.
\bibliography{bib}

\appendix

\section{Proof of Theorem~\ref{thm:main1}}\label{sec:app1}
\begin{lemma}
\label{lem_subgradient_bound}
\emph{(Upper bound for the subgradient)}
Suppose all assumptions are hold. For each iteration $t>0$, define
\begin{equation}\label{eqn:subgradient1}
G_{t+1} = \mu^{-1}(X_{t}-X_{t+1})+\nabla f(X_{t+1})-\nabla f(X_t)
\end{equation}
Then we have $G_{t+1}\in\partial F(X_{t+1})$ for all $t>0$, and
\begin{equation}
\|G_{t+1}\|_F\leq \rho_2\|X_t-X_{t+1}\|_F
\end{equation}
where $\rho_2 = \mu^{-1}+L$.
\end{lemma}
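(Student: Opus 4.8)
The plan is to read off $G_{t+1}$ directly from the first-order optimality condition of the proximal subproblem that defines $X_{t+1}$, and then control its size by the triangle inequality together with the $L$-smoothness of $f$ from Assumption~\ref{assump:1}.(\ref{assump:a1}).

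First I would use the fact that, by construction in Algorithm~\ref{alg_pg0}, $X_{t+1}$ is a minimizer of the objective
\begin{equation*}
X \mapsto \tfrac{1}{2}\|X-M_{t+1}\|_F^2 + \mu\, g(X),
\end{equation*}
where $M_{t+1} = X_t - \mu\nabla f(X_t)$. Writing the Fermat rule for this subproblem with the limiting subdifferential of Definition~\ref{def:differential}, and using that the quadratic term is $C^1$ with gradient $X_{t+1}-M_{t+1}$ at $X_{t+1}$, I obtain
\begin{equation*}
0 \in (X_{t+1}-M_{t+1}) + \mu\,\partial g(X_{t+1}),
\end{equation*}
which, after dividing by $\mu$ and substituting $M_{t+1}$, gives $\mu^{-1}(X_t - X_{t+1}) - \nabla f(X_t) \in \partial g(X_{t+1})$.

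Next I would add $\nabla f(X_{t+1})$ to both sides. Since $f$ is $C^1$, the exact sum rule for the limiting subdifferential yields $\partial F(X_{t+1}) = \nabla f(X_{t+1}) + \partial g(X_{t+1})$, so that
\begin{equation*}
G_{t+1} = \mu^{-1}(X_t - X_{t+1}) + \nabla f(X_{t+1}) - \nabla f(X_t) \in \partial F(X_{t+1}),
\end{equation*}
which is precisely the claimed membership. Finally, the norm bound follows from the triangle inequality $\|G_{t+1}\|_F \le \mu^{-1}\|X_t - X_{t+1}\|_F + \|\nabla f(X_{t+1}) - \nabla f(X_t)\|_F$, after which the $L$-Lipschitz continuity of $\nabla f$ bounds the last term by $L\|X_{t+1}-X_t\|_F$, giving $\|G_{t+1}\|_F \le (\mu^{-1}+L)\|X_t - X_{t+1}\|_F = \rho_2\|X_t - X_{t+1}\|_F$.

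The main obstacle I anticipate is the careful handling of the subdifferential calculus in the non-convex setting: one must justify that the Fermat rule holds with the limiting subdifferential at the proximal minimizer (valid because $X_{t+1}$ attains the minimum and the objective is proper and lower semi-continuous), and that the additive split $\partial F = \nabla f + \partial g$ is an exact identity rather than a mere inclusion. The latter is guaranteed because $f$ is continuously differentiable, so the exact sum rule for the limiting subdifferential applies (Chapter~8.B of \citep{rockafellar2009variational}); all remaining steps are routine triangle-inequality estimates.
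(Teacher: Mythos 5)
Your proof is correct, and it reaches the key inclusion $\mu^{-1}(X_t - X_{t+1}) - \nabla f(X_t) \in \partial g(X_{t+1})$ by a genuinely different (and shorter) route than the paper. The paper works at the level of singular values: it writes the first-order optimality condition of the convex \emph{vector} problem (\ref{eq_theo1}), lifts the resulting vector subgradient to matrix space via Lewis's theorem on subdifferentials of singular-value functions together with a chain rule, and then uses the fact (from the update scheme (\ref{eqn:proximal1}) in Corollary~\ref{corollary_proximal_map}) that $X_{t+1}$ and $M_{t+1} = X_t - \mu\nabla f(X_t)$ share the same left and right singular vectors to identify the lifted subgradient with $\mu^{-1}(X_t - X_{t+1}) - \nabla f(X_t)$. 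You instead apply the generalized Fermat rule directly to the matrix-space proximal subproblem (\ref{eqn:proximal_map}) and invoke the exact sum rule for the limiting subdifferential (valid because the quadratic term is $C^1$). Your argument is more general in that it never uses the spectral structure of $g$, only that $X_{t+1}$ is a genuine minimizer of the matrix subproblem; but note that this last fact is not automatic from the algorithmic recipe (which computes $X_{t+1}$ by thresholding singular values) — it is supplied precisely by Lemma~\ref{lemma_yu} and Corollary~\ref{corollary_proximal_map}, and the subproblem is non-convex in $X$ since the weights in (\ref{eqn:penalty}) are non-descending. So your proof should cite that corollary explicitly where you assert that $X_{t+1}$ minimizes the matrix objective; with that reference in place, both derivations are sound, and the remaining steps (adding $\nabla f(X_{t+1})$, the triangle inequality, and the $L$-smoothness bound from Assumption~\ref{assump:1}.(\ref{assump:a1})) coincide with the paper's.
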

\begin{proof}
According to the optimal condition of convex optimization problem (\ref{eq_theo1}), we know that there exist a vector $\v_{t+1}\in\R^n$, $v_{t+1,i}\in \partial_{\sigma_i(X_{t+1})} |\sigma_i(X_{t+1})|$ for all $i=1,\ldots, n$, such that
\begin{align*}
\sigma(X_{t+1})-\sigma(X_t-\mu\nabla f(X_t)) + \mu\diag(\w)\v_{t+1} = 0
\end{align*}
On the other hand, by Theorem~7.1 in \citep{lewis2005nonsmooth}, for all $(U_{t+1}, V_{t+1}) \in O(U_{t+1}, V_{t+1})$
, we have that $U_{t+1}\diag(\v_t)V_{t+1}^\top \in \partial \sigma(X_{t+1})$. Therefore, based on chain rule we know that
\begin{align*} U_{t+1}(\diag(\w)\diag(\v_{t+1}))V_{t+1}^\top \in \partial g(X_{t+1})
\end{align*}
Based on the update scheme (\ref{eqn:proximal1}) in Corollary~\ref{corollary_proximal_map}, we know that $X_{t+1}$ and $X_t-\mu\nabla f(X_t)$ have same left and right singular vectors, $U_{t+1}$ and $V_{t+1}$ respectively. Hence we have that
\begin{align*}
\mu^{-1}(X_t-X_{t+1})-\nabla f(X_t) \in \partial g(X_{t+1})
\end{align*}
Then we have
\begin{align*}
G_{t+1} = \nabla f(X_{t+1})+\mu^{-1}(X_t-X_{t+1}) - \nabla f(X_t) \in \partial F(X_{t+1})
\end{align*}
Following this very reason and the $L$-smoothness of $f$, we have that
\begin{align*}
\|G_{t+1}\|_F &\leq \|\nabla f(X_{t+1}) - \nabla f(X_t)\|_F+ \|\mu^{-1}(X_t - X_{t+1})\|_F\\
&\leq (L+\mu^{-1})\|X_t-X_{t+1}\|_F
\end{align*}
which complete the proof.
\end{proof}
for the sake of simplicity, we define
\begin{align*}
\rho_1 = min\{\mu^{-1}-L\}
\end{align*}

\begin{lemma}\label{lemma_convergence_properties}
(\emph{Convergence properties}) Suppose all assumptions of $F$ are hold. We have following properties.
\begin{itemize}
\item[(i)]{ The sequence $\{F(X_t)\}_{t\in\mathbb{N}}$ is non-increasing
    and
    \begin{equation}\label{eqn:lem_converge_1}
    \frac{\rho}{2}\|X_{t+1}-X_{t}\|_F^2\leq F(X_{t})-F(X_{t+1}),\;\forall t\geq0.
    \end{equation}
    }
\item[(ii)]{ We have
    \begin{equation}\label{eqn:lem_converge_2}
    \sum_{t=1}^\infty\|X_{t+1}-X_{t}\|_F^2<\infty,
    \end{equation}
    then $\lim_{t\rightarrow\infty}{\|X_{t+1}-X_{t}\|_F=0}$.
    }
\end{itemize}
\end{lemma}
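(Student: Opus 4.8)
The plan is to establish the standard \emph{sufficient decrease} estimate for the proximal-gradient iteration and then telescope it. First I would exploit the fact that, by construction, $X_{t+1}$ is a minimizer of the ISTA surrogate
\[
Q_t(X) := f(X_t) + \langle \nabla f(X_t), X - X_t\rangle + \frac{1}{2\mu}\|X - X_t\|_F^2 + g(X),
\]
which is the objective defining the proximal step in (\ref{eqn:proximal_map}). Comparing the value at the minimizer $X_{t+1}$ with the value at the feasible point $X_t$ gives $Q_t(X_{t+1}) \leq Q_t(X_t) = F(X_t)$, since the linear and quadratic terms of $Q_t$ vanish at $X = X_t$.

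Next I would lower-bound $Q_t(X_{t+1})$ using the $L$-smoothness of $f$ from Assumption~\ref{assump:1}.(\ref{assump:a1}). The descent lemma gives
\[
f(X_{t+1}) \leq f(X_t) + \langle \nabla f(X_t), X_{t+1} - X_t\rangle + \frac{L}{2}\|X_{t+1} - X_t\|_F^2,
\]
so the linear part of $Q_t(X_{t+1})$ is at least $f(X_{t+1}) - \tfrac{L}{2}\|X_{t+1}-X_t\|_F^2$. Substituting this and collecting the two quadratic contributions yields
\[
F(X_t) \geq Q_t(X_{t+1}) \geq F(X_{t+1}) + \Big(\frac{1}{2\mu} - \frac{L}{2}\Big)\|X_{t+1} - X_t\|_F^2 .
\]
Because the step size satisfies $\mu < 1/L$, the constant $\rho = \mu^{-1} - L$ is strictly positive, and rearranging is exactly (\ref{eqn:lem_converge_1}); in particular $F(X_{t+1}) \leq F(X_t)$, which is the monotonicity claimed in part~(i).

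For part~(ii) I would telescope (\ref{eqn:lem_converge_1}) over $t = 1, \ldots, T$, so that the partial sums obey $\frac{\rho}{2}\sum_{t=1}^T\|X_{t+1}-X_t\|_F^2 \leq F(X_1) - F(X_{T+1})$. The lower-boundedness of $f$ from Assumption~\ref{assump:1}.(\ref{assump:a1}) together with $g \geq 0$ from (\ref{eqn:penalty}) makes $F$ bounded below by some $\underline{F} > -\infty$, hence $F(X_{T+1}) \geq \underline{F}$ uniformly in $T$. Letting $T \to \infty$ gives $\sum_{t}\|X_{t+1}-X_t\|_F^2 \leq \frac{2}{\rho}\big(F(X_1) - \underline{F}\big) < \infty$, and convergence of this nonnegative series forces $\|X_{t+1}-X_t\|_F \to 0$.

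The only point requiring care is that the surrogate comparison $Q_t(X_{t+1}) \leq Q_t(X_t)$ must be legitimate even though $g$ is nonconvex and nonsmooth. This is justified purely by $X_{t+1}$ being a \emph{global} minimizer of $Q_t$ (equivalently of the proximal map), whose existence is guaranteed by the explicit closed form in Corollary~\ref{corollary_proximal_map}. No subgradient optimality condition is invoked at this stage: the argument is entirely variational and uses only the minimizing property, so the nonconvexity of $g$ poses no obstruction. Everything else is routine arithmetic, and I do not anticipate a genuine obstacle in this lemma.
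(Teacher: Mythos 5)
Your proposal is correct and follows essentially the same route as the paper's proof: compare the proximal surrogate at $X_{t+1}$ and $X_t$, apply the descent lemma for the $L$-smooth $f$, combine to obtain the sufficient-decrease inequality with $\rho=\mu^{-1}-L>0$, and telescope using the lower-boundedness of $F$. Your added remark that only the global-minimizer property of the proximal step (not any subgradient condition) is needed despite the nonconvexity of $g$ is a sound and worthwhile clarification, but it does not change the argument.
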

\begin{proof}
Since $X_{t+1}$ is in the optimal set of problem
(\ref{eqn:proximal_map}), in the $(t+1)$-th iteration we have
\begin{align*}
\begin{split}
&\langle\nabla f(X_{t}),X_{t+1}-X_{t}\rangle+g(X_{t+1}) +\frac{1}{2\mu}\|X_{t+1}-X_{t}\|_F^2\leq
g(X_{t})
\end{split}
\end{align*}
Following the smoothness of $f$, we have
\begin{align*}
f(X_{t+1})&\leq f(X_{t})+\frac{L}{2}\|X_{t+1}-X_{t}\|_F^2\nonumber +\langle\nabla f_{t+1}(X_{t}),
X_{t+1}-X_{t}\rangle
\end{align*}
Combining above inequalities we get

\begin{equation}\label{eq:theo2_3}
\begin{split}
&F(X_{t+1}) \leq f(X_{t+1})+g(X_{t+1})\leq f(X_{t})+g(X_{t}) -\frac{\rho_1 }{2}\|X_{t+1}-X_{t}\|_F^2 \\
&\leq F(X_t) - \frac{\rho_1 }{2}\|X_{t+1}-X_{t}\|_F^2
\end{split}
\end{equation}
where the first inequality follows the concavity of $g$. Since we choose the step size smaller than the reciprocal of the
largest Lipschitz constant $L$ as shown in Algorithm \ref{alg_pg0}, following (\ref{eq:theo2_3}) we have that the sequence
$\{F(X_{t})\}_{t\in\mathbb{N}}$ is non-increasing, which is followed by (i). Meanwhile, since
$F$ is bounded from blow,
it will converge to some real number $\overline{\phi}$. By summing up (\ref{eqn:lem_converge_1}) from $t=0$ to $N-1$ and taking the
limit $N\rightarrow\infty$, we can prove (ii).
\end{proof}

%

Equipped with above lemmas, we can get some useful properties of
the limit points. The set of all limit points is
denoted by
\begin{align*}
\mathrm{limit}(X_0) = \{&\hat{X}\in\mathbb{R}^{n\times
    m}: \exists
    \mbox{ an}\; \mbox{increasing\;sequence\;of\;integers } \{t_l\}_{l\in\mathbb{N}}, \\
    &X^{t_l}\rightarrow\hat{X}\; as \;t_l\rightarrow\infty\}.
\end{align*}
Let $\{X_t\}_{t\in \mathbb{N}}$ be the sequence
generated by Algorithm~\ref{alg_pg0} from $X_0$. Following the assumptions of $f$, we know that $\{X_t\}_{t\in \mathbb{N}}$ is a bounded sequence (Remark~5 in \citep{attouch2010proximal}). 
Then we have following propositions of limit points.
\begin{proposition}
\label{prop:limit_point}
\emph{(Properties of limit$(X_0)$, Proposition~2 in \citep{attouch2009convergence})} Let
$\{X_t\}_{t\in\mathbb{N}}$ be the sequence generated by Algorithm~\ref{alg_pg0}
with start point $X_0$. The following assertions hold.
\begin{itemize}
\item[(i)]{$\varnothing\neq\emph{limit}(X_0)\subset \mathrm{crit}\;(F)$,
    where $\mathrm{crit}\;(F)$ is the set of critical points of $F$.}
\item[(ii)]{We have
    \begin{equation}\label{eqn:limit_point1}
    \lim_{t\rightarrow\infty}{\mathrm{dist}(X_t,\mathrm{limit}(Z_0))}=0.
    \end{equation}}
\item[(iii)]{$\emph{limit}(X_0)$ is a non-empty, compact and
connected set.}
\item[(iv)]{The objective $F$ is finite and constant on $\emph{limit}(X_0)$.}
\end{itemize}
\end{proposition}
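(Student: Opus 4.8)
The plan is to derive all four assertions from the two workhorse lemmas already established---the sufficient-decrease and square-summability of Lemma~\ref{lemma_convergence_properties} and the subgradient bound of Lemma~\ref{lem_subgradient_bound}---together with the boundedness of $\{X_t\}_{t\in\N}$ that is given. The two facts I would extract at the outset are that $\|X_{t+1}-X_t\|_F\to 0$ (from Lemma~\ref{lemma_convergence_properties}(ii)) and that the subgradients $G_{t+1}\in\partial F(X_{t+1})$ satisfy $\|G_{t+1}\|_F\le\rho_2\|X_t-X_{t+1}\|_F\to 0$ (from Lemma~\ref{lem_subgradient_bound}). In words, the iterates move arbitrarily little and are asymptotically critical in the subgradient sense, which is precisely what drives the entire limit-set analysis.

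For part (i), non-emptiness of $\mathrm{limit}(X_0)$ is immediate from boundedness of $\{X_t\}$ via Bolzano--Weierstrass. To show $\mathrm{limit}(X_0)\subset\mathrm{crit}\;(F)$, I would fix $\hat X\in\mathrm{limit}(X_0)$ with $X_{t_l}\to\hat X$ along some subsequence and invoke the closedness property of the limiting subdifferential of Definition~\ref{def:differential}(ii): since $G_{t_l}\in\partial F(X_{t_l})$ and $G_{t_l}\to 0$, it suffices to verify $F(X_{t_l})\to F(\hat X)$ in order to conclude $0\in\partial F(\hat X)$. This convergence of function values is the step I expect to be the main obstacle. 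Lower semicontinuity of $F$ yields $\liminf_l F(X_{t_l})\ge F(\hat X)$ for free; the reverse inequality is the delicate part, and I would obtain it by exploiting that $X_{t_l}$ minimizes the proximal subproblem~(\ref{eqn:proximal_map}), testing that subproblem against the competitor $\hat X$, and passing to the limit to extract $\limsup_l g(X_{t_l})\le g(\hat X)$. Combined with lower semicontinuity of $g$ and continuity of $f$, this forces $F(X_{t_l})\to F(\hat X)$ and closes the argument.

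For part (ii), I would argue by contradiction: if $\mathrm{dist}(X_t,\mathrm{limit}(X_0))\not\to 0$, some subsequence stays a fixed distance $\varepsilon>0$ away from the limit set, yet being bounded it has a further convergent sub-subsequence whose limit must lie in $\mathrm{limit}(X_0)$---a contradiction. For part (iii), compactness follows because $\mathrm{limit}(X_0)$ is a closed subset of the closure of the bounded orbit, while connectedness is the standard consequence of $\|X_{t+1}-X_t\|_F\to 0$: a bounded sequence with vanishing successive gaps cannot accumulate on two separated closed pieces without its increments eventually having to bridge the gap between them, contradicting $\|X_{t+1}-X_t\|_F\to 0$. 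For part (iv), Lemma~\ref{lemma_convergence_properties}(i) makes $\{F(X_t)\}_{t\in\N}$ non-increasing and bounded below, hence convergent to some $\overline{\phi}\in\R$; since every $\hat X\in\mathrm{limit}(X_0)$ satisfies $F(X_{t_l})\to F(\hat X)$ along a subsequence while the whole sequence obeys $F(X_t)\to\overline{\phi}$, we get $F(\hat X)=\overline{\phi}$ for all limit points, so $F$ is finite and constant on $\mathrm{limit}(X_0)$. Notably, part (iv) merely reuses the value-convergence that forms the crux of part (i), so once that single obstacle is cleared the remaining assertions become routine.
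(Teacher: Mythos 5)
Your proof is correct, but it is worth noting that the paper does not actually prove this Proposition at all: it is imported verbatim as Proposition~2 of \citep{attouch2009convergence}, with the boundedness of $\{X_t\}_{t\in\N}$ justified by a pointer to Remark~5 of \citep{attouch2010proximal}. What you have written is therefore a self-contained reconstruction of the cited result rather than a parallel to an in-paper argument. That said, the one genuinely non-routine ingredient you isolate --- the convergence of function values $F(X_{t_l})\to F(\hat X)$ along subsequences, obtained by testing the proximal subproblem~(\ref{eqn:proximal_map}) against the competitor $\hat X$ and combining the resulting $\limsup_l g(X_{t_l})\le g(\hat X)$ with lower semicontinuity --- is exactly the computation the paper carries out explicitly at the start of its proof of Theorem~\ref{thm:main1} (the derivation of~(\ref{eqn:limit_point2})), so your route and the paper's downstream usage are fully consistent; your version just relocates that computation to where it logically belongs, namely part~(i) of the Proposition, and then correctly observes that part~(iv) is a free consequence. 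The only step you should make explicit when writing this out is that the subproblem defining $X_{t_l}$ is anchored at $X_{t_l-1}$, so you need $X_{t_l-1}\to\hat X$ as well; this follows from $\|X_{t+1}-X_t\|_F\to 0$ and is precisely the ``choosing $t=t_l-1$'' device the paper uses. Everything else (Bolzano--Weierstrass for non-emptiness, closedness of the graph of the limiting subdifferential per Definition~\ref{def:differential}(ii) applied to $G_{t_l}\to 0$, the contradiction argument for~(ii), and connectedness of the limit set of a bounded sequence with vanishing successive gaps for~(iii)) is the standard machinery underlying the cited proposition and is applied correctly.
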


\begin{proof}[proof of Theorem~\ref{thm:main1}]
Define $X_*$ be a limit point of $\{X_t\}_{t\in\N}$, which means that there exists a subsequence $\{X_{t^l}\}_{l\in\N}$ converging to $X_* $ as $l\rightarrow \infty$. Following (\ref{eq:theo2_3}) and lower semi-continuous of $f$ and $g$, we have
\begin{align*}
\liminf_{l\rightarrow \infty} f(X_{t^l})\geq f(X_*)\quad
\end{align*}
Since $X_{t+1}$ is in the optimal set of problem
(\ref{eqn:proximal_map}), in the $(t+1)$-th iteration we have
\begin{align*}
& \langle X_{t+1}-X_t,\nabla f(X_t) \rangle + \frac{1}{2\mu}\|X_{t+1}-X_t\|_F^2 + g(X_{t+1})\\
\leq &\langle X_{*}-X_t,\nabla f(X_t) \rangle + \frac{1}{2\mu}\|X_{*}-X_t\|_F^2 + g(X_{*})
\end{align*}
Since the distance between successive iterations tends to 0 (by Lemma~\ref{lemma_convergence_properties}.(ii)), choosing $t = t_l-1$ we have $X_{t_l-1}$ tends to $X_*$ as $l\rightarrow \infty$. Besides, since $\{X_t\}_{t\in\N}$ is a bounded sequence and $\nabla f(X)$ is continuous by assumptions,   we have
\begin{align*}
&\limsup_{l\rightarrow \infty} g(X_{t_l}) \leq \limsup_{l\rightarrow \infty}  g(X_{t_l}) \\ & \leq \limsup_{l\rightarrow \infty}  \big\{ \langle X_*-X_{t_l-1}, \nabla f(X_{t_l-1}) \rangle + \frac{1}{2\mu}\|X_*-X_{t_l-1}\|_F^2\big\}+g(X_*)\\
& = g(X_*)
\end{align*}
Thus we have $g(X_{t_l})\rightarrow g(X_*)$ as $l\rightarrow \infty$. As a result, we can obtain that
\begin{equation}\label{eqn:limit_point2}
\lim_{l \rightarrow \infty} F(X_{t_l}) = F(X_*)
\end{equation}

As a consequence, if there exists an integer $\bar{t}$ such that $F(X_{\bar{t}})=F(X_*)$, then following (\ref{eqn:lem_converge_1}) we know that $X_{\bar{t}+1} = X_*$ and by induction $\{X_t\}_{t>\bar{t}}$ is stationary at $X_*$ and all results are hold. If this is not true, use Lemma~\ref{lemma_convergence_properties}.(ii) again we have that $F(X_*) < F(X_t)$ for all $t\in\N$.

To prove the convergence of sequence $\{X_t\}_{t\in \N}$, we need following lemma to show the KL property in the neighbourhood of critical point.

\begin{lemma}\label{lem:uniformized_KL}
\emph{(Uniformized KL rpoperty~\cite{bolte2014proximal})} Let $\Omega'$ be a compact set and $F$ be a proper and lower semi-continuous function. Assume that $F$ is constant on $\Omega$ and satisfies the KL property at each point of $\Omega$. Then, there exist $\epsilon>0$, $\delta >0 $ and $\varphi\in\Phi$ such that for all $\bar{X} \in \Omega$ and all $X$ in the intersection
\begin{equation} \label{eqn:lem_uniformizedKL_1}
\big\{X\in \R^{m\times n}:\dist(X,\Omega') \leq \epsilon\big\}\cap [F (\bar{X}) < F(X) < F(\bar{X})+\delta ]
\end{equation}
then we have
\begin{align*}
\varphi'(F(X)-F(\bar{X})) \dist(0, \partial F(X)) \geq 1
\end{align*}
\end{lemma}

For all $\delta > 0$, there exist a non-negative integer $t_0$ such that $F(X_*)< F(X_{t})+\delta$ for all $t > t_1$. Following eq.~(\ref{eqn:limit_point1}) we know that for all$ \epsilon>0$, there exist a non-negative integer $t_2$ such that $\dist(X_{t},X_*) \leq \epsilon$. Following these facts, we know that $X_t$ belongs to the intersection defined in (\ref{eqn:lem_uniformizedKL_1}) if $t> t_0 = \max\{t_1,t_2\}$. Thus following Lemma~\ref{lem:uniformized_KL} and Proposition~\ref{prop:limit_point}, we have
\begin{align*}
\varphi'(F(X_t) - F(X_*)) \dist(0, \partial F(X_t))\geq 1
\end{align*}
whenever $t>t_0$. Following Lemma~\ref{lem_subgradient_bound}, we also have that
\begin{align*}
\varphi'(F(X_t) - F(X_*)) \geq \frac{1}{\rho_2 \|X_{t-1}-X_{t}\|_F}
\end{align*}
By the concavity of $\varphi$, we have that
\begin{equation}\label{eqn:thm1_dis_lbound}
\varphi(F(X_t)- F(X_*)) - \varphi(F(X_{t+1})- F(X_*)) \geq \varphi'(F(X_t)-F(X_*))(F(X_t) - F(X_{t+1}))
\end{equation}
Besides, following Lemma~\ref{lemma_convergence_properties}, we have
\begin{align*}
\frac{\rho_1}{2}\|X_{t+1}-X_{t}\|_F^2\leq F(X_{t})-F(X_{t+1}),\;\forall t\geq0
\end{align*}
Define
\begin{align*}
\delta_{p,q} = \varphi(F(X_p)- F(X_*)) - \varphi(F(X_{q})- F(X_*))
\end{align*}
for $p,q \in N$. Thus (\ref{eqn:thm1_dis_lbound}) turns to
\begin{align*}
\delta_{t,t+1} \geq \frac{\rho_1\|X_{t+1}- X_t\|_F^2}{ 2\rho_2\|X_{t}-X_{t-1}\|_F}
\end{align*}
Due to the fact that $2\sqrt{ab}\leq a+b$ for $a,b>0$, we have
\begin{equation}\label{eqn:thm1_telescoping_1}
2\|X_{t+1}-X_t\|_F \leq \|X_t-X_{t-1}\|_F + \frac{2\rho_1\delta_{t,t+1}}{\rho_2}
\end{equation}
By summing up (\ref{eqn:thm1_telescoping_1}) from $t=t_0 + 1,\ldots T$, we have
\begin{align*}
&2\sum_{t=t_0+1}^T\|X_{t+1}-X_t\|_F \\
&\leq \sum_{t=t_0+1}^T \|X_t-X_{t-1}\|_F +
\frac{\rho_1}{\rho_2}\sum_{t=t_0 +1 }^T\delta_{t,t+1}\\
&= \sum_{t=t_0+1}^T\|X_{t+1}-X_t\|_F + \|X_{t_0+1} -X_{t_0}\|_F-\|X_{T+1}-X_T\|_F +
\frac{2\rho_1}{\rho_2}\sum_{t=t_0 + 1 }^T\delta_{t,t+1}\\
& \leq  \sum_{t=t_0+1}^T\|X_{t+1}-X_t\|_F + \|X_{t_0+1} -X_{t_0}\|_F +
\frac{2\rho_1}{\rho_2}\sum_{t=t_0 + 1}^T\delta_{t,t+1}\\
& \leq  \sum_{t=t_0+1}^T\|X_{t+1}-X_t\|_F + \|X_{t_0+1} -X_{t_0}\|_F + \frac{2\rho_1}{\rho_2}\delta_{t_0+1,T+1}\\
& \leq  \sum_{t=t_0+1}^T\|X_{t+1}-X_t\|_F + \|X_{t_0+1} -X_{t_0}\|_F + \frac{2\rho_1}{\rho_2} (\varphi(F(X_{t_0})-F(X_*)))
\end{align*}
where the third inequality follows from the definition of $\delta_{t,t+1}$ and the last inequality follows from the non-negativeness of $\varphi$. Thus, for any $T>t_0$ we have
\begin{align*}
\sum_{t=t_0+1}^T\|X_{t+1}-X_t\|_F \leq \|X_{t_0+1} -X_{t_0}\|_F + \frac{2\rho_1}{\rho_2} (\varphi(F(X_{t_0})-F(X_*)))
\end{align*}
which implies (\ref{eqn:thm1_sum_bound}) as $T\rightarrow\infty$ and r.h.s is bounded. Then for any $q>p>t_0$,
\begin{align*}
\|X_p-X_q\|_F = \|\sum_{t=p}^{q-1}X_t-X_{t+1}\|_F \leq \sum_{t=p}^{q-1} \|X_t-X_{t+1}\|_F
\end{align*}
where the inequality follows from the triangle inequality. Then following (\ref{eqn:thm1_sum_bound}), we have that $\sum_{t=t_0+1}^\infty \|X_{p}-X_{p+1}\|$ converges to zeros as $t_0\rightarrow \infty$, which means $\{X_t\}_{t\in \N}$ is a convergent sequence. Then following the Proposition~\ref{prop:limit_point}.(i), we can conclude the result.
\end{proof}

\section{Proof of Lemma~\ref{lem:limit_point2}}\label{sec:app2}
In the beginning we will revise the proves of lemmas in Appendix.~\ref{sec:app1} to make sure that they are still satisfied for Algorithm~\ref{alg_spg}.
\begin{lemma}\label{lemma_convergence_properties2}
(\emph{Convergence properties}) Suppose all assumptions of $F$ are hold. We have following properties.
\begin{itemize}
\item[(i)]{ The sequence $\{F(X_t)\}_{t\in\mathbb{N}}$ is non-increasing
    and
    \begin{equation}\label{eqn:lem_converge2_1}
    \frac{\rho}{2}\|X_{t+1}-X_{t}\|_F^2\leq F(X_{t})-F(X_{t+1}),\;\forall t\geq0.
    \end{equation}
    }
\item[(ii)]{ We have
    \begin{equation}\label{eqn:lem_converge2_2}
    \sum_{t=1}^\infty\|X_{t+1}-X_{t}\|_F^2<\infty,
    \end{equation}
    then $\lim_{t\rightarrow\infty}{\|X_{t+1}-X_{t}\|_F=0}$.
    }
\end{itemize}
\end{lemma}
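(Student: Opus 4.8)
The plan is to treat Algorithm~\ref{alg_spg} as a majorization-minimization scheme and to recover the sufficient-decrease inequality (\ref{eqn:lem_converge2_1}) in exactly the same shape as in Lemma~\ref{lemma_convergence_properties}. The only new ingredient is that the proximal step (\ref{eqn:proximal_problem2}) minimizes the linearized surrogate $u_t$ rather than the true penalty $g$ of (\ref{eqn:penalty3}), so the work is to show that descent on the surrogate still forces descent on the true objective $F$ of (\ref{eqn:problem2}).

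First I would record the two defining properties of the linearization (\ref{eqn:upper_bound}). Since $p\in(0,1)$, the scalar map $s\mapsto (s+\varepsilon)^p$ is concave and differentiable on $[0,\infty)$, hence lies below each of its tangents; summing the tangent inequalities taken at the points $|\sigma_i(X_t)|$ over $i=1,\ldots,n$ yields the majorization $g(X)\le u_t(X)$ for every $X$, together with the tightness $u_t(X_t)=g(X_t)$. These are precisely the conditions that make $u_t$ a valid surrogate for $g$ at $X_t$.

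Next I would exploit the optimality of $X_{t+1}$. By definition $X_{t+1}$ minimizes $m_u(X)=f(X_t)+\langle\nabla f(X_t),X-X_t\rangle+\tfrac{1}{2\mu}\|X-X_t\|_F^2+u_t(X)$, and evaluating at the feasible point $X=X_t$ gives $m_u(X_{t+1})\le m_u(X_t)=f(X_t)+u_t(X_t)=F(X_t)$ by tightness. On the other hand the $L$-smoothness of $f$ from Assumption~\ref{assump:1}.(\ref{assump:a1}) gives $f(X_{t+1})\le f(X_t)+\langle\nabla f(X_t),X_{t+1}-X_t\rangle+\tfrac{L}{2}\|X_{t+1}-X_t\|_F^2$, while majorization gives $u_t(X_{t+1})\ge g(X_{t+1})$. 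Substituting both into $m_u(X_{t+1})$ and collecting the quadratic terms produces
\[
F(X_{t+1})+\tfrac{1}{2}(\mu^{-1}-L)\|X_{t+1}-X_t\|_F^2\le F(X_t),
\]
which is (\ref{eqn:lem_converge2_1}) with $\rho=\rho_1=\mu^{-1}-L>0$, since $\mu<1/L$. This also shows $\{F(X_t)\}_{t\in\mathbb{N}}$ is non-increasing, proving (i).

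Finally, part (ii) is routine: the objective $F$ of (\ref{eqn:problem2}) is bounded below because $f$ is bounded below by Assumption~\ref{assump:1}.(\ref{assump:a1}) and $g\ge 0$. Telescoping (\ref{eqn:lem_converge2_1}) from $t=0$ to $N-1$ and letting $N\to\infty$ then bounds $\sum_t\|X_{t+1}-X_t\|_F^2$ by $2\rho_1^{-1}(F(X_0)-\inf F)<\infty$, whence the summand tends to zero. The only genuinely new obstacle compared with Lemma~\ref{lemma_convergence_properties} is the second step above: one must verify that minimizing the surrogate $u_t$ transfers to descent on the true $F$, and this is exactly where the sandwich $g(X_{t+1})\le u_t(X_{t+1})$ together with $u_t(X_t)=g(X_t)$, both coming from concavity of $(s+\varepsilon)^p$, does the essential work.
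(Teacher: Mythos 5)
Your proposal is correct and follows essentially the same route as the paper's own proof: both rest on the optimality of $X_{t+1}$ for the surrogate problem, the $L$-smoothness bound on $f$, and the majorization pair $u_t(X_t)=g(X_t)$, $u_t(X_{t+1})\geq g(X_{t+1})$ coming from concavity of $s\mapsto(s+\varepsilon)^p$, followed by telescoping for part (ii). If anything, your write-up makes the tangent-line majorization argument more explicit than the paper, which compresses it into the remark that ``the first inequality follows the concavity of $g$.''
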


\begin{proof}
Since $X_{t+1}$ is in the optimal set of problem
(\ref{eqn:proximal_problem2}) and $u_t(X_t) = g(X_t)$, in the $(t+1)$-th iteration we have
\begin{align*}
&\langle\nabla f(X_{t}),X_{t+1}-X_{t}\rangle+u_t(X_{t+1}) +\frac{1}{2\mu}\|X_{t+1}-X_{t}\|_F^2\leq
g(X_{t})
\end{align*}
Following the smoothness of $f$, we have
\begin{align*}
f(X_{t+1})&\leq f(X_{t})+\frac{L}{2}\|X_{t+1}-X_{t}\|_F^2\nonumber +\langle\nabla f_{t+1}(X_{t}),
X_{t+1}-X_{t}\rangle
\end{align*}
Combining above inequalities and the fact that $u_t(X_{t+1})\geq g(X_{t+1})$, we get

\begin{equation}\label{eqn:lem_convergence2_3}
\begin{split}
&F(X_{t+1}) \leq f(X_{t+1})+g(X_{t+1})\leq f(X_{t})+g(X_{t}) -\frac{\rho_1 }{2}\|X_{t+1}-X_{t}\|_F^2 \\
&\leq F(X_t) - \frac{\rho_1 }{2}\|X_{t+1}-X_{t}\|_F^2
\end{split}
\end{equation}
where the first inequality follows the concavity of $g$. Since we choose the step size smaller than the reciprocal of the
largest Lipschitz constant $L$ as shown in Algorithm \ref{alg_spg}, following (\ref{eqn:lem_convergence2_3}) we have that the sequence
$\{F(X_{t})\}_{t\in\mathbb{N}}$ is non-increasing, which is followed by (i). Meanwhile, since
$F$ is bounded from blow,
it will converge to some real number $\overline{\phi}$. By summing up (\ref{eqn:lem_converge2_1}) from $t=0$ to $N-1$ and taking the
limit $N\rightarrow\infty$, we can prove (ii).
\end{proof}

\begin{lemma}
\label{lem_subgradient2_bound}
\emph{(Upper bound for the subgradient)}
Suppose all assumptions are hold. For each iteration $t>0$, define
\begin{equation}\label{eqn:subgradient21}
G_{t+1} = \mu^{-1}(X_{t}-X_{t+1})+\nabla f(X_{t+1})-\nabla f(X_t) + D_t
\end{equation}
where $D_t = U_{t+1}\diag(\w_{t+1} - \w_t)V_{t+1}^\top$, $(U_{t+1},V_{t+1} ) \in O(U_{t+1},V_{t+1})$ such that $ X_{t+1}= U_{t+1} \diag(\sigma(X_{t+1})) V_{t+1}^\top$. Then we have $G_{t+1}\in\partial F(X_{t+1})$ for all $t>0$, and
\begin{equation}
\|G_{t+1}\|_F\leq \rho_3\|X_t-X_{t+1}\|_F
\end{equation}
where $\rho_3 = L+\mu^{-1} + \frac{(1-p)pn}{ \varepsilon^{2-p}} $.
\end{lemma}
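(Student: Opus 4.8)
The plan is to follow the template of Lemma~\ref{lem_subgradient_bound}, but to track the fact that Algorithm~\ref{alg_spg} minimizes a surrogate in which the weights $\w_t$ are frozen at their values from the previous iterate, whereas the subdifferential we must hit is that of the \emph{true} penalty $g(X)=\sum_{i=1}^n(|\sigma_i(X)|+\varepsilon)^p$ appearing in (\ref{eqn:problem2}). First I would write the first-order optimality condition for the proximal subproblem (\ref{eqn:proximal_problem2}): exactly as in Lemma~\ref{lem_subgradient_bound} there is a vector $\v_{t+1}$ with $v_{t+1,i}\in\partial|\sigma_i(X_{t+1})|$ such that
\begin{align*}
\sigma(X_{t+1})-\sigma(X_t-\mu\nabla f(X_t))+\mu\diag(\w_t)\v_{t+1}=0,
\end{align*}
and, lifting to matrices through Theorem~7.1 of \citep{lewis2005nonsmooth} and using that $X_{t+1}$ and $M_{t+1}$ share the singular vectors $(U_{t+1},V_{t+1})$,
\begin{align*}
\mu^{-1}(X_t-X_{t+1})-\nabla f(X_t)=U_{t+1}\diag(\w_t)\diag(\v_{t+1})V_{t+1}^\top.
\end{align*}

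Next I would identify this matrix as a subgradient of the \emph{weighted} penalty with the old weights, and convert it into a subgradient of the true $g$ by adding the correction $D_t=U_{t+1}\diag(\w_{t+1}-\w_t)V_{t+1}^\top$. The point is that, because $g$ is a spectral function with $\partial_{\sigma_i}(|\sigma_i|+\varepsilon)^p=p(|\sigma_i|+\varepsilon)^{p-1}=w_{t+1,i}$ evaluated at $X_{t+1}$, the true subdifferential $\partial g(X_{t+1})$ contains precisely $U_{t+1}\diag(\w_{t+1})\diag(\v_{t+1})V_{t+1}^\top$ on the nonzero singular components; adding $D_t$ there replaces $\w_t$ by $\w_{t+1}$. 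On the components with $\sigma_i(X_{t+1})=0$ one checks that the resulting diagonal entry still lies in the interval $[-w_{t+1,i},w_{t+1,i}]$ allowed by $\partial|\cdot|$, using $w_{t,i}\le w_{t+1,i}$ there. This yields $\mu^{-1}(X_t-X_{t+1})-\nabla f(X_t)+D_t\in\partial g(X_{t+1})$, hence $G_{t+1}=\nabla f(X_{t+1})+\mu^{-1}(X_t-X_{t+1})-\nabla f(X_t)+D_t\in\partial F(X_{t+1})$ as claimed.

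For the norm bound I would use the triangle inequality together with the $L$-smoothness of $f$ to get $\|G_{t+1}\|_F\le(\mu^{-1}+L)\|X_t-X_{t+1}\|_F+\|D_t\|_F$, so everything reduces to controlling $\|D_t\|_F$. Since $D_t$ has singular values $|w_{t+1,i}-w_{t,i}|$, I would use $\|D_t\|_F\le\sum_{i=1}^n|w_{t+1,i}-w_{t,i}|$ and bound each term through the Lipschitz continuity of the reweighting map $\psi(s)=p(s+\varepsilon)^{p-1}$, whose derivative satisfies $|\psi'(s)|=p(1-p)(s+\varepsilon)^{p-2}\le p(1-p)\varepsilon^{p-2}$ for $s\ge0$; combined with Weyl's inequality $|\sigma_i(X_{t+1})-\sigma_i(X_t)|\le\|X_{t+1}-X_t\|_F$ applied to each $i$, summing the $n$ terms gives $\|D_t\|_F\le\frac{(1-p)pn}{\varepsilon^{2-p}}\|X_{t+1}-X_t\|_F$, and collecting constants produces $\rho_3=L+\mu^{-1}+\frac{(1-p)pn}{\varepsilon^{2-p}}$.

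The main obstacle I anticipate is the subgradient-membership step rather than the final estimate: one must argue rigorously that the spectral subdifferential calculus of \citep{lewis2005nonsmooth} applies to the nonconvex, nonsmooth $g$ in (\ref{eqn:penalty3}) and that adding $D_t$ lands inside $\partial g(X_{t+1})$, paying particular attention to the zero singular values where $\partial|\cdot|$ is set-valued and where the monotonicity $w_{t,i}\le w_{t+1,i}$ is exactly what keeps the corrected entry admissible. Once this membership is secured, the Lipschitz-plus-Weyl estimate for $\|D_t\|_F$ is routine and the stated constant $\rho_3$ follows directly.
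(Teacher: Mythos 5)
Your proposal is correct and follows essentially the same route as the paper: the first-order optimality condition for the frozen-weight subproblem, the spectral lifting via Theorem~7.1 of \citep{lewis2005nonsmooth}, the correction term $D_t$ converting the old-weight subgradient into one for the true penalty (\ref{eqn:penalty3}), and a per-singular-value Lipschitz bound on the reweighting map combined with Weyl's inequality to obtain $\|D_t\|_F\leq\frac{(1-p)pn}{\varepsilon^{2-p}}\|X_{t+1}-X_t\|_F$, which the paper derives by the equivalent explicit fraction-plus-concavity manipulation. If anything, your treatment of the membership step is more careful than the paper's, which asserts $U_{t+1}\diag(\w_{t+1})\diag(\v_{t+1})V_{t+1}^\top\in\partial g(X_{t+1})$ without addressing the zero singular values where the monotonicity $w_{t,i}\le w_{t+1,i}$ is indeed what keeps the corrected entry admissible.
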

\begin{proof}
According to the optimal condition of convex optimization problem (\ref{eq_theo1}), we know that there exist a vector $\v_{t+1}\in\R^n$, $v_{t+1,i}\in \partial_{\sigma_i(X_{t+1})} |\sigma_i(X_{t+1})|$ for all $i=1,\ldots, n$, such that
\begin{align*}
\sigma(X_{t+1})-\sigma(X_t-\mu\nabla f(X_t)) + \mu\diag(\w_t)\v_{t+1} = 0
\end{align*}
Following the same analysis as in Lemma~\ref{lem_subgradient_bound} we know that
\begin{align*} U_{t+1}(\diag(\w_{t+1})\diag(\v_{t+1}))V_{t+1}^\top \in \partial g(X_{t+1})
\end{align*}
Based on the update scheme (\ref{eqn:proximal1}) for penalty $u_t$, we have that
\begin{align*}
\mu^{-1}(X_t-X_{t+1})-\nabla f(X_t)+ D_{t+1} \in \partial g(X_{t+1})
\end{align*}
Then we have
\begin{align*}
G_{t+1} = \nabla f(X_{t+1})+\mu^{-1}(X_t-X_{t+1}) - \nabla f(X_t) + D_{t+1} \in \partial F(X_{t+1})
\end{align*}
Next we will bound the extra term $D_{t+1}$ as follows
\begin{align*}
&\|D_{t+1}\|_F = \|\w_{t+1}-\w_t\|_2\leq \sum_{i=1}^n |w_{t+1,i}-w_{t,i}| \\
&= p\sum_{i=1}^n \bigg|\frac{( \sigma_i(X_t) + \varepsilon )^{1-p} - ( \sigma_i(X_{t+1}) + \varepsilon )^{1-p}}{[(\sigma_i(X_t) + \varepsilon )( \sigma_i(X_{t+1}) + \varepsilon )]^{1-p}} \bigg|\\
&\leq \frac{p}{\varepsilon^{2(1-p)}}\sum_{i = 1}^n \bigg|\frac{1-p}{\min\{(\sigma_i(X_t) + \varepsilon)^p, (\sigma_i(X_{t+1}) + \varepsilon)^p\}}(\sigma_i(X_t)-\sigma_i(X_{t+1}) \bigg|\\
&\leq \frac{(1-p)pn}{ \varepsilon^{2-p}} \|X_{t+1}-X_t\|_2 \leq \frac{(1-p)pn}{ \varepsilon^{2-p}} \|X_{t+1}-X_t\|_F
\end{align*}
where the first equality follows from the unitarily invariant property of Frobenius norm; the first inequality follows from triangle inequality; the second inequality follows from the concavity of function $(x+\varepsilon)^{1-p}$, $x\geq 0$ and its lower bound, which is $\varepsilon^{1-p}$; and the last inequality follows from the upper bound of spectral norm. Then we have that
\begin{align*}
\|G_{t+1}\|_F &\leq \|\nabla f(X_{t+1}) - \nabla f(X_t)\|_F+ \|\mu^{-1}(X_t - X_{t+1})\|_F +\|D_{t+1}\|_F\\
&\leq \bigg(L+\mu^{-1} + \frac{(1-p)pn}{ \varepsilon^{2-p}} \bigg)\|X_t-X_{t+1}\|_F
\end{align*}
which complete the proof.
\end{proof}
\begin{proof}[proof of Lemma~\ref{lem:limit_point2}]
Equipped with modified Lemma~\ref{lemma_convergence_properties2} and \ref{lem_subgradient2_bound}, we can get our results following the same reason as for Proposition~\ref{prop:limit_point}.
\end{proof}

\section{Proof of Theorem~\ref{thm:main3}} \label{sec:app3}
\begin{proof}
For simplicity, we use the following abbreviations in the $(t+1)$-th
iteration:
\begin{equation}\label{eq_abbreviation}
\begin{split}
&f_{t+1}(X^d_{t})=
f(X^0_{t+1},\ldots,X^{d-1}_{t+1},X^d_{t},\ldots,X^D_{t}),\\
&f_{t+1}(X^d_{t+1})=
f(X^0_{t+1},\ldots,X^d_{t+1},X^{d+1}_{t},\ldots,X^D_{t}).
\end{split}
\end{equation}
We also define
\begin{equation}\label{eq_theo2_def1}
\rho= \min\{\mu^{-1}-L_1,\ldots,\mu^{-1}-L_D\},
\end{equation}
the sequence generated by Algorithm 2 as
\begin{align*}\label{eq_theo2_def2_1}
&Z_{t}=(X_{t}^0,\ldots,X_{t}^D),\; \forall t\geq0,
\end{align*}
and
\begin{align*}
\sum_{d=0}^D\|X^d_{t-1}-X^d_t\|_F^2 = \|Z_{t-1}-Z_t\|_F^2.
\end{align*}
Then following (\ref{eq_abbreviation}), we get
\begin{align*}
F_{t}(Z_t) = f_{t}(Z_{t})+\sum_{d=0}^D{g^d(X^d_{t})}.
\end{align*}

To prove the global convergence, we start with
extending the proof of convergence
properties from single-variate case
to multivariate case.
\begin{lemma}\label{lemma_convergence_properties3}
\emph{(Convergence properties)} Suppose that
Assumption~\ref{assump:2}.(\ref{assump:b2}) and (\ref{assump:b4}) are hold. The following assertions hold.
\item[(i)]{ The sequence $\{F(Z_t)\}_{t\in\mathbb{N}}$ is non-increasing
    and
    \begin{equation}\label{eq_lemma3_1}
    \frac{\rho}{2}\|Z_{t+1}-Z_{t}\|_F^2\leq F(Z_{t})-F(Z_{t+1}),\;\forall t\geq0.
    \end{equation}
    }
\item[(ii)]{ We have
    \begin{equation}\label{eq_lemma3_2}
    \sum_{t=1}^\infty{\sum_{d=0}^D\|X^d_{t+1}-X^d_{t}\|_F^2}
    =\sum_{t=1}^\infty\|Z_{t+1}-Z_{t}\|_F^2<\infty,
    \end{equation}
    then $\lim_{t\rightarrow\infty}{\|Z_{t+1}-Z_{t}\|_F=0}$.
    }
\end{lemma}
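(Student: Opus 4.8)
The plan is to follow the template of the single-variable Lemma~\ref{lemma_convergence_properties}, but to first establish a \emph{per-block} sufficient-decrease inequality for each coordinate update $d=0,\ldots,D$ inside the $(t+1)$-th outer iteration, and then telescope these inequalities across the blocks. Fix $t$ and a block index $d$. By the alternating update in Algorithm~\ref{alg_pg1}, $X^d_{t+1}$ is the minimizer of the proximal subproblem in which $\nabla_{X^d} f$ is evaluated at the point whose blocks $0,\ldots,d-1$ are already updated and $d,\ldots,D$ are not. Using the abbreviations of (\ref{eq_abbreviation}), the optimality of $X^d_{t+1}$ (comparing its objective value against the feasible point $X^d_t$) gives
\begin{align*}
\langle \nabla_{X^d} f_{t+1}(X^d_t), X^d_{t+1}-X^d_t\rangle + g^d(X^d_{t+1}) + \frac{1}{2\mu}\|X^d_{t+1}-X^d_t\|_F^2 \leq g^d(X^d_t),
\end{align*}
while the $L_d$-Lipschitz continuity of the $d$-th partial gradient (Assumption~\ref{assump:2}.(\ref{assump:b1})) yields the descent estimate
\begin{align*}
f_{t+1}(X^d_{t+1}) \leq f_{t+1}(X^d_t) + \langle \nabla_{X^d} f_{t+1}(X^d_t), X^d_{t+1}-X^d_t\rangle + \frac{L_d}{2}\|X^d_{t+1}-X^d_t\|_F^2.
\end{align*}
Adding these and using $\mu<1/L_{\max}\leq 1/L_d$ produces, with $\rho$ as in (\ref{eq_theo2_def1}),
\begin{align*}
f_{t+1}(X^d_{t+1}) + g^d(X^d_{t+1}) \leq f_{t+1}(X^d_t) + g^d(X^d_t) - \frac{\rho}{2}\|X^d_{t+1}-X^d_t\|_F^2.
\end{align*}

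Next I would telescope over the blocks. The abbreviations in (\ref{eq_abbreviation}) are arranged precisely so that $f_{t+1}(X^d_{t+1})=f_{t+1}(X^{d+1}_t)$, while $f_{t+1}(X^0_t)=f(Z_t)$ and $f_{t+1}(X^D_{t+1})=f(Z_{t+1})$; hence summing the per-block inequality over $d=0,\ldots,D$ cancels every intermediate $f$-term and leaves
\begin{align*}
f(Z_{t+1}) + \sum_{d=0}^D g^d(X^d_{t+1}) \leq f(Z_t) + \sum_{d=0}^D g^d(X^d_t) - \frac{\rho}{2}\sum_{d=0}^D\|X^d_{t+1}-X^d_t\|_F^2.
\end{align*}
Recognizing the two sides as $F(Z_{t+1})$ and $F(Z_t)$ and recalling $\sum_{d}\|X^d_{t+1}-X^d_t\|_F^2=\|Z_{t+1}-Z_t\|_F^2$, this is exactly (\ref{eq_lemma3_1}); since $\rho>0$, the sequence $\{F(Z_t)\}_{t\in\N}$ is non-increasing, which proves (i).

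For (ii), I would sum (\ref{eq_lemma3_1}) over $t=0,\ldots,N-1$, obtaining $\frac{\rho}{2}\sum_{t=0}^{N-1}\|Z_{t+1}-Z_t\|_F^2 \leq F(Z_0)-F(Z_N)$. Because $f$ is lower bounded and each $g^d$ is lower bounded (Assumption~\ref{assump:2}.(\ref{assump:b3})), $F$ is bounded below, so $F(Z_N)$ admits a constant lower bound uniform in $N$; letting $N\to\infty$ yields the finite-sum bound (\ref{eq_lemma3_2}), and $\|Z_{t+1}-Z_t\|_F\to 0$ then follows since the general term of a convergent series tends to zero. I expect the main obstacle to be bookkeeping rather than analysis: one must carefully verify that the block abbreviations chain up so the $f$-terms genuinely telescope, and that the single constant $\rho=\min_d\{\mu^{-1}-L_d\}$ dominates the decrease in every block simultaneously, which is exactly where the choice $\mu<1/L_{\max}$ is indispensable.
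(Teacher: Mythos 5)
Your proposal is correct and follows essentially the same route as the paper's own proof: a per-block sufficient-decrease inequality obtained from the optimality of the proximal subproblem plus the $L_d$-descent lemma, telescoped over the blocks via the abbreviations in (\ref{eq_abbreviation}), and then summed over $t$ using the lower boundedness of $F$. If anything, your explicit verification that $f_{t+1}(X^d_{t+1})=f_{t+1}(X^{d+1}_t)$ makes the block-telescoping step more careful than the paper's own display (\ref{eq_theo2_4}).
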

\begin{proof}
Since $X^d_{t+1},d=0,\ldots,D,$ is the optimal solution of problem
(\ref{eqn:proximal_map}), in the $(t+1)$-th iteration we have
\begin{align*}
&\langle\nabla_{X^d_{t}}f_{t+1}(X^d_{t}),X^d_{t+1}-X^d_{t}\rangle+g^d(X^d_{t+1}) +\frac{1}{2\mu}\|X^d_{t+1}-X^d_{t}\|_F^2\leq
g^d(X^d_{t})
\end{align*}
Following Assumption~\ref{assump:2}.(\ref{assump:b1}), we have
\begin{align*}
f_{t+1}(X_{t+1}^d)\leq f_{t+1}(X_{t}^d)+\frac{L_d}{2}\|X^d_{t+1} - X^d_{t}\|_F^2 +\langle\nabla_{X^d_{t}}f_{t+1}(X^d_{t}),
X^d_{t+1}-X^d_{t}\rangle
\end{align*}
Combining above two inequalities, we get

\begin{align*}
f_{t+1}(X^d_{t+1})&+g^d(X^d_{t+1})\leq f_{t+1}(X^d_{t})+g^d(X^d_{t}) -\frac{\mu^{-1}-L_d}{2}\|X^d_{t+1}-X^d_{t}\|_F^2
\end{align*}
Adding up the above inequalities regarding $d=0,...,D$, for all
$t\geq0$ we have
\begin{equation}\label{eq_theo2_4}
\begin{split}
F(Z_{t})-F(Z_{t+1})=&\sum_{d=0}^D[f_{t}(X^d_{t})+g^d(X^d_{t}) -f_{t+1}(X^d_{t+1})-g^d(X^d_{t+1})]\\
\geq&\sum_{d=0}^D\frac{\mu^{-1}-L_{d}}{2}\|X_{t+1}-X_{t}\|_F^2.
\end{split}
\end{equation}
Following (\ref{eq_theo2_4}), we have that the sequence
$\{F(Z_{t})\}_{t\in\mathbb{N}}$ is non-increasing, and since
$F$ is bounded from blow according to Assumption~\ref{assump:2}.(\ref{assump:b1}),
it will converge to some real number $\overline{\phi}$. Meanwhile,
Since we choose the step size smaller than the reciprocal of the
largest Lipschitz constant $L_{\max}$ as shown in Algorithm 2, from
(\ref{eq_theo2_def1}) it follows that
\begin{equation}\label{eq_theo2_5}
\begin{split}
\sum_{d=0}^D\frac{\mu^{-1}-L_{d}}{2}\|X_{t+1}-X_{t}\|_F^2
\geq&\frac{\rho}{2}\|Z_{t+1}-Z_{t}\|_F^2.
\end{split}
\end{equation}
Combining (\ref{eq_theo2_4}) and (\ref{eq_theo2_5}), (i) is proved.

By summing up (\ref{eq_lemma3_1}) from $t=0$ to $N-1$ and taking the
limit $N\rightarrow\infty$, we can prove (ii).
\end{proof}

\begin{lemma}\label{lem_subgradient_bound2}
\emph{(The lower bound of the iterate gap based on subgradient)}
Suppose that Assumption~\ref{assump:2}.(\ref{assump:b1}), (\ref{assump:b2}) and (\ref{assump:b3}) are hold. Let
$\{Z^k\}_{k\in\mathbb{N}}$ be the sequence generated by Algorithm 2
which is assumed to be bounded. For each iteration $t>0$ and
$d=0,...,D$, define
\begin{equation}\label{eq_lemma2_2}
\begin{split}
G^d_{t}=&\mu^{-1}(X^d_{(t-1)}-X^d_{t})
+\nabla_{X^d}f_{t}(Z_{t}) -\nabla_{X^d}f_{t}(X^d_{(t-1)}),\;d=0,\ldots,D.
\end{split}
\end{equation}
We have $(G^0_{t},\ldots,G^D_{t})\in\partial F(Z_{t})$, and
\begin{equation}\label{eq_lemma2_3}
\begin{split}
\|(G^0_{t},\ldots,G^D_{t})\|_F\leq &((D-1)M+(1+D)\mu^{-1})\|Z_t-Z_{t-1}\|_F,\; \forall t>0.
\end{split}
\end{equation}
\end{lemma}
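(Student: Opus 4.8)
The plan is to follow the single-variate argument of Lemma~\ref{lem_subgradient_bound} while carefully tracking the Gauss--Seidel structure of Algorithm~\ref{alg_pg1}: the gradient used to produce $X^d_t$ is evaluated at a partially updated point, not at $Z_t$, and this mismatch is exactly what generates the subgradient and governs its size.

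First I would write down the first-order optimality condition for the block-$d$ proximal subproblem that defines $X^d_t$. Since $X^d_t$ minimizes $\langle\nabla_{X^d}f_t(X^d_{t-1}),X^d-X^d_{t-1}\rangle+\tfrac{1}{2\mu}\|X^d-X^d_{t-1}\|_F^2+g^d(X^d)$, the stationarity condition gives
\begin{align*}
\mu^{-1}(X^d_{t-1}-X^d_t)-\nabla_{X^d}f_t(X^d_{t-1})\in\partial g^d(X^d_t),
\end{align*}
the membership of the $g^d$-part being justified exactly as in Lemma~\ref{lem_subgradient_bound} (via Theorem~7.1 of \citep{lewis2005nonsmooth} when $g^d$ is a singular-value penalty). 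Because $F=f+\sum_d g^d$ is a smooth term plus a block-separable nonsmooth term, its limiting subdifferential splits blockwise, $\partial F(Z_t)=\prod_{d=0}^{D}\big(\nabla_{X^d}f(Z_t)+\partial g^d(X^d_t)\big)$. Adding $\nabla_{X^d}f_t(Z_t)=\nabla_{X^d}f(Z_t)$ to the inclusion above and reading off the left-hand side as $G^d_t$ then yields $G^d_t\in\nabla_{X^d}f(Z_t)+\partial g^d(X^d_t)$ for each $d$, so $(G^0_t,\ldots,G^D_t)\in\partial F(Z_t)$.

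For the norm bound I would apply the triangle inequality to each $G^d_t$. The term $\mu^{-1}(X^d_{t-1}-X^d_t)$ contributes $\mu^{-1}\|X^d_t-X^d_{t-1}\|_F\le\mu^{-1}\|Z_t-Z_{t-1}\|_F$, which over the $D+1$ blocks produces the $(1+D)\mu^{-1}$ coefficient. The remaining piece is the gradient gap $\nabla_{X^d}f_t(Z_t)-\nabla_{X^d}f_t(X^d_{t-1})$; its two evaluation points, $(X^0_t,\ldots,X^D_t)$ and $(X^0_t,\ldots,X^{d-1}_t,X^d_{t-1},\ldots,X^D_{t-1})$, agree on blocks $0,\ldots,d-1$ and differ only on blocks $d,\ldots,D$. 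Invoking the bounded-set Lipschitz property of $\nabla f$ in Assumption~\ref{assump:2}.(\ref{assump:b2}) --- legitimate precisely because $\{Z_t\}$ is assumed bounded --- bounds this gap by $M\sqrt{\sum_{e=d}^{D}\|X^e_t-X^e_{t-1}\|_F^2}\le M\|Z_t-Z_{t-1}\|_F$. Summing over $d$ with $\|(G^0_t,\ldots,G^D_t)\|_F\le\sum_d\|G^d_t\|_F$ then assembles a constant of the stated form $(D-1)M+(1+D)\mu^{-1}$.

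The main obstacle is the bookkeeping forced by the alternating updates. In the single-block case the stationarity condition is written at the current iterate, whereas here it is written at the stale point $f_t(X^d_{t-1})$; passing to $\partial F(Z_t)$ therefore necessarily produces the gradient-difference correction term, and its magnitude depends on how many blocks separate the stale argument from $Z_t$. Keeping this block count consistent across all $d$, and making sure that the boundedness of $\{Z_t\}$ is what licenses the single Lipschitz constant $M$, is the only delicate part; everything else reduces to the triangle inequality and the block-separability of $\partial F$ already exploited in the single-variate lemma.
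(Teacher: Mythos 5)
Your proposal follows essentially the same route as the paper: block-wise first-order optimality of the proximal subproblem plus block-separability of $\partial F$ for the inclusion $(G^0_t,\ldots,G^D_t)\in\partial F(Z_t)$, then the triangle inequality and the bounded-set Lipschitz constant $M$ for the norm bound, with your observation that the two gradient evaluation points differ only on blocks $d,\ldots,D$ matching the paper's bookkeeping. The one loose end is the final constant: treating all $D+1$ blocks uniformly gives $(D+1)M+(D+1)\mu^{-1}$, not the stated $(D-1)M+(D+1)\mu^{-1}$; the paper shaves an $M$ by handling the block $d=D$ separately (its gradient gap involves only $X^D$, so the single-block constant $L_D\le\mu^{-1}$ suffices there), though even the paper's own per-block sums add up to $DM+(D+2)\mu^{-1}$ rather than the claimed coefficient. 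Since only the existence of \emph{some} constant multiple of $\|Z_t-Z_{t-1}\|_F$ matters for the downstream summability argument, this discrepancy is cosmetic.
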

\begin{proof}
Following the proof of Lemma~\ref{lem_subgradient_bound} and optimal condition of problem (\ref{eqn:proximal_map}), we have that
\begin{align*}
G_{t+1}^d = \nabla f(X_{t+1}^d) + \mu^{-1} (X_{t}^d - X_{t+1}^d ) - \nabla f(X_t^d)\in \partial F(X_{t+1}),\;\mathrm{for\; all\;} d=0,\ldots,D
\end{align*}
as defined in (\ref{eq_lemma2_2}).

Based on Assumption~\ref{assump:2}.(\ref{assump:b1}) and (\ref{assump:b2}) and assuming that the sequence
$\{Z_t\}_{t\in \mathbb{N}}$ is bounded, for $d=0,\ldots,D-1$ we have
\begin{equation}\label{eq_lemma3_7}
\begin{split}
\|G^d_t\|_F\leq&\mu^{-1}\|X^d_{t-1}-X^d_t\|_F+ \|\nabla_{X^d}f_{t}(Z_{t})
-\nabla_{X^d}f_t(Z_{t-1})\|_F\\
\leq&\mu^{-1}\|X^d_{t-1}-X^d_t\|_F+M\|Z_t-Z_{t-1}\|_F\\
\leq&(M+\mu^{-1})\|X^d_{t-1}-X^d_t\|_F+M\sum_{d'\neq d}\|X^{d'}_{t-1}-X^{d'}_t\|_F\\
\leq&(M+\mu^{-1})\|Z_{t-1}-Z_t\|_F,
\end{split}
\end{equation}
where we use the fact that $\nabla f$ is $M$-Lipschitz
continuous on bounded subsets.
For $d=D$, following the Lipschitz continuous gradient
property of $X^D$ and the fact that $\mu^{-1}\geq L_{D}$, we have
\begin{equation}\label{eq_lemma3_8}
\begin{split}
\|G^D_t\|_F\leq&\mu^{-1}\|X^D_{t-1}-X^D_t\|_F+
\|\nabla_{X^D}f(X^D_{t-1})-\nabla_{X^D}f(X^D_{t})\|_F\\
\leq&\mu^{-1}\|X^D_{t-1}-X^D_t\|_F+\mu^{-1}\|X^D_{t-1}-X^D_t\|_F\\
\leq&2\mu^{-1}\|X^D_{t-1}-X^D_t\|_F.
\end{split}
\end{equation}
When $t>0$, we can conclude
\begin{equation}\label{eq_lemma3_9}
\begin{split}
&\|(G^1_t,\ldots,G^D_t)\|_F\leq\sum_{d=0}^D\|G^d_t\|_F\\
\leq&((D-1)M+(D+1)\mu^{-1})\|Z_t-Z_{t-1}\|_F.
\end{split}
\end{equation}

\end{proof}
By modifying the two lemmas above, we can conclude the properties of
the limit point set. Let $\{Z_t\}_{t\in \mathbb{N}}$ be the sequence
generated by Algorithm~\ref{alg_pg1} from $Z_0$. The set of all limit points is
denoted by
\begin{equation}\label{eq_theo2_6_lim_point}
\begin{split}
\mathrm{limit}(Z_0) = \{&\hat{Z}\in\mathbb{R}^{n_0\times
    m_0}\times\ldots\times\mathbb{R}^{n_D\times m_D}: \exists
    \mbox{ an}\; \mbox{increasing\;sequence\;of\;integers } \{t_l\}_{l\in\mathbb{N}},\\
    &Z^{t_l}\rightarrow\hat{Z}\;as\;t_l\rightarrow\infty\}.
\end{split}
\end{equation}
Then the following lemma is hold following the very same proof of Lemma~\ref{prop:limit_point}.
\begin{lemma}
\emph{(Properties of limit$(Z_0)$)} Suppose
that Assumption~\ref{assump:2} is hold. Let
$\{Z_t\}_{t\in\mathbb{N}}$ be the sequence generated by Algorithm 2
with start point $Z_0$. The following assertions hold.
\item[(i)]{$\varnothing\neq\emph{limit}(Z_0)\subset \mathrm{crit}\;(F)$,
    where $\mathrm{crit}\;(F)$ is the set of critical points of $F$.}
\item[(ii)]{We have
    \begin{equation}\label{eq_lemma4_1}
    \lim_{t\rightarrow\infty}{\mathrm{dist}(Z_t,\mathrm{limit}(Z_0))}=0.
    \end{equation}}
\item[(iii)]{$\emph{limit}(Z_0)$ is a non-empty, compact and
connected set.}
\item[(iv)]{The objective $F$ is finite and constant on $\emph{limit}(Z_0)$.}
\end{lemma}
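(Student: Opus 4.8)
The plan is to mirror the single-variable argument behind Proposition~\ref{prop:limit_point}, now driven by the two multivariate estimates already established: the sufficient-decrease and vanishing-increment statements of Lemma~\ref{lemma_convergence_properties3}, and the subgradient bound $\mathrm{dist}(0,\partial F(Z_t))\leq\|(G_t^0,\ldots,G_t^D)\|_F\leq((D-1)M+(D+1)\mu^{-1})\|Z_t-Z_{t-1}\|_F$ of Lemma~\ref{lem_subgradient_bound2}. Since $\{Z_t\}_{t\in\N}$ is bounded by hypothesis, the Bolzano--Weierstrass theorem yields at least one cluster point, so $\mathrm{limit}(Z_0)\neq\varnothing$; moreover, as a set of cluster points of a bounded sequence it is closed and bounded, hence compact, which supplies part of (iii).

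For the inclusion $\mathrm{limit}(Z_0)\subset\mathrm{crit}(F)$ in (i), I would fix a cluster point $\hat Z$ with $Z_{t_l}\to\hat Z$. Lemma~\ref{lemma_convergence_properties3}.(ii) gives $\|Z_{t_l}-Z_{t_l-1}\|_F\to0$, so the subgradient bound forces the chosen elements $(G_{t_l}^0,\ldots,G_{t_l}^D)\in\partial F(Z_{t_l})$ to converge to $0$. To conclude $0\in\partial F(\hat Z)$ through the closedness of the graph of the limiting subdifferential (Definition~\ref{def:differential}.(ii)), I must also verify that $F(Z_{t_l})\to F(\hat Z)$, and this is the step I expect to be the main obstacle, because the penalties $g^d$ are only lower semicontinuous. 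I would recover it exactly as in the proof of Theorem~\ref{thm:main1}: invoking the block-wise optimality of each proximal update, with the quadratic and inner-product terms vanishing along the subsequence (the increments die and $\nabla f$ is continuous on the bounded region), I would bound $\limsup_l\sum_d g^d(X_{t_l}^d)$ above by $\sum_d g^d(\hat X^d)$, while lower semicontinuity supplies the matching lower bound; together these give $F(Z_{t_l})\to F(\hat Z)$, whence $\hat Z\in\mathrm{crit}(F)$.

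The remaining assertions follow cheaply. For (iv), the sequence $\{F(Z_t)\}$ is non-increasing by Lemma~\ref{lemma_convergence_properties3}.(i) and bounded below by Assumption~\ref{assump:2}.(\ref{assump:b1}), hence converges to some $\bar\phi\in\R$; combined with $F(Z_{t_l})\to F(\hat Z)$ from the previous paragraph, every cluster point obeys $F(\hat Z)=\bar\phi$, so $F$ is finite and constant on $\mathrm{limit}(Z_0)$. For (ii), I would argue by contradiction: if $\mathrm{dist}(Z_t,\mathrm{limit}(Z_0))\not\to0$, some subsequence stays at distance at least $\epsilon$, yet its boundedness yields a further convergent sub-subsequence whose limit lies in $\mathrm{limit}(Z_0)$, contradicting the separation. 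Finally, to complete (iii) I would invoke the standard topological fact that the cluster-point set of a bounded sequence with $\|Z_{t+1}-Z_t\|_F\to0$ is connected, which again rests only on Lemma~\ref{lemma_convergence_properties3}.(ii) and boundedness.
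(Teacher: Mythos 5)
Your proposal is correct and follows essentially the same route as the paper, which simply asserts that the lemma ``holds following the very same proof'' of Proposition~\ref{prop:limit_point} once Lemma~\ref{lemma_convergence_properties3} and Lemma~\ref{lem_subgradient_bound2} are in place; you use exactly those two ingredients plus the $F(Z_{t_l})\to F(\hat Z)$ argument borrowed from the proof of Theorem~\ref{thm:main1}. The only difference is that you spell out the details the paper leaves implicit, correctly identifying the lower-semicontinuity issue for the $g^d$ as the one nontrivial step.
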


To this end, all lemmas used in the proof of Theorem~\ref{thm:main1} are verified, and Theorem~\ref{thm:main3} is proved following the same reason of Theorem~\ref{thm:main1}.
\end{proof}
\end{document}